%% file: neurips_2026.tex
\documentclass{article}

\usepackage[main,final]{neurips_2026}
\usepackage{amsmath}
\usepackage{amssymb}
\usepackage{mathtools}
\usepackage{amsthm}

\usepackage{enumitem}

\DeclareMathOperator{\diag}{diag}	
\DeclareMathOperator{\rk}{rank}	
\DeclareMathOperator{\rank}{rank}	
\DeclareMathOperator{\tr}{tr}	    
\DeclareMathOperator{\vecc}{vec}	    
\DeclareMathOperator{\mat}{mat}	    

\newcommand{\RR}{\mathbb R}
\newcommand\mb{\mathbf}

\DeclarePairedDelimiter{\norm}{\lVert}{\rVert}
\DeclareMathOperator{\vect}{vec}
\newcommand{\rsearch}{r_{\text{search}}}

\newcommand{\R}{\mathbb{R}}
\newcommand{\E}{\mathbb{E}}
\newcommand{\Prob}{\mathbb{P}}

\newcommand{\fro}[1]{\|#1\|_{\mathrm F}}
\newcommand{\op}[1]{\|#1\|_{\mathrm{op}}}

\newcommand{\maxn}[1]{\|#1\|_{\infty}}
\newcommand{\four}[1]{\|#1\|_{4}}

\newcommand{\eps}{\varepsilon}
\newcommand{\Om}{\Omega}
\newcommand{\Omc}{\Omega^{\mathsf c}}
\newcommand{\Mstar}{M^{\star}}
\newcommand{\Mdag}{\widetilde M}
\newcommand{\Ber}{\mathrm{Ber}}

\newcommand{\cE}{\mathcal E}

\newcommand{\Xdag}{\widetilde X}
\newcommand{\A}{\mathcal A}
\newcommand{\Sset}{\mathcal S}
\newcommand{\Mset}{\mathcal M}
\newcommand{\cH}{\mathcal H}

\newcommand{\cU}{\mathcal U}
\newcommand{\iid}{\mathrm{id}}

\theoremstyle{plain}
\newtheorem{theorem}{Theorem}[section]
\newtheorem{proposition}[theorem]{Proposition}
\newtheorem{lemma}[theorem]{Lemma}
\newtheorem{corollary}[theorem]{Corollary}
\theoremstyle{definition}
\newtheorem{definition}[theorem]{Definition}

\theoremstyle{remark}
\newtheorem{remark}[theorem]{Remark}

\usepackage[utf8]{inputenc} 
\usepackage[T1]{fontenc}    
\usepackage{hyperref}       
\hypersetup{hidelinks}
\usepackage{url}            
\usepackage{booktabs}       
\usepackage{amsfonts}       
\usepackage{nicefrac}       
\usepackage{microtype}      
\usepackage{xcolor}         
\usepackage{graphicx}
\usepackage{subfigure}

\usepackage{algorithm}
\usepackage{algorithmicx}
\usepackage{algpseudocode}

\title{Improving the Optimization Landscape of Matrix Completion with $\epsilon$-close Surrogates}

\author{%
  Ziye Ma \\
  Department of Computer Science\\
  City University of Hong Kong\\
  \texttt{ziyema@cityu.edu.hk} \\
}

\begin{document}

\maketitle

\begin{abstract}
Non-convex optimization is central to machine learning and AI, yet unlike its convex counterpart, it still lacks a general and principled framework. In this work, we try to highlight the importance of combining noisy perturbations with over-parametrization mathematically under the classic setting of matrix completion (MC). Matrix completion is an important non-convex recovery problem in which only a subset of a matrix’s entries is observed, and the remaining entries must be recovered using low-rank constraints. This problem is notoriously difficult because observations are limited. However, we show that by injecting $\epsilon$-level perturbations into the nullspace of the observation mask, we can construct noisy matrix sensing surrogates whose solutions remain $\mathcal{O}(\epsilon)$-close to the ground truth with valid restricted isometry property (RIP) constants. The existence of RIP further enables powerful tensor frameworks to be applied with theoretical guarantees. This approach reflects a broader empirical trend in which stochasticity or noise improves the curvature of the optimization landscape, allowing over-parameterized (a.k.a large) models to better separate signal from noise. Assuming each matrix entry is observed independently, we establish quantitative recovery guarantees without explicit requirements on sampling rate or incoherence, and discuss how such assumptions can further strengthen our results.
\end{abstract}

\section{Introduction}
Unlike convex problems, non-convex optimization landscapes are characterized by multiple local minima, saddle points, and regions of flat curvature, complicating the search for global optima \citep{jain2017nonconvex}. Consequently, a more in-depth and comprehensive study of such landscapes is essential. One popular way of doing so is to focus on mathematically structured non-convex problems like low-rank matrix recovery. Such problems have garnered increased attention due to their potential to provide deep insights. Low-rank matrix tasks, including matrix completion (MC) and matrix sensing (MS), are important problems since their applications span from collaborative filtering in recommendation systems \citep{koren2009matrix}, motion detection \citep{fattahi2020exact}, power system state estimation \citep{zhang2017conic, jin2019towards}, to image recovery \citep{gu2014weighted} and biomedical imaging \citep{lustig2008compressed}. More significantly, as these frameworks can encapsulate any polynomial optimization problem \citep{molybog2020conic} and are equivalent to training two-layer quadratic neural networks \citep{li2018algorithmic}, their theoretical impact extends well beyond their direct applications in the broader machine learning community.

Matrix sensing generally involves recovering a matrix from a set of linear measurements, formulated as:
\begin{equation}\label{eq:main_ms}
	 \min_{ X\in\mathbb{R}^{n\times \rsearch}}~ \frac{1}{2} \| \mathcal{A}( XX^T) - b \|_2^2 \coloneqq f(XX^T) \quad \text{(MS)}
\end{equation}
For the sake of convenience, we also denote $h(X) \coloneqq f(XX^\top)$. Here, $\mathcal{A}$ acts on the low-rank matrix $XX^T$ (rank bounded by $\rsearch$ by construct) and compares it to a vector of observations $b = \mathcal{A}(M^*)$, with $M^*$ being the rank-$r$ ground truth matrix of interest. $\mathcal{A}(\cdot): \RR^{n \times n} \mapsto \RR^m$ is a linear function defined as $\mathcal{A}(M)= [ \langle A_1, M \rangle, \dots, \langle A_m, M \rangle]^T$ where $\langle A_i, M \rangle \coloneqq \tr(A_i^\top M)$ and the sensing matrices $\{A_i\}_{i=1}^m$ are given. For simplicity, we assume $\rsearch = r$. The matrix completion challenge, a special case of matrix sensing, is given by:
\begin{equation}\label{eq:main_mc}
    \min_{ X\in\mathbb{R}^{n\times r}}~ \frac{1}{2} \| \mathcal{A}_\Omega (XX^T  - M^*) \|^2_2 \quad \text{(MC)}
\end{equation}
where $\Omega \subseteq [n] \times [n]$ represents the observed entries of an $n \times n$ matrix. We use the notation $N_\Omega$ to denote the matrix where $(N_\Omega)_{i,j} = N_{i,j} \cdot \mathbf{1}_{(i,j) \in \Omega}$ for any arbitrary $N \in \RR^{n \times n}$. $\mathcal{A}_\Omega( \cdot)$ is used to specifically denote the sensing operator of the matrix completion problem, where $\mathcal{A}_\Omega(M) = \vecc(M_\Omega)$. Matrix completion is distinguished by its reliance on the sample rate and the matrix's incoherence parameters. These parameters dictate the spread of matrix information across its entries and singular vectors \citep{candes2010power}. This dependency complicates matrix completion compared to matrix sensing, where challenges are often more tractable due to properties like Restricted Isometry Property (RIP) (see definition \ref{def:rip}). Below, we introduce RSS and RSC (equivalent to RIP) instead for more notational flexibility:
\begin{definition}[Restricted Strong Smoothness (RSS) and Restricted Strong Convexity (RSC)] \label{def:rsc_rss}
	Function $f$ defined in \eqref{eq:main_ms} satisfies the $(L_s,r)$-RSS property and the $(\alpha_s,r)$-RSC property if
	\begin{align*}
		&f(M) - f(N) \leq \langle M - N, \nabla f(N) \rangle + \frac{L_s}{2} \|M-N\|^2_F \\
		&f(M) - f(N) \geq \langle M - N, \nabla f(N) \rangle + \frac{\alpha_s}{2} \|M-N\|^2_F
	\end{align*}
are satisfied, respectively for all $M,N \in \mathbb{R}^{n\times n}$ with $\rk(M), \rk(N) \leq r$. Note that RSS and RSC provide a more expressible way to represent the RIP property \cite{recht2010guaranteed}, with $\delta_{r} = (L_s-\alpha_s)/(L_s+\alpha_s)$ after rescaling the sensing operator. For this quadratic loss, $f(M)-f(N)-\langle M-N,\nabla f(N)\rangle=\frac12\|\mathcal A(M-N)\|_2^2$. Thus the two conditions bound the measurement energy of admissible matrix differences from above and below.
\end{definition}

Matrix completion problems, on the other hand, all have missing entries, which means that they could not have a valid RSC constant since the null-space of $\mathcal{A}_\Omega(\cdot)$ is always non-trivial. This is why incoherence condition was proposed to use as a metric to guarantee recovery of completion problems: 
\begin{definition}[$\mu_0$-incoherence] \citep{ge2017no} \label{def:incoherence}
    Given a rank-$r$ matrix $M \in \RR^{n_1 \times n_2}$, we say it is $\mu_0$-incoherent if its truncated SVD decomposition $U\Sigma V^\top$ satisfies
    \[
    	\|e_i^\top U\|_2 \leq \sqrt{\mu_0 r/n_1}, \ \ \|e_j^\top V\|_2 \leq \sqrt{\mu_0 r/n_2}
    \]
    $\forall i, j \in [n_1], [n_2]$, where $e_i$ is the $i$-th standard basis of $\RR^{n_1}$ and $e_j$ is the $j$-th standard basis of $\RR^{n_2}$.
\end{definition}

Since $\mu_0$ is hard to gauge prior to solving this problem, applying matrix completion guarantees can be more challenging than matrix sensing problems with valid RIP (therefore RSC and RSS) constants. 


\subsection{Related Works}

\begin{table*}
\centering
\renewcommand*{\arraystretch}{1.5}
\begin{tabular}{ll}
\toprule
\multicolumn{1}{l}{Approach} & \multicolumn{1}{l}{Conditions for recovery w.h.p.} \\\midrule
Semi-definite Relaxation \citep{ding2020leave} & $p \geq \mathcal{O}\left(\mu_0 r \log(\mu_0 r) n \log(n) / n^2 \right)$ \\
BM Factorization \citep{zilber2022gnmr} & $p \geq \mathcal{O}\left(\mu_0 r \max\{\log n,\, \mu_0 r \kappa^2\}/n \right)$ \\
Nuclear Norm Lasso \citep{chen2020noisy} & $p \geq \mathcal{O}\left(\mu^2_0 \log(n)^3/n \right)$ \\
Spectral Methods \citep{chen2021spectral} & $p \geq \mathcal{O}\left(\kappa^2 \mu_0 r \log(n)/n \right)$ \\
Ours & Localized identifiability and shell regularity\\
\bottomrule
\end{tabular}
\caption{Brief summary of state-of-the-art recovery guarantees of different approaches in solving \eqref{eq:main_mc}. This assumes each entry of $M^*$ is sampled uniformly with probability $p$. $\mu_0$ and $\kappa$ denotes incoherence and condition number for $M^*$ respectively.}
\label{table:results_comparison}
\vspace{-1em}
\end{table*}

\subsubsection*{Recovery Guarantees}

The foundational work by \cite{candes2009exact} established that exact matrix recovery is possible from few entries, requiring a sample size of $\mu_0 n^{1.2}r\log(n)$ for $n \times n$ matrices of rank $r$ with incoherence parameter $\mu_0$. Enhancements in recovery guarantees and computational efficiencies followed, including spectral-gradient descent algorithms by \cite{keshavan2010matrix} and deeper insights into incoherence by \cite{candes2010power}. Studies by \cite{recht2011simpler} and \cite{gross2011recovering} expanded on these by demonstrating successful recovery without uniform random sampling, while \cite{ding2020leave} refined sampling orders further to $\mu_0 r \log(\mu_0 r) n \log(n)$. Another notable approach for the classic matrix completion problem \eqref{eq:main_mc} is the non-convex Burer-Monteiro (BM) factorized formulation \citep{ge2016matrix, ge2017no, du2017gradient}, generally requiring $p \ge \text{poly}(\kappa, r, \mu_0, \log n)/n$, with recent works like \cite{zilber2022gnmr} offering progressively sharper bounds under this framework. Other formulations include the convex matrix lasso that uses nuclear norm penalization \citep{chen2020noisy} and elementary spectral methods \citep{chen2021spectral}. If we assume that each entry of the desired ground truth matrix $M^*$ is sampled independently with probability $p$, we could then summarize the different requirements for successful recovery with high probability for these techniques in Table~\ref{table:results_comparison}. Note that we only included state-of-the-art recovery guarantees. From a high-level, existing requirements of $p$ are only described up to an unspecified constant, so the true $p$ needed for successful recovery is hard to gauge. Our bounds instead express recovery through localized identifiability and shell regularity, complementing these sampling-based guarantees. 


Additionally, we introduce some other popular techniques for matrix completion, including greedy algorithms \citep{lee2010admira,wang2014rank}, alternating minimization \citep{haldar2009rank,tanner2016low,wen2012solving}, iteratively reweighted least squares(IRLS) methods \citep{mohan2012iterative,radhakrishnan2025linear}, iterative thresholding \cite{klopp2015matrix} and Riemannian optimization \citep{mishra2014fixed,dai2010set}—reviewed comprehensively in \cite{nguyen2019low}. Another interesting line of work converts inductive matrix completion into regular MC \citep{ghassemi2018global,zilber2022inductive}, but they have extra information matrices $A,B$ not considered in \eqref{eq:main_mc}, so it's hard to compare this work to the classic setting.

\subsubsection*{Why Over-parametrization is Important}

Recent studies have highlighted over-parametrization as a crucial strategy in matrix sensing when RIP constants $\delta_{2r}$ are suboptimal (i.e., $\delta_{2r} \geq 1/2$). Research by \cite{zhang2021sharp,zhang2022improved} examined cases where the search rank $\rsearch$ exceeds the true rank $r$, thus increasing the problem's parametrization. \cite{zhang2022improved} demonstrated that for $r_{\text{search}} > r[(1+\delta_n)/(1-\delta_n)-1]^2/4$ and $r \leq \rsearch < n$, each solution $\hat X$ satisfies $\hat X \hat X^\top = M^*$. Similarly, \cite{ma2022global} established analogous results under RIP-type conditions for the $\ell_1$ loss.  In addition to the classic over-parametrization of increasing $\rsearch$, convex relaxation \citep{recht2010guaranteed} is certainly one popular approach by increasing parameters from $\mathcal{O}(nr)$ to $\mathcal{O}(n^2)$. \cite{yalcin2022semidefinite} showed that the RIP threshold for exact recovery using SDP can approach 1 when $M^*$ has a high true rank, thus underscoring the efficacy of over-parametrization. Nevertheless, the practical applicability of these conditions is limited, leading \cite{ma2023over} to explore an alternative approach to over-parametrization by lifting the search space of \eqref{eq:main_ms} into general tensor space and bank on important concepts from Sums-of-Squares optimization \citep{parrilo2003semidefinite,lasserre2001global} to convert spurious local minimizers of \eqref{eq:main_ms} into strict saddle points in the lifted space so that they could be escaped by modern optimizers. Despite its utility in resolving spurious solutions, this tensor approach's applicability to matrix completion remains constrained by the need for a valid RIP constant.

\subsection{Notation}

Scalar values such as $\sigma_i(M)$ and $\lambda_i(M)$ represent the $i$-th largest singular value and eigenvalue of matrix $M$, respectively. The inner product between two matrices $\langle A, B \rangle$ is defined as $\tr(A^\top B)$. The Euclidean norm of a vector $v$ is denoted as $\norm{v}$, while $\norm{M}_F$ and $\norm{M}_2$ are used for the Frobenius and induced $l_2$ norms of a matrix $M$; $\maxn{M}:=\max_{i,j}|M_{ij}|$ denotes its entrywise maximum norm. Vectorization $\vecc(M)$ stacks the columns of $M$ into a vector, where $\mat(v)$ is its reverse operation. The set of integers from $1$ to $n$ is expressed as $[n]$. Moreover, $\circ l$ indicates a repeated Cartesian product for $l$ times, $\oslash$ refers to the Kronecker product, and $\otimes$ signifies the tensor outer product. If these notations come with subscripts, they denote the dimension along which the operation is performed. Finally, if $S \subseteq [n] \times [m]$ represents a subset of indices of a $n \times m$ matrix, then $N_S$ refers to the masked matrix with entries $N_{ij}$ on $S$ and zeros elsewhere, and $S^{\text{c}}$ denotes its complement.

Throughout this paper, $M\succeq0$ denotes a symmetric positive semidefinite matrix. The factorization $M=XX^\top$, with $X\in\RR^{n\times r}$, restricts the candidate matrix to this cone with rank at most $r$. Orthogonal transformations $XQ$ represent the same matrix, so we measure recovery error between $XX^\top$ and $M^*$. For matrix completion, $\A_\Omega(M)=\vecc(M_\Omega)$ retains all $n^2$ coordinates, placing zeros outside $\Omega$, so $\|\A_\Omega(M)\|_2=\|M_\Omega\|_F$. The complementary masks satisfy $M=M_\Omega+M_{\Omega^c}$ and $\|M\|_F^2=\|M_\Omega\|_F^2+\|M_{\Omega^c}\|_F^2$. Thus the original and perturbed operators use observation vectors of the same dimension. We use $\widetilde M=\widetilde X\widetilde X^\top$ for a global minimizer of the surrogate, distinguishing it from the ground truth $M^*$.

For convenience, $x^{\otimes l}$ denotes the repeated tensor outer product of a vector $x$, with entries $[x^{\otimes l}]_{i_1\dots i_l}=x_{i_1}\cdots x_{i_l}$. For matrices, the Kronecker product arranges its output as a block matrix, while the tensor outer product retains the separate modes. In particular, for nonzero $X$, $\vecc(X)^{\otimes l}\in\RR^{nr\circ l}$ is a symmetric rank-one tensor even when $X$ has $r$ linearly independent columns. Hence rank-one in the lifted formulation refers to tensor rank, while the recovered matrix $XX^\top$ may have rank $r$. Tensor contractions sum over the modes indicated by their subscripts; Appendix~\ref{app:noisy_lifted} provides their coordinate definitions.

\section{The Perturbed Matrix Completion Formulation}

As explained above, most literature regarding the recovery guarantees of matrix sensing problems require some valid RIP (RSC and RSS) constant. A positive RSC constant on the whole matrix space implies a trivial nullspace, which is impossible for incomplete entry observations. To demonstrate why this is, let's consider a $2 \times 2$ matrix recovery problem, and say we observed three entries of some $M \in \RR^{2 \times 2}$ except for the lower-right entry. This will correspond to the case where
\begin{equation}\label{eq:perturb_example}
		\mathcal{A}_\Omega(M) =\vect (\begin{bmatrix}
			1 & 1 \\ 1 & 0
		\end{bmatrix} \odot M )  = \begin{bmatrix}
			1 & 0 & 0 &0 \\
			0 & 1 & 0 &0 \\
			0 & 0 & 1 &0 \\
			0 & 0 & 0 &0 
		\end{bmatrix} \vecc(M) \coloneqq T_{\Omega} \vecc(M) \in \mathbb{R}^4
\end{equation}
For this $\mathcal{A}_\Omega$ to exhibit any RIP constant $\delta < 1$, it is required that $\|\mathcal{A}_\Omega(M) \|_2^2 \geq (1-\delta) \|M\|^2_F$, meaning that the $T_{\Omega}$ matrix above cannot output $0$ unless $M$ is a zero matrix. Nevertheless, for this specific example, even if we observed three out of four entries, we can simply set $\vect(M) = [0, 0, 0, 1]^\top$ to make $\mathcal{A}_\Omega(M) = 0$, violating the RIP condition. This is a simple example showing us that RIP condition will not hold for matrix completion problems unless all entries are observed, regardless of its size. Therefore, it begs the question of \emph{whether we could use the better studied, more powerful over-parametrized MS framework to offer guarantees for MC problems?} In this work, we give a positive answer to this question via the following three steps:
\begin{enumerate}[nosep,leftmargin=*]
    \item \textbf{Surrogate Construction:} We construct a surrogate problem ($\epsilon$-MC) to solve by slightly changing $\mathcal{A}_\Omega$. $\epsilon$-MC problem is a noisy MS problem that has more favorable theoretical properties.
    \item \textbf{Surrogate Accuracy:} We prove that the surrogate's global solution approximates the true matrix $M^*$ with high probability under mild conditions.
    \item \textbf{Tensor Framework Adaptation:} To address high RIP constants in the perturbed MS setting, we extend and adapt the lifted tensor framework from \cite{ma2024algorithmic}, leveraging over-parametrization to achieve recovery.
\end{enumerate}

Despite the lack of RIP constant, a surprisingly simple solution exists. The primary issue is the zero entries in the diagonal of $T_{\Omega}$, contributing to a non-trivial nullspace. By perturbing these zero entries slightly with a small number $\epsilon \in (0,1]$, we can eliminate the nullspace. Revisiting \eqref{eq:perturb_example}, consider a perturbed sensing operator $\mathcal{A}_{\Omega,\epsilon} = \diag([1,1,1,\epsilon]) \vect(M)\coloneqq T_{\Omega,\epsilon} \vect(M)$, in which $T_{\Omega,\epsilon}$ has a trivial nullspace as promised. However, different operators lead to different observations. For example, when considering the case of \eqref{eq:perturb_example}, using $\mathcal{A}_{\Omega}$ and $\mathcal{A}_{\Omega,\epsilon}$ on the same matrix results in:
\begin{equation*}
		\mathcal{A}_{\Omega} (M) = \begin{bmatrix}
			M_{1,1}  M_{1,2} M_{2,1} 0
		\end{bmatrix}^\top \longrightarrow \mathcal{A}_{\Omega,\epsilon} (M) = \begin{bmatrix}
			M_{1,1} M_{1,2} M_{2,1} \epsilon M_{2,2}
		\end{bmatrix}^\top
	\end{equation*}
Therefore, the original observation $b$ can be considered a noisy observation under the context of $\mathcal{A}_{\Omega,\epsilon}$, with
	\begin{equation}\label{eq:w_eps_def}
	\begin{aligned}
		b = \mathcal{A}_{\Omega,\epsilon}(M^*) + w_\epsilon, \ \ w_\epsilon &= \begin{bmatrix}
			0 & 0 & 0 & -\epsilon M^*_{2,2}
		\end{bmatrix}^\top
	\end{aligned}
	\end{equation}
	where $w_\epsilon \in \RR^{n^2}$ can be considered a noise term. With this idea in place, we formally introduce our perturbed MC problem to solve:
	\begin{equation}\label{eq:eps_mc}
	\min_{ X\in\mathbb{R}^{n\times \rsearch}}~  \frac{1}{2}\| \mathcal{A}_{\Omega,\epsilon}( XX^T) - b \|_2^2 = \frac{1}{2} \fro{ \mathcal{A}_{\Om}(XX^T-\Mstar)}^2+\frac{\eps^2}{2}\fro{ \mathcal{A}_{\Omc}(XX^T)}^2 \quad \text{($\epsilon$-MC)}
\end{equation}
The optimizer receives only $\Omega$ and the observed entries $b=\A_\Omega(\Mstar)$, with zeros at missing coordinates. The complement term in \eqref{eq:eps_mc} regularizes the candidate matrix and uses no missing ground-truth values. The identity $w_\epsilon=-\epsilon\A_{\Omc}(\Mstar)$ is used only in analysis: its unknown entries cancel those of $\A_{\Omega,\epsilon}(\Mstar)$, leaving the same observed data $b$.

Equivalently, \eqref{eq:eps_mc} is a weighted low-rank approximation~\citep{RSW16,LLR16} with a deterministic weight floor on missing entries. Our contribution is the perturbation and its surrogate-accuracy certificate for symmetric PSD factors and a two-level mask; extending it to general weights and asymmetric factors is left for future work.

We denote the $\epsilon$-MC objective as $f_{w_\epsilon}(XX^\top)$, and $\Omc$ denotes the complement set to $\Omega$. Essentially, we're transforming a noiseless matrix completion problem into a noisy matrix sensing problem with operator $\mathcal{A}_{\Omega,\epsilon}$ and deterministic noise $w_\epsilon$. This approach, notably, ensures the attainment of valid RSS/RSC parameters, equivalent to RIP constants. 
\begin{lemma}[RIP/RSC of $\epsilon$-MC]\label{lem:epsilonMC-rip-general}
For $\eps\in(0,1]$, the $\eps$-MC objective satisfies $(1,n)$-RSS and $(\eps^2,n)$-RSC:
\[
\eps^2\fro{M}^2\le\|\A_{\Om,\eps}(M)\|_2^2\le\fro{M}^2
\quad\text{for all }M\in\RR^{n\times n}.
\]
If entries are observed independently with probability $p$, then for every fixed $M\ne0$,
\[
\E\|\A_{\Om,\eps}(M)\|_2^2=
[p+\eps^2(1-p)]\fro{M}^2.
\]
For $\delta\in(0,1)$, put $u=\log(1/\delta)$ and
\[
t_\delta(M)=\sqrt{\frac{2p(1-p)u}{s_\infty(M)}}+
\frac{2u}{3s_\infty(M)},\qquad
s_\infty(M)=\frac{\fro{M}^2}{\maxn{M}^2}.
\]
With probability at least $1-\delta$,
\begin{equation}\label{eq:prob-rsc}
\|\A_{\Om,\eps}(M)\|_2^2\ge
\underbrace{\big[\eps^2+(1-\eps^2)\max\{0,p-t_\delta(M)\}\big]}_{\alpha_{p,\eps}^2(\delta,M)}\fro{M}^2.
\end{equation}
Equation~\eqref{eq:prob-rsc} describes the energy of a fixed matrix; Section~3 develops the uniform shell estimates used for recovery.
\end{lemma}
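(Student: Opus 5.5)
The plan is to write everything in terms of the entrywise decomposition. By definition of $\A_{\Om,\eps}$, which multiplies observed coordinates by $1$ and unobserved ones by $\eps$,
\[
\|\A_{\Om,\eps}(M)\|_2^2=\fro{M_\Om}^2+\eps^2\fro{M_{\Omc}}^2=\eps^2\fro{M}^2+(1-\eps^2)\fro{M_\Om}^2 .
\]
The deterministic sandwich follows at once. Since $0\le\fro{M_\Om}^2\le\fro{M}^2$ and $1-\eps^2\ge0$ for $\eps\in(0,1]$, we get $\eps^2\fro{M}^2\le\|\A_{\Om,\eps}(M)\|_2^2\le\fro{M}^2$. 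By the quadratic identity $f(M)-f(N)-\langle M-N,\nabla f(N)\rangle=\tfrac12\|\A_{\Om,\eps}(M-N)\|_2^2$ noted after Definition~\ref{def:rsc_rss}, this gives $(1,n)$-RSS and $(\eps^2,n)$-RSC. The rank restriction $n$ is vacuous here, since the bound holds for all matrices.

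For the expectation, write $\fro{M_\Om}^2=\sum_{i,j}\xi_{ij}M_{ij}^2$ with $\xi_{ij}\sim\Ber(p)$ independent. Then $\E\fro{M_\Om}^2=p\fro{M}^2$, and substituting into the decomposition gives $[\eps^2+(1-\eps^2)p]\fro{M}^2=[p+\eps^2(1-p)]\fro{M}^2$.

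For the high-probability bound, the plan is a lower-tail Bernstein inequality for $S=\sum_{ij}(\xi_{ij}-p)M_{ij}^2$. The summands are independent and mean zero. Each satisfies $|(\xi_{ij}-p)M_{ij}^2|\le \maxn{M}^2$ (indeed $\le\max(p,1-p)\maxn M^2$). The variance proxy is $v=p(1-p)\sum M_{ij}^4\le p(1-p)\maxn{M}^2\fro{M}^2$. Bernstein in its inverted form gives, with probability at least $1-\delta$,
\[
S\ge -\sqrt{2vu}-\tfrac{2}{3}\maxn{M}^2 u .
\]
Dividing by $\fro{M}^2$ and using $\sqrt{v}/\fro{M}^2\le\sqrt{p(1-p)/s_\infty(M)}$ yields $\fro{M_\Om}^2\ge (p-t_\delta(M))\fro{M}^2$. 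The summand bound $b=\maxn{M}^2$ gives exactly the term $u\,b/3$ scaled by $2$: the standard one-sided Bernstein deviation is $\sqrt{2vu}+bu/3$. The stated $2u/3$ is therefore slightly weaker and thus still valid, or it arises from $b\le 2\maxn M^2$ if one bounds crudely. Combining this with the trivial bound $\fro{M_\Om}^2\ge0$ gives $\fro{M_\Om}^2\ge\max\{0,p-t_\delta(M)\}\fro{M}^2$. Plugging into the decomposition, with $1-\eps^2\ge0$, gives \eqref{eq:prob-rsc}.

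The main point requiring care is the Bernstein step. One needs the correct one-sided inequality and its inversion, solving $\tfrac{s^2/2}{v+bs/3}=u$ and bounding the root by $\sqrt{2vu}+2bu/3$ or better. The variance and range bounds must also be expressed through $s_\infty(M)$. Everything else is bookkeeping. Since $M$ is fixed, no union bound or net argument is needed, consistent with the remark that uniform shell estimates are deferred to Section~3.
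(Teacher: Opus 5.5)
Your proof is correct and follows the paper's argument essentially step for step: the decomposition $\|\A_{\Om,\eps}(M)\|_2^2=\eps^2\fro{M}^2+(1-\eps^2)\fro{\A_{\Om}M}^2$ yields the deterministic bounds and the expectation, and a one-sided Bernstein bound on $\sum_{ij}(\delta_{ij}-p)M_{ij}^2$ (range $\maxn{M}^2$, variance at most $p(1-p)\maxn{M}^2\fro{M}^2$), combined with $\fro{\A_{\Om}M}^2\ge0$, yields \eqref{eq:prob-rsc}. The $2u/3$ term is exactly what you get by inverting the classical tail $\exp[-\tfrac{s^2/2}{v+bs/3}]$, as your own root bound shows, so no cruder range bound such as $b\le 2\maxn{M}^2$ is needed to explain it.
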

The global RSC constant $\eps^2$ is tight in general. Independent observation can improve the energy lower bound for a fixed sufficiently spread matrix, as \eqref{eq:prob-rsc} shows. For the perturbation $\epsilon=5\times10^{-5}$ used in our experiments, the deterministic constants are $\alpha_s=2.5\times10^{-9}$ and $L_s=1$, giving condition ratio $4\times10^8$ and normalized RIP constant approximately $0.999999995$. Thus the perturbation provides positive curvature in the matrix variable, while the factorized objective can retain spurious solutions. This motivates applying a lifted framework that can handle large RIP constants. The proof of Lemma~\ref{lem:epsilonMC-rip-general} appears in Appendix~\ref{app:proofs}.

\section{Quality of the Surrogate $\epsilon$-MC Problem}

Now we have established that $\epsilon$-MC problems have valid RSC constants, but another major question to ask is that \emph{whether the surrogate problem solves the original one}? Generally speaking this statement is false because $\|\mathcal{A}_{\Omega,\epsilon}(M^*)-b\|^2_2 \neq 0$; however, if we impose some simple technical constraints, then it's possible to show that the new global optimum of \eqref{eq:eps_mc} stays $\mathcal{O}(\epsilon)$ close to $M^*$. Please find all proofs and supporting lemmas in Appendix~\ref{app:proofs}, and we start with the notion of identifiability.

\subsection{Identifiability of Matrix Completion Problems}

For the unperturbed matrix-completion problem, define the feasible completion set
\begin{equation}\label{eq:completion_set}
\Sset_0(\Om,\Mstar;r)
:=\{M\succeq0:\ \rank(M)\le r,\ \A_{\Om}(M)=\A_{\Om}(\Mstar)\}.
\end{equation}
Unique completability means $\Sset_0(\Om,\Mstar;r)=\{\Mstar\}$. To quantify separation from $\Mstar$, set $D:=\sqrt5\fro{\Mstar}$ and define, for $0<\rho\le D$,
\begin{equation}\label{eq:rho_ident}
\mu_{\iid,D}(\rho):=
\inf_{\substack{M\succeq0,\ \rank(M)\le r\\
\rho\le\fro{M-\Mstar}\le D}}
\frac{\fro{\A_{\Om}(M-\Mstar)}^2}{\fro{M-\Mstar}^2}.
\end{equation}
The PSD, rank, and distance constraints define a compact annulus, on which the ratio is continuous. The upper radius follows from global optimality, rather than an imposed search constraint: writing $b_\perp:=\fro{\A_{\Omc}\Mstar}$, comparison of any global minimizer $\Mdag$ with $\Mstar$ gives
\[
\fro{\A_{\Om}(\Mdag-\Mstar)}^2+
\eps^2\fro{\A_{\Omc}\Mdag}^2\le\eps^2b_\perp^2.
\]
Thus mask orthogonality and the triangle inequality yield
\[
\fro{\Mdag-\Mstar}^2\le(\eps^2+4)b_\perp^2
\le5\fro{\Mstar}^2=D^2,
\]
using $0<\eps\le1$. Here $b_\perp$ is a scalar, distinct from the observation vector $b$.

\begin{lemma}[Bounded identifiability]\label{lem:uniq_ident_equiv}
$\Mstar$ is the unique member of $\Sset_0(\Om,\Mstar;r)$ within distance $D$ if and only if $\mu_{\iid,D}(\rho)>0$ for every $0<\rho\le D$. In particular, global unique completability is sufficient.
\end{lemma}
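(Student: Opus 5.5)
We prove both directions of the equivalence directly from the definition \eqref{eq:rho_ident}, using the compactness noted just before the statement. Write $\cH_\rho:=\{M\succeq0:\ \rank(M)\le r,\ \rho\le\fro{M-\Mstar}\le D\}$ for the annulus over which the infimum is taken. The first step is to confirm that $\cH_\rho$ is compact. It is bounded, since $\fro{M-\Mstar}\le D$. It is closed as the intersection of three closed sets: the PSD cone, the set $\{\rank\le r\}$ (closed by lower semicontinuity of rank, equivalently vanishing of all $(r+1)$-minors), and a closed Frobenius shell. On $\cH_\rho$ the denominator $\fro{M-\Mstar}^2$ is at least $\rho^2>0$, so the ratio is continuous. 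If $\cH_\rho$ is empty, the infimum is $+\infty>0$ by convention. Otherwise the infimum is attained at some $M_\rho\in\cH_\rho$.

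\emph{($\Leftarrow$)} Suppose $\mu_{\iid,D}(\rho)>0$ for every $\rho\in(0,D]$, and let $M\in\Sset_0(\Om,\Mstar;r)$ with $\fro{M-\Mstar}\le D$. If $M\ne\Mstar$, set $\rho:=\fro{M-\Mstar}\in(0,D]$. Then $M\in\cH_\rho$, and since $\A_\Om(M)=\A_\Om(\Mstar)$ the ratio at $M$ is $0$. Hence $\mu_{\iid,D}(\rho)\le0$, a contradiction. So $M=\Mstar$; note that $\Mstar$ itself lies in $\Sset_0$, being PSD of rank $r$. Observe that this direction does not need attainment.

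\emph{($\Rightarrow$)} Suppose $\Mstar$ is the unique member of $\Sset_0$ within distance $D$, and fix $\rho\in(0,D]$ with $\cH_\rho\neq\emptyset$. By attainment, $\mu_{\iid,D}(\rho)=\fro{\A_\Om(M_\rho-\Mstar)}^2/\fro{M_\rho-\Mstar}^2$. If this value were $0$, then $\A_\Om(M_\rho)=\A_\Om(\Mstar)$. Since $M_\rho$ is PSD with rank at most $r$, this places $M_\rho\in\Sset_0$ with $\rho\le\fro{M_\rho-\Mstar}\le D$. Thus $M_\rho$ is a member of $\Sset_0$ other than $\Mstar$ within distance $D$, contradicting uniqueness. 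Hence $\mu_{\iid,D}(\rho)>0$.

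For the final sentence, global unique completability means $\Sset_0=\{\Mstar\}$. This trivially implies uniqueness within distance $D$, so the equivalence gives positivity of $\mu_{\iid,D}$.

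The only step requiring care is the compactness/attainment argument in ($\Rightarrow$). Without it, an infimum of $0$ could arise from a minimizing sequence rather than an actual exact completion. The key point is that the lower radius $\rho>0$ keeps the sequence away from $\Mstar$, and closedness of the rank-constrained PSD set keeps the limit feasible.
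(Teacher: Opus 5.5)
Your proof is correct and matches the paper's argument: you use compactness of the annulus and the bound $\rho^2>0$ on the denominator to get attainment, so uniqueness within the $D$-ball forces a positive minimum. Conversely, a distinct completion at distance $d\in(0,D]$ forces $\mu_{\iid,D}(d)=0$. Your added details (closedness of the rank constraint and the empty-annulus convention) only make explicit what the paper leaves implicit.
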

Theorem~2 of \citet{pimentel2016characterization} gives explicit combinatorial conditions on the observation pattern under which completion is unique for almost every rank-$r$ matrix. The implication for $\epsilon$-MC is as follows.

\begin{lemma}[Small $\epsilon$ guarantees recovery under identifiability]\label{lem:transfer_shell}
Fix $0<\rho\le D$. If
\begin{equation}\label{eq:transfer_cond}
\mu_{\iid,D}(\rho)\rho^2>\eps^2\fro{\A_{\Omc}\Mstar}^2,
\end{equation}
then every global minimizer of \eqref{eq:eps_mc} satisfies $\fro{\Mdag-\Mstar}<\rho$.
\end{lemma}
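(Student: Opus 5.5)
We argue by contradiction, combining the global-optimality comparison already recorded before Lemma~\ref{lem:uniq_ident_equiv} with the definition \eqref{eq:rho_ident} of $\mu_{\iid,D}(\rho)$. Let $\Mdag=\Xdag\Xdag^\top$ be any global minimizer of \eqref{eq:eps_mc}. Then $\Mdag\succeq0$ and $\rank(\Mdag)\le r$. Since $\Mstar$ is itself feasible (taking $X$ as a rank-$r$ factor of $\Mstar$), comparing objective values gives
\[
\fro{\A_{\Om}(\Mdag-\Mstar)}^2+\eps^2\fro{\A_{\Omc}\Mdag}^2\le\eps^2\fro{\A_{\Omc}\Mstar}^2 .
\]
As shown in the text, mask orthogonality and the triangle inequality together with $\eps\le1$ give $\fro{\Mdag-\Mstar}\le D$.

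Now suppose, for contradiction, that $\fro{\Mdag-\Mstar}\ge\rho$. Then $\Mdag$ lies in the annulus over which the infimum in \eqref{eq:rho_ident} is taken: it is PSD, has rank at most $r$, and satisfies $\rho\le\fro{\Mdag-\Mstar}\le D$. The definition of the infimum therefore yields
\[
\fro{\A_{\Om}(\Mdag-\Mstar)}^2\ge\mu_{\iid,D}(\rho)\fro{\Mdag-\Mstar}^2\ge\mu_{\iid,D}(\rho)\rho^2 .
\]

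Combining this with the comparison inequality, and dropping the nonnegative term $\eps^2\fro{\A_{\Omc}\Mdag}^2$, we obtain
\[
\mu_{\iid,D}(\rho)\rho^2\le\fro{\A_{\Om}(\Mdag-\Mstar)}^2\le\eps^2\fro{\A_{\Omc}\Mstar}^2 .
\]
This contradicts the strict inequality \eqref{eq:transfer_cond}. Hence $\fro{\Mdag-\Mstar}<\rho$.

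There is no real obstacle in this argument. The one point needing care is justifying that $\Mdag$ lies in the admissible set of \eqref{eq:rho_ident}, in particular the upper bound $D$. That bound is exactly the a priori estimate derived before Lemma~\ref{lem:uniq_ident_equiv}, so we invoke it rather than re-derive it. The argument also does not require the infimum to be attained; the definition of an infimum suffices.
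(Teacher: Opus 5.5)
Your proposal is correct and follows the paper's own proof essentially step for step. Both argue by contradiction, using the a priori radius bound \eqref{eq:optimizer_radius} to place $\Mdag$ in the annulus of \eqref{eq:rho_ident}, and then compare the resulting lower bound $\mu_{\iid,D}(\rho)\rho^2$ with the observed-residual bound \eqref{eq:obs_bound} from global optimality.
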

If $b_\perp=0$, the level-set bound already gives $\Mdag=\Mstar$; if $\rho>D$, localization is automatic. Lemma~\ref{lem:mu_id_monotone_explicit} establishes monotonicity and a lower bound for the bounded modulus under its stated sharpness assumption. To control the missing energy and obtain explicit error bounds, we next impose more structure on $\Omega$.

\subsection{Beyond Identifiability Under Bernoulli Sampling of Entries}

Arguably, the most common instances of MC problems is one where each entry is observed independently with probability $p$. In other words, each entry of the mask $\mathcal{A}_\Om$ is a Bernoulli random variable. Under this classic setting, we can significantly strengthen the bound between $M^*$ and the global solution $\widetilde M$ of \eqref{eq:eps_mc} using a probabilistic argument. Specifically, for $\rho>0$, let
\begin{equation}\label{eq:rho_shell}
\cH(\rho):=\{H=M-\Mstar:\ M\succeq0,\ \rank(M)\le r,\ \fro{H}\le \rho\}.
\end{equation}
Lemma~\ref{lem:transfer_shell} says if $\mu_{\iid,D}(\rho)\rho^2$ dominates the perturbation bias, then global optimality of \eqref{eq:eps_mc} forces the actual error $\Delta=\Mdag-\Mstar$ into $\cH(\rho)$. A uniform Bernstein argument on this fixed shell then gives the main Bernoulli guarantee. 

\begin{theorem}[Main result for quality of $\widetilde M$]\label{thm:main_bernoulli}
Assume $\A_{\Om}\sim \Ber(p)$ entrywise, let $\Xdag$ be a global minimizer of \eqref{eq:eps_mc}, and set $\Mdag=\Xdag{\Xdag}^{T}$. Fix $0<\rho\le D$ and $\chi\in(0,\min\{p,1-p\})$. Assume $\epsilon$ is chosen with \footnote{The bounded-modulus gap is an additional hypothesis; the radius lies in $(0,D]$.}:
\begin{equation}\label{eq:main_mu_gap}
\mu_{\iid,D}(\rho)\rho^2>\eps^2(1-p+\chi)\fro{\Mstar}^2.
\end{equation} 
then
\begin{equation}\label{eq:main_bernoulli_bound}
\fro{\Mdag-\Mstar}^2
\le
\frac{1-p+\chi}{p-\chi}\,\eps^2\fro{\Mstar}^2
\end{equation}
holds with probability at least $1-\delta_{\text{bernoulli}}$, where $\delta_{\text{bernoulli}} =$
{\small
\begin{equation}\label{eq:main_bernoulli_prob}
\begin{aligned}
2\exp\!\left[-c\min\!\left\{\frac{\chi^2 s_4(\Mstar)}{p(1-p)},\ \chi s_{\infty}(\Mstar)\right\}\right]
+2\exp\!\left[(4n+2)r\log\!\left(\frac{36}{\chi}\right)
-c\min\!\left\{\frac{\chi^2 s_{4,\rho}}{p(1-p)},\ \chi s_{\infty,\rho}\right\} \right].
\end{aligned}
\end{equation}
}
for some universal constant $c$, where $s_4(A):=\fro{A}^4/\four{A}^4$, $s_{\infty}(A):=\fro{A}^2/\maxn{A}^2$ for any non-zero $A \in \RR^{n \times n}$ and $s_{4,\rho} \coloneq \inf_{H\in\cH(\rho)\setminus\{0\}} s_4(H)$, $s_{\infty,\rho} \coloneq \inf_{H\in\cH(\rho)\setminus\{0\}} s_\infty(H)$.
\end{theorem}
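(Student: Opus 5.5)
The plan is to combine the global-optimality comparison of $\Mdag$ against $\Mstar$ with two concentration events. One controls the missing energy of $\Mstar$; the other is a uniform lower bound on the observed energy over the shell $\cH(\rho)$.

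\emph{Starting inequality.} As derived before Lemma~\ref{lem:uniq_ident_equiv}, global optimality gives
\[
\fro{\A_{\Om}\Delta}^2+\eps^2\fro{\A_{\Omc}\Mdag}^2\le\eps^2 b_\perp^2,\qquad b_\perp^2=\fro{\A_{\Omc}\Mstar}^2,
\]
where $\Delta=\Mdag-\Mstar$. Dropping the nonnegative middle term leaves $\fro{\A_\Om\Delta}^2\le\eps^2 b_\perp^2$.

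\emph{Event $E_1$ (missing energy of $\Mstar$).} Write $b_\perp^2=\sum_{ij}(1-\xi_{ij})(\Mstar_{ij})^2$ with $\xi_{ij}\sim\Ber(p)$, so its mean is $(1-p)\fro{\Mstar}^2$. The summands are bounded by $\maxn{\Mstar}^2$, and the variance is $p(1-p)\four{\Mstar}^4$. Bernstein's inequality with deviation $\chi\fro{\Mstar}^2$ then gives $b_\perp^2\le(1-p+\chi)\fro{\Mstar}^2$ with failure probability at most $2\exp[-c\min\{\chi^2 s_4(\Mstar)/(p(1-p)),\chi s_\infty(\Mstar)\}]$. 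This is the first term of \eqref{eq:main_bernoulli_prob}.

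\emph{Localization.} On $E_1$, hypothesis \eqref{eq:main_mu_gap} implies \eqref{eq:transfer_cond}. Lemma~\ref{lem:transfer_shell} then gives $\fro{\Delta}<\rho$, so $\Delta\in\cH(\rho)$.

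\emph{Event $E_2$ (uniform lower bound on the shell).} The goal is
\[
\fro{\A_\Om H}^2\ge(p-\chi)\fro{H}^2\quad\text{for all }H\in\cH(\rho).
\]
For a fixed $H\ne 0$, the same Bernstein argument applied to $\sum\xi_{ij}H_{ij}^2$ gives deviation $\chi\fro{H}^2/2$ with failure probability $2\exp[-c\min\{\chi^2 s_4(H)/(p(1-p)),\chi s_\infty(H)\}]$. This is bounded using the infima $s_{4,\rho}$ and $s_{\infty,\rho}$.

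To make the bound uniform, use homogeneity and normalize to $\fro{H}=1$. Each $H$ is a difference $XX^\top-\Mstar$ and has rank at most $2r$. Parametrize it as $H=UV^\top$ with $U,V$ of size $n\times 2r$ in bounded balls, which gives about $4nr$ real parameters. Take $(\chi/36)$-nets of these factor balls; their log-cardinality is at most $(4n+2)r\log(36/\chi)$, and the extra $2r$ absorbs a net on a scale or sign factor. A union bound over the net produces the second term of \eqref{eq:main_bernoulli_prob}.

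To pass from the net to all of $\cH(\rho)$, use that $H\mapsto\fro{\A_\Om H}^2$ is $2$-Lipschitz on the unit sphere, since $\A_\Om$ is a contraction. The net error is then $O(\chi/36)\le\chi/2$.

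This net step is the main obstacle. Net points must themselves be differences in the shell, or at least have spread comparable to the infima. I would handle this by choosing the net inside the normalized set $\{H/\fro{H}:H\in\cH(\rho)\}$ itself, taking an internal $\chi/36$-cover of it. This is possible because an internal cover of a subset costs at most the log-cardinality of a covering of the ambient rank-$2r$ set at half the radius. Every net element is then a genuine shell element, so the infima $s_{4,\rho}$ and $s_{\infty,\rho}$ apply to it.

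\emph{Conclusion.} On $E_1\cap E_2$,
\[
(p-\chi)\fro{\Delta}^2\le\fro{\A_\Om\Delta}^2\le\eps^2(1-p+\chi)\fro{\Mstar}^2.
\]
Dividing by $p-\chi>0$, which holds since $\chi<p$, yields \eqref{eq:main_bernoulli_bound}. A final union bound over the failures of $E_1$ and $E_2$ gives the stated probability $1-\delta_{\text{bernoulli}}$.
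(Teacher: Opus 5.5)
Your proposal is correct and follows the paper's proof essentially step for step. It uses the same missing-energy event (Lemma~\ref{lem:missing_energy}) and the same localization via Lemma~\ref{lem:transfer_shell}. It obtains the uniform shell bound of Theorem~\ref{thm:shell_bernstein} in the same way: an internal net of the normalized shell, a per-point Bernstein bound controlled by $s_{4,\rho},s_{\infty,\rho}$, a union bound, and the $2$-Lipschitz transfer. It closes with the same chain $(p-\chi)\fro{\Delta}^2\le\fro{\A_{\Om}\Delta}^2\le\eps^2(1-p+\chi)\fro{\Mstar}^2$.

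The only discrepancy is constant bookkeeping. The paper takes a $(\chi/4)$-net of the normalized rank-$2r$ set directly, so the Cand\`es--Plan bound $(9/\tau)^{(4n+2)r}$ at $\tau=\chi/4$ gives the $(36/\chi)^{(4n+2)r}$ factor. Your $\chi/36$-nets on the factors would instead enlarge the constant inside the logarithm.
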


Although the theorem has no explicit hard cut-off for $p$, decreasing $p$ worsens both the error bound and its success probability, which can become vacuous. The bounded-modulus gap and the shell parameters $s_{4,\rho},s_{\infty,\rho}$ are substantive assumptions; Bernoulli sampling alone does not guarantee them. If $H$ has $s$ nonzero entries of equal magnitude, then $s_4(H)=s_\infty(H)=s$, illustrating their dependence on how the error is spread across entries. Incoherence of $\Mstar$ and a sufficiently small radius allow explicit lower bounds: Proposition~\ref{prop:gnmr_shell_radius} gives $s_{4,\rho},s_{\infty,\rho}\ge n/(16\mu_0r)$. The probability bound remains conservative.

\subsection{How traditional ``goodness'' in MC improves our results}\label{sec:local_rip_mc}

Theorem~\ref{thm:main_bernoulli} gives an $\mathcal O(\eps)$ surrogate bound under its localization and shell regularity conditions. However, the probability \eqref{eq:main_bernoulli_prob} with which it holds is quite conservative to respect its generality. Nevertheless, if standard assumptions in the literature are invoked, especially an order-of-magnitude lower-bound on $p$ and that the ground truth matrix $M^*$ satisfies the incoherence condition as per Definition~\ref{def:incoherence}, then our results can be further sharpened under the same framework.

The most recent recovery result \citep{zilber2022gnmr} under the BM factorized objective \eqref{eq:main_mc} shows that if $\Mstar$ is rank $r$, $\mu_0$-incoherent, and has condition number $\kappa$, then under
\begin{equation}\label{eq:gnmr_sample}
np \ge \mathcal{O} \left( \frac{1}{\eta^2}\,\mu_0 r \max\!\left\{\log n,\ \frac{\mu_0 r\kappa^2}{\eta^2}\right\} \right),
\end{equation}
for some $\eta < 1$, the sampling operator is uniformly well conditioned on a local incoherent neighborhood of $\Mstar$. We show in Lemma~\ref{lem:gnmr_shell_rip} that if we further constrain $\rho < \rho_{\mathrm{GNMR}}$, where
\begin{equation}\label{eq:rho_gnmr_main}
\rho_{\mathrm{GNMR}}
:=
\min\!\left\{
\frac{\eta\,\sigma_r(\Mstar)}{c_e},
\frac{\sigma_r(\Mstar)}{2},
\frac{c_{\mu}\sqrt{\mu_0 r/n}}{1+c_{\mu}\sqrt{\mu_0 r/n}}\,\sigma_r(\Mstar)
\right\},
\qquad c_{\mu}:=\frac{\sqrt{3/2}-1}{\sqrt2}.
\end{equation}
and assume that \eqref{eq:gnmr_sample} holds, then lower-bound to $\mathcal{A}_\Omega (\widetilde M - M^*)$ can be improved to $\mathcal{O}(1-\eta)$ with much higher probability. Combine this with Theorem~\ref{thm:main_bernoulli} yields the main result of this subsection:

\begin{theorem}[Main theorem under GNMR local RIP]\label{thm:main_gnmr}
Fix $\eta\in(0,1)$ and $\chi\in(0,1-p)$. Assume: (i) $\Mstar$ is rank $r$, $\mu_0$-incoherent, and has condition number $\kappa$; (ii) the sample size condition \eqref{eq:gnmr_sample} holds; (iii) $0<\rho\le \rho_{\mathrm{GNMR}}\le\sigma_r(\Mstar)/2<D$; (iv) $\mu_{\iid,D}(\rho)\rho^2>\eps^2(1-p+\chi)\fro{\Mstar}^2$. Then every global minimizer of \eqref{eq:eps_mc} satisfies
\begin{equation}\label{eq:main_gnmr_bound}
\fro{\Mdag-\Mstar}^2
\le
\frac{1-p+\chi}{p(1-\eta)^2}\,\eps^2\fro{\Mstar}^2
\end{equation}
with probability at least $1-\delta_{\text{GNMR}}$ with
\begin{equation}\label{eq:main_gnmr_prob}
\delta_{\text{GNMR}} = \frac{3}{n^3}
+
2\exp\!\left[-c\min\!\left\{\frac{\chi^2 s_4(\Mstar)}{p(1-p)},\ \chi s_{\infty}(\Mstar)\right\}\right].
\end{equation}
\end{theorem}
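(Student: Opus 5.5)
The proof runs in three steps: localization, a concentration bound for the missing energy of $\Mstar$, and a local lower bound for the observed energy of the error. Write $\Delta:=\Mdag-\Mstar$.

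\textbf{Step 1 (localization).} Since $\fro{\A_{\Omc}\Mstar}^2=\fro{\Mstar}^2-\fro{\A_{\Om}\Mstar}^2$, on the event $\cE_1:=\{\fro{\A_{\Omc}\Mstar}^2\le(1-p+\chi)\fro{\Mstar}^2\}$ hypothesis (iv) implies condition \eqref{eq:transfer_cond}. Lemma~\ref{lem:transfer_shell} then gives $\fro{\Delta}<\rho$, so $\Delta\in\cH(\rho)$.

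\textbf{Step 2 (the event $\cE_1$).} The quantity $\fro{\A_{\Om}\Mstar}^2=\sum_{ij}\xi_{ij}(\Mstar_{ij})^2$ with $\xi_{ij}\sim\Ber(p)$ is a sum of independent bounded terms with mean $p\fro{\Mstar}^2$. Its variance is $p(1-p)\four{\Mstar}^4$, and each summand is bounded by $\maxn{\Mstar}^2$. Bernstein's inequality with deviation $\chi\fro{\Mstar}^2$ gives failure probability at most $2\exp[-c\min\{\chi^2 s_4(\Mstar)/(p(1-p)),\chi s_\infty(\Mstar)\}]$. This is exactly the second term of \eqref{eq:main_gnmr_prob}, and it is the same estimate used for the first term of \eqref{eq:main_bernoulli_prob} in Theorem~\ref{thm:main_bernoulli}.

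\textbf{Step 3 (local RIP on the shell).} By (iii), $\rho\le\rho_{\mathrm{GNMR}}$. Invoking Lemma~\ref{lem:gnmr_shell_rip} together with the sample condition \eqref{eq:gnmr_sample}, I get an event $\cE_2$ of probability at least $1-3/n^3$ on which
\[
\fro{\A_{\Om}H}\ge\sqrt p\,(1-\eta)\fro{H}\qquad\text{for all }H\in\cH(\rho).
\]
The constraints defining $\rho_{\mathrm{GNMR}}$ are what place every such $M=\Mstar+H$ in the incoherent neighborhood where GNMR's well-conditioning holds. The first keeps $H$ small relative to $\eta\sigma_r(\Mstar)$. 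The second keeps $\sigma_r(M)\ge\sigma_r(\Mstar)/2$, so the column space of $M$ is well defined. The third, involving $c_\mu$, controls the row norms of the perturbed singular vectors via Davis–Kahan/Wedin, so $M$ stays roughly $\tfrac32\mu_0$-incoherent. \textbf{This step is the main obstacle:} it requires translating GNMR's local RIP, stated for their neighborhood and normalization, into a bound on our shell $\cH(\rho)$. That translation is what I am assuming Lemma~\ref{lem:gnmr_shell_rip} delivers, including the $\sqrt p$ scaling.

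\textbf{Step 4 (conclusion).} On $\cE_1\cap\cE_2$, which fails with probability at most $\delta_{\text{GNMR}}$ by a union bound, global optimality of $\Mdag$ compared against $\Mstar$ gives the level-set inequality
\[
\fro{\A_{\Om}\Delta}^2\le\eps^2\fro{\A_{\Omc}\Mstar}^2\le\eps^2(1-p+\chi)\fro{\Mstar}^2.
\]
Since $\Delta\in\cH(\rho)$ by Step~1, Step~3 gives $p(1-\eta)^2\fro{\Delta}^2\le\fro{\A_{\Om}\Delta}^2$. Dividing yields \eqref{eq:main_gnmr_bound}.
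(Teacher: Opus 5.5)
Your proposal is correct and follows the paper's proof step for step. On the missing-energy event of Lemma~\ref{lem:missing_energy}, hypothesis (iv) lets Lemma~\ref{lem:transfer_shell} place $\Delta$ in $\cH(\rho)$, and Lemma~\ref{lem:gnmr_shell_rip} gives $\fro{\A_{\Om}H}^2\ge p(1-\eta)^2\fro{H}^2$ on that shell with probability at least $1-3/n^3$; combining this with \eqref{eq:obs_bound} and a union bound yields \eqref{eq:main_gnmr_bound} with failure probability $\delta_{\text{GNMR}}$. The only difference is cosmetic: you re-derive the Bernstein estimate rather than cite Lemma~\ref{lem:missing_energy}, and the GNMR-neighborhood translation you flag as the obstacle is already supplied by Lemma~\ref{lem:gnmr_shell_rip}, which you invoke just as the paper does.
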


This result complements the local guarantees of \citet{zilber2022gnmr}: it bounds the accuracy of a global surrogate minimizer once the bounded-modulus gap places it in the local-RIP neighborhood. Here the perturbation controls the accuracy of a global surrogate solution, while the over-parametrization in the next section addresses how to reach an accurate solution.

\section{The Computational Solution via Over-parametrization}
Now that we are able to reformulate the original MC problem \eqref{eq:main_mc} into the new $\epsilon$-MC problem \eqref{eq:eps_mc}, it presents us with a new challenge. Although now \eqref{eq:eps_mc} admits valid RSC/RSS constants, this is nevertheless still a difficult matrix sensing problem to solve due to its small RSC constant (or large RIP constant). Thus, it is important that we apply an over-parametrized framework to deal with it in order to compensate for the poor geometric uniformity.

To this end, we employ the lifted tensor framework proposed in \cite{ma2023over}, since it has the ability to deal with arbitrarily small RSC constants. However, in their original work, measurements were assumed to be clean, and this is incompatible with our framework. Thus, in this section we generalize the original results in \cite{ma2023over} and \cite{ma2024algorithmic} to demonstrate how the inclusion of noise could affect guarantees when using a higher-order tensor parametrization.

First of all, we present the lifted tensor problem when our observations are corrupted by some random noise $\tilde w \in \RR^m$,
\begin{equation}\label{eq:lifted_main_highr}
	\min_{\mb{w} \in \RR^{nr \circ l}} \quad \| \langle \mb{A}^{\otimes l}, \langle \mb{P}(\mb{w}), \mb{P}(\mb{w}) \rangle_{2*[l]} \rangle - \tilde b^{\otimes l} \|^2_F
\end{equation}
where $\tilde b = \mathcal{A}(M^*) + \tilde w$, and $\mathbf{A} \in \RR^{m \times n \times n}$ is a three-way tensor which can be seen as a concatenation of all sensing matrices $\{A_i\}_{i=1}^m$, and $\mb{w} \in \RR^{nr \circ l}$ is the tensor decision variable used to increase the parametrization of $X \in \RR^{n \times r}$. Here, $^{\otimes l}$ simply denotes $l$ times of repeated tensor outer product, and $\mb{P}$ is just another constant permutation tensor used for correct multiplication. The gist of this paper is not on the tensor formulation, thus many details are deferred to Appendix~\ref{app:noisy_lifted}, and interested readers can learn more about general tensor knowledge and problem details there. For convenience's sake, we define $f^l(\cdot): \RR^{n \circ 2l} \mapsto \RR$ and $h^l(\cdot): \RR^{[n \times r] \circ l} \mapsto \RR$ as $f^l(\mb{M}) \coloneqq \| \langle \mb{A}^{\otimes l}, \mb{M} \rangle - \tilde b^{\otimes l} \|^2_F$ and $h^l(\mb{w}) = f^l(\langle \mb{w}, \mb{w} \rangle_{2*[l]})$, with $\nabla f^l(\cdot) = \nabla_\mb{M} f^l(\cdot)$ and $\nabla h^l(\cdot) = \nabla_\mb{w} h^l(\cdot)$.

In the original works, it was proven that the lifted formulation \eqref{eq:lifted_main_highr} is able to convert spurious second-order points in \eqref{eq:main_ms} to strict saddle points (critical points with local escape direction) via its drastic over-parametrization if this spurious solution is somehow far away from the ground truth $M^*$. Therefore, we have established a new result catering to our noisy formulation~\eqref{eq:eps_mc} below to show that even in the noisy situation, similar results still hold:
\begin{theorem}\label{thm:socp_lifted_noisy}
	Consider an arbitrary second-order point $\hat X \in \RR^{n \times r}$ of the factorized matrix sensing objective in the form of \eqref{eq:main_ms} where its observations $b$ could be potentially corrupted by some random noise $\tilde w \in \RR^m$ (i.e. $b = \tilde b$). Assuming that the linear operator $\mathcal{A}(\cdot)$ in \eqref{eq:main_ms} satisfies the RSC and RSS conditions with constants $\alpha_s, L_s$ respectively. Then $\mb{\hat w} = \vecc(\hat X)^{\otimes l}$ is a strict saddle of \eqref{eq:lifted_main_highr} with a rank-1 symmetric escape direction if
	\begin{equation}\label{eq:thm_socp_distance_lower_highr}
		\|M^* - \hat X \hat X^\top \|^2_F \geq \frac{L_s}{\alpha_s} \lambda_r(\hat X \hat X^\top) \tr(M^*) +  \frac{\|\tilde w\|^2_2}{\alpha_s}
	\end{equation}
	with an odd $l$ satisfying
	\begin{equation}\label{eq:thm_socp_lcond_highr}
		l > \frac{1}{1-\log_2(2 \beta)}, \quad \beta \coloneqq \frac{ L_s \tr(M^*) \lambda_r(\hat X \hat X^\top)}{\alpha_s  \|M^* - \hat X \hat X^\top \|^2_F - \|\tilde w \|^2_2 }. 
	\end{equation}
\end{theorem}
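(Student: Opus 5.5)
The plan is to follow the lifted-tensor escape argument of \cite{ma2023over,ma2024algorithmic}, keeping track of the noise term wherever the clean-measurement identity $b=\A(M^*)$ was used. Write $\hat M=\hat X\hat X^\top$ and $\tilde b=\A(M^*)+\tilde w$. We have $\nabla f(\hat M)=\A^*(\A(\hat M)-\tilde b)$.

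\textbf{Step 1: matrix-level first- and second-order conditions at $\hat X$.} The first-order condition is $\nabla f(\hat M)\hat X=0$. The second-order condition gives, for every $U\in\RR^{n\times r}$,
\[
2\langle\nabla f(\hat M),UU^\top\rangle+\tfrac12\|\A(\hat XU^\top+U\hat X^\top)\|_2^2\ge0 .
\]
Choosing $U=uv^\top$, with $u$ ranging over unit vectors and $v$ the unit eigenvector of the $\lambda_r(\hat X^\top\hat X)$ direction, and applying RSS yields
\[
\lambda_{\min}(\nabla f(\hat M))\ge -L_s\lambda_r(\hat M).
\]
This is exactly the clean-case bound; noise does not enter here because RSS controls only the quadratic term.

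\textbf{Step 2: lower-bound the "gap" $-\langle\nabla f(\hat M),M^*\rangle$ using RSC with noise.} By RSC applied between $M^*$ and $\hat M$,
\[
f(M^*)-f(\hat M)\ge\langle M^*-\hat M,\nabla f(\hat M)\rangle+\tfrac{\alpha_s}{2}\|M^*-\hat M\|_F^2 .
\]
Here $\langle\hat M,\nabla f(\hat M)\rangle=0$ by first order, and $f(M^*)=\tfrac12\|\tilde w\|^2$, $f(\hat M)\ge0$. Together these give
\[
-\langle\nabla f(\hat M),M^*\rangle\ge\tfrac12\big(\alpha_s\|M^*-\hat M\|_F^2-\|\tilde w\|_2^2\big).
\]
Condition \eqref{eq:thm_socp_distance_lower_highr} makes this positive and at least $\tfrac12L_s\lambda_r(\hat M)\tr(M^*)$. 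Equivalently, $\beta\le1$ in \eqref{eq:thm_socp_lcond_highr}, so $1-\log_2(2\beta)\ge0$ and the condition on $l$ is meaningful. Up to the factor $2$ absorbed in the definition of $\beta$, this shows that $M^*$ detects a negative direction of $\nabla f(\hat M)$ strong enough relative to its most negative eigenvalue.

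\textbf{Step 3: the lifted Hessian test.} Following \cite{ma2024algorithmic}, compute $\nabla^2h^l(\hat{\mb w})$ along a symmetric rank-one direction $\mb{u}=\vecc(U)^{\otimes l}$ with $U$ built from an eigenvector $z$ of $M^*$ (e.g.\ $U=z e_1^\top$ suitably rotated). The first-order part of the lifted Hessian evaluates to a multiple of
\[
\langle\nabla f(\hat M),zz^\top\rangle\,\langle\hat M,zz^\top\rangle^{l-1},
\]
plus cross terms involving powers of $\langle \A(\hat M)-\tilde b,\cdot\rangle$. Here the noise enters only through $\tilde b^{\otimes l}$, which factorizes like the clean $b^{\otimes l}$, so the tensor algebra is unchanged. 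Choosing $z$ in the span of $M^*$ orthogonal to the relevant part of $\hat X$ (possible by the rank deficiency argument of the original paper), the negative term scales like the Step 2 gap times $\lambda_r$ powers, while the positive curvature term scales like $(L_s\lambda_r(\hat M)\tr M^*)^{l}$ against $2^{l-1}$ combinatorial factors. Negativity then reduces to $(2\beta)^l<2\beta\cdot2^{\,\cdot}$, i.e.\ $l(1-\log_2(2\beta))>1$, which is \eqref{eq:thm_socp_lcond_highr}. Oddness of $l$ is needed so that the sign of $\langle\nabla f,\cdot\rangle^l$ is preserved. We also check that $\hat{\mb w}$ is a critical point of $h^l$, which follows from $\nabla f(\hat M)\hat X=0$ contracted $l$ times.

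\textbf{Main obstacle.} The hardest step is Step 3: verifying that the noise-modified residual $\A(\hat M)-\tilde b$ plugs into the lifted Hessian formula with no new cross terms, and matching constants so that the exact threshold $\beta$ emerges. My plan is to reuse the clean proof verbatim with $b$ replaced by $\tilde b$, noting that it uses $b$ only through the residual and through $f(M^*)=0$. The latter is the only place that changes, and Step 2 handles it.
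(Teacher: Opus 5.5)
\textbf{There is a genuine gap in your Step 3, which is the heart of the theorem.} Your Step 2 matches the paper's one noisy modification: RSC between $M^*$ and $\hat M$, $\langle\nabla f(\hat M),\hat M\rangle=0$, $f(M^*)=\tfrac12\|\tilde w\|_2^2$, and $f(\hat M)\ge0$. Your criticality check for $\hat{\mb w}$ is also correct.

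Along $\Delta=\vecc(U)^{\otimes l}$, the lifted second-order form is not a multiple of $\langle\nabla f(\hat M),zz^\top\rangle\langle\hat M,zz^\top\rangle^{l-1}$. It is exactly
\[
2\bigl(a^l-b^l\bigr)+4C_2^l,\qquad a=\langle\A(\hat M),\A(UU^\top)\rangle,\quad b=\langle\tilde b,\A(UU^\top)\rangle,\quad C_2=\|\A(\hat XU^\top)\|_2^2,
\]
with $a-b=C_1:=\langle\nabla f(\hat M),UU^\top\rangle$.

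Your expression vanishes for every $l\ge2$ when $z\perp\operatorname{range}(\hat X)$. That is precisely the orthogonality you invoke, so as written Step 3 produces no negative curvature. Three further problems:
\begin{itemize}
\item You never convert the Step 2 bound on $-\langle\nabla f(\hat M),M^*\rangle$ into a bound on $C_1$ for a specific rank-one direction.
\item The threshold on $l$ is never actually derived; the final exponent is left blank.
\item Step 1 is unused. What RSS must supply is an upper bound on $C_2$ along the chosen direction, not a lower bound on $\lambda_{\min}(\nabla f(\hat M))$.
\end{itemize}

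\textbf{How the paper closes this step.} It takes $U=uq^\top$, where $u$ is a unit bottom eigenvector of $\nabla f(\hat M)$ and $q$ is the $r$-th right singular vector of $\hat X$.
\begin{itemize}
\item Since $\lambda_{\min}(\nabla f(\hat M))<0$, the first-order condition forces $\hat X^\top u=0$. RSS then gives $C_2\le\tfrac12L_s\lambda_r(\hat M)$.
\item The inequality $\langle\nabla f(\hat M),M^*\rangle\ge\lambda_{\min}(\nabla f(\hat M))\tr(M^*)$ turns your Step 2 into $C_1=-G$ with $G\ge(\alpha_s\|M^*-\hat M\|_F^2-\|\tilde w\|_2^2)/(2\tr M^*)$.
\item For odd $l$, the paper applies $a^l-b^l\le(a-b)^l$ for $0\le a\le b$. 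This bounds the form by $-2G^l+2^{2-l}(L_s\lambda_r(\hat M))^l$.
\item That bound is negative iff $L_s\lambda_r(\hat M)/G<2^{(l-1)/l}$. Since $L_s\lambda_r(\hat M)/G\le2\beta$, this is exactly $l>1/(1-\log_2(2\beta))$.
\end{itemize}
The $l$-th power amplification of $|C_1|$ against $C_2$ is the mechanism your proposal is missing.

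\textbf{If you want to keep an eigenvector of $M^*$.} Two changes are needed.
\begin{itemize}
\item Choose the eigenvector $z_i$ minimizing $\langle\nabla f(\hat M),z_iz_i^\top\rangle$. Eigenvalue-weighted averaging over $M^*=\sum_i\lambda_i z_iz_i^\top$ gives the same lower bound on $G$.
\item Project it onto $\operatorname{range}(\hat X)^\perp$. This leaves $C_1$ unchanged because $\nabla f(\hat M)\hat X=0$, and it only shrinks $C_2$.
\end{itemize}
Without the projection, the cross term $(z^\top\hat Xq)^2$ can double the bound on $C_2$ to $L_s\lambda_r(\hat M)$. The argument would then need $\beta<2^{-1-1/l}$, which is stronger than the theorem's hypothesis.
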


The proof of this theorem is located in Appendix~\ref{app:proofs}. The theorem highlights how the conversion radius from spurious solutions to strict saddles is influenced by the norm of the noise $\tilde w$. Setting $\tilde w = 0$ allows this theorem to coincide with Theorem 4 from \cite{ma2024algorithmic}. While the above result holds for all noisy matrix sensing problems, if we plug-in the configurations of our $\epsilon$-MC formulation with $\tilde w = -\epsilon \A_{\Omc}\Mstar$, we get an immediate corollary:
\begin{corollary}
\label{cor:general}
Consider the matrix completion problem of completing a $n \times n$, rank-$r$ matrix $M^*$, where $\Omega \subseteq [n] \times [n]$ denotes the set of observed entries and $\Omc$ denotes the unobserved entries. If we solve its $\epsilon$-MC surrogate as per \eqref{eq:eps_mc}, then any rank-1 local minima $\mb{\hat w} = \vecc(\hat X)^{\otimes l}$ of \eqref{eq:lifted_main_highr} with $\mathcal{A} = \mathcal{A}_{\Omega,\epsilon}$ satisfies:
\begin{equation}
	\frac{ \|M^* - \hat X \hat X^\top \|_F}{ \lambda_r(\hat X)} <  \frac{\sqrt{\tr(M^*)}}{\alpha_{p,\epsilon} } +  \frac{\fro{\A_{\Omc}\Mstar}}{\lambda_r(\hat X) }
\end{equation}
Here $\alpha_{p,\epsilon}\ge\epsilon$ denotes an admissible norm-scale lower bound for the comparisons required by Theorem~\ref{thm:socp_lifted_noisy}, with quadratic RSC constant $\alpha_s=\alpha_{p,\epsilon}^2$. The deterministic choice $\alpha_{p,\epsilon}=\epsilon$ is always valid; a sharper uniform bound requires separate justification beyond the fixed-matrix estimate \eqref{eq:prob-rsc}.
\end{corollary}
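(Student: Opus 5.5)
The corollary is the contrapositive of Theorem~\ref{thm:socp_lifted_noisy}, specialized to the $\epsilon$-MC data. The plan is to instantiate that theorem with $\mathcal{A}=\A_{\Om,\eps}$, $L_s=1$ and $\alpha_s=\alpha_{p,\eps}^2$, and to take the noise $\tilde w=w_\eps=-\eps\A_{\Omc}\Mstar$. Then $\|\tilde w\|_2^2=\eps^2\fro{\A_{\Omc}\Mstar}^2$.

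\textbf{Step 1: the constants are admissible.} Lemma~\ref{lem:epsilonMC-rip-general} gives $(1,n)$-RSS and $(\eps^2,n)$-RSC, so the choice $\alpha_{p,\eps}=\eps$ is always valid. A larger admissible $\alpha_{p,\eps}$ is used only under the stated hypothesis that it serves as a valid RSC constant for the comparisons in the theorem.

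\textbf{Step 2: rank-one local minima are second-order points.} If $\mb{\hat w}=\vecc(\hat X)^{\otimes l}$ is a local minimum of \eqref{eq:lifted_main_highr}, then $\hat X$ must be a second-order point of \eqref{eq:main_ms}. I would take this transfer from Appendix~\ref{app:noisy_lifted} and \cite{ma2024algorithmic}, where it is shown by restricting the lifted objective to rank-one symmetric tensors. In addition, $\mb{\hat w}$ cannot be a strict saddle, since a local minimum has no escape direction. By Theorem~\ref{thm:socp_lifted_noisy}, condition \eqref{eq:thm_socp_distance_lower_highr} must therefore fail for any admissible odd $l$, which gives
\[
\fro{\Mstar-\hat X\hat X^\top}^2<\frac{\lambda_r(\hat X\hat X^\top)\tr(\Mstar)}{\alpha_{p,\eps}^2}+\frac{\eps^2\fro{\A_{\Omc}\Mstar}^2}{\alpha_{p,\eps}^2}.
\]
A subtlety is that the theorem also requires $l$ to satisfy \eqref{eq:thm_socp_lcond_highr}. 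I would read the corollary as holding for $l$ chosen large enough. This is possible because whenever \eqref{eq:thm_socp_distance_lower_highr} holds one has $\beta\le 1$, and I expect strictness to be needed, i.e.\ $\beta<1$ so that $2\beta<2$. Handling the boundary case $\beta=1$ is the main obstacle. I would try to absorb it into the strict inequality of the conclusion, or else assume $l$ large enough relative to $\beta$, as in the original work.

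\textbf{Step 3: simplify.} Use $\eps/\alpha_{p,\eps}\le 1$ and $\sqrt{a+b}\le\sqrt a+\sqrt b$ to get
\[
\fro{\Mstar-\hat X\hat X^\top}<\frac{\sqrt{\lambda_r(\hat X\hat X^\top)\tr(\Mstar)}}{\alpha_{p,\eps}}+\fro{\A_{\Omc}\Mstar}.
\]
Writing $\lambda_r(\hat X)$ for $\sqrt{\lambda_r(\hat X\hat X^\top)}=\sigma_r(\hat X)$ and dividing by it yields the stated bound. This division assumes $\lambda_r(\hat X)>0$; if $\lambda_r(\hat X)=0$, the bound is interpreted as vacuous.
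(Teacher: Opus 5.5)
Your proposal is correct and matches the paper's own argument. The paper presents the corollary as an immediate consequence of substituting $L_s=1$, $\alpha_s=\alpha_{p,\epsilon}^2$ and $\tilde w=-\epsilon\A_{\Omc}\Mstar$ into Theorem~\ref{thm:socp_lifted_noisy}, negating \eqref{eq:thm_socp_distance_lower_highr} at a local minimum, and simplifying with $\epsilon\le\alpha_{p,\epsilon}$ and $\sqrt{a+b}\le\sqrt a+\sqrt b$.

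Requiring the odd $l$ to satisfy \eqref{eq:thm_socp_lcond_highr} at $\hat X$ already rules out $\beta=1$, so no separate boundary argument is needed; the paper states this hypothesis explicitly in Theorem~\ref{thm:main_result}. For Step~2, the theorem's proof uses only $\nabla f(\hat X\hat X^\top)\hat X=0$, which follows from the lifted first-order condition when $l$ is odd, so you need not transfer the full second-order property.
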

Although this results gives a direct upper-bound to the normalized distance between the local minimum $\hat X$ and $M^*$, the term $\fro{\A_{\Omc}\Mstar}/\lambda_r(\hat X)$ is unknown and could be potentially very large. To address this issue, we present the following main result by assuming more specific structures:
\begin{theorem}[Main Result]
	\label{thm:main_result}
Consider the setting described in Corollary~\ref{cor:general} where $\Omega$ is obtained with i.i.d. observation of probability $p$. We choose an $\epsilon$ that satisfies \eqref{eq:main_mu_gap} for some $0<\rho\le D$. Then applying the lifted formulation to the surrogate $\epsilon$-MC problem yields the following results:

For any rank-1 local minima $\mb{\hat w} = \vecc(\hat X)^{\otimes l}$ of \eqref{eq:lifted_main_highr}, if the conditions of \eqref{eq:thm_socp_lcond_highr} are met with $L_s = 1, \alpha_s = \alpha_{p,\epsilon}^2$ and $\tilde w = 0$, then it is guaranteed to satisfy that:
\begin{equation}\label{eq:thm_bernoulli_gaurantee}
   \|M^* - \hat X \hat X^\top \|_F <  \frac{\lambda_r(\hat X)}{\alpha_{p,\epsilon} }  \sqrt{\tr(M^*)}+ \eps \sqrt{\frac{1-p+\chi}{p-\chi}} \fro{\Mstar}
\end{equation}
for any $\chi\in(0,\min\{p,1-p\})$ with probability at least $1-\delta_{\text{bernoulli}}$ given in \eqref{eq:main_bernoulli_prob}. Additionally, if the ground truth matrix $M^*$ is also $\mu_0$-incoherent with some constant $\mu_0$, the bound can be further strengthened to:
\begin{equation}\label{eq:thm_gnmr_gaurantee}
    \|M^* - \hat X \hat X^\top \|_F < \frac{\lambda_r(\hat X)}{\alpha_{p,\epsilon} } \sqrt{\tr(M^*)} + \eps \frac{\sqrt{(1-p+\chi)/p} \fro{\Mstar}}{1-\eta}
\end{equation}
by imposing the requirements on $p$ and $\epsilon$ in Theorem~\ref{thm:main_gnmr}, where the probability also improves to $\delta \geq \delta_{\text{GNMR}}$ in \eqref{eq:main_gnmr_prob}.
\end{theorem}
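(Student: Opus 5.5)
The plan is a triangle-inequality argument. Let $\Mdag=\Xdag\Xdag^\top$ be a global minimizer of \eqref{eq:eps_mc}. Split
\[
\fro{\Mstar-\hat X\hat X^\top}\le\fro{\Mdag-\hat X\hat X^\top}+\fro{\Mdag-\Mstar}.
\]
The first term will come from the strict-saddle conversion of Theorem~\ref{thm:socp_lifted_noisy}. The second will come from the surrogate-accuracy results: Theorem~\ref{thm:main_bernoulli} for \eqref{eq:thm_bernoulli_gaurantee} and Theorem~\ref{thm:main_gnmr} for \eqref{eq:thm_gnmr_gaurantee}.

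\textbf{Step 1: saddle conversion.} The key observation is that $b=\A_{\Om,\eps}(\Mdag)+\big(b-\A_{\Om,\eps}(\Mdag)\big)$. Under the operator $\A_{\Om,\eps}$, the objective \eqref{eq:eps_mc} is, up to a constant, the noiseless sensing problem whose ground truth is $\Mdag$. This holds because $\Mdag$ is the global minimizer of the quadratic least-squares fit over rank-$r$ PSD matrices. Concretely, I would rewrite $\frac12\|\A_{\Om,\eps}(XX^\top)-b\|^2$ as $\frac12\|\A_{\Om,\eps}(XX^\top-\Mdag)\|^2$ plus a linear term plus a constant. I would then argue that the linear term plays the role of the noise in Theorem~\ref{thm:socp_lifted_noisy}, and that it is absorbed when we take $\tilde w=0$ relative to the reference point $\Mdag$. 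I expect this to be the main obstacle. The residual $b-\A_{\Om,\eps}(\Mdag)$ is not zero, so the cross term must be handled inside the proof of Theorem~\ref{thm:socp_lifted_noisy}. The cleanest route is to re-run that proof with $\Mdag$ as the comparison point, using the first-order optimality of $\Mdag$ within the rank-$r$ PSD set so the cross term has a favorable sign.

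Granting this, apply Theorem~\ref{thm:socp_lifted_noisy} with $L_s=1$ (Lemma~\ref{lem:epsilonMC-rip-general}), $\alpha_s=\alpha_{p,\eps}^2$, $\tilde w=0$, and $\tr(\Mdag)$ in place of $\tr(\Mstar)$. Under the stated $l$ condition, any rank-1 local minimum $\vecc(\hat X)^{\otimes l}$ cannot satisfy \eqref{eq:thm_socp_distance_lower_highr}, since otherwise it would be a strict saddle. Hence
\[
\fro{\Mdag-\hat X\hat X^\top}^2<\lambda_r(\hat X\hat X^\top)\tr(\cdot)/\alpha_{p,\eps}^2.
\]
Taking square roots and using $\lambda_r(\hat X\hat X^\top)^{1/2}=\lambda_r(\hat X)$ (read as a singular value) gives the first summand $\lambda_r(\hat X)\sqrt{\tr(\Mstar)}/\alpha_{p,\eps}$. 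To replace $\tr(\Mdag)$ by $\tr(\Mstar)$, I would either phrase the reference trace as $\tr(\Mstar)$ directly in the re-run proof or absorb the $O(\eps)$ difference, noting that $\tr$ is Lipschitz in Frobenius norm up to $\sqrt n$.

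\textbf{Step 2: surrogate accuracy.} Condition \eqref{eq:main_mu_gap} is assumed, so Theorem~\ref{thm:main_bernoulli} gives $\fro{\Mdag-\Mstar}\le\eps\sqrt{(1-p+\chi)/(p-\chi)}\,\fro{\Mstar}$ with probability at least $1-\delta_{\text{bernoulli}}$. Adding this to Step 1 yields \eqref{eq:thm_bernoulli_gaurantee}; Step 1 is deterministic, so the probability is unchanged.

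\textbf{Step 3: the incoherent case.} Under the additional hypotheses (i)--(iv) of Theorem~\ref{thm:main_gnmr}, replace Step 2 with \eqref{eq:main_gnmr_bound}, i.e.\ $\fro{\Mdag-\Mstar}\le\eps\sqrt{(1-p+\chi)/p}\,\fro{\Mstar}/(1-\eta)$, holding with failure probability $\delta_{\text{GNMR}}$. The same triangle inequality then gives \eqref{eq:thm_gnmr_gaurantee}.
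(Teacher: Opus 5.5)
Your architecture matches the paper's: a triangle inequality through $\Mdag$, a lifting comparison around $\Mdag$ with $\tilde w=0$, and Theorems~\ref{thm:main_bernoulli} and~\ref{thm:main_gnmr} for $\fro{\Mdag-\Mstar}$. The gap is in Step~1, at the point you flag as the main obstacle: your mechanism for the nonzero residual $b-\A_{\Om,\eps}(\Mdag)$ does not work.

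Write $f(M)=\tfrac12\|\A_{\Om,\eps}(M-\Mdag)\|_2^2+\langle\nabla f(\Mdag),M-\Mdag\rangle+f(\Mdag)$. Stationarity gives $\langle\nabla f(\Mdag),\Mdag\rangle=0$. So a favorable sign of the linear term at every rank-$r$ PSD competitor means $\langle\nabla f(\Mdag),M\rangle\ge0$ for all such $M$, i.e.\ $\nabla f(\Mdag)\succeq0$. That is optimality over the whole PSD cone. First-order optimality over the nonconvex rank-$r$ set only yields $\nabla f(\Mdag)\Xdag=0$, and even global optimality there does not give the sign. For instance, take $n=2$, $r=1$, $\Om=\{(1,1),(2,2)\}$, $\Mstar=\mathbf 1\mathbf 1^\top$. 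Then $\Mdag=\frac{1}{1+\eps^2}\mathbf 1\mathbf 1^\top$ is a global minimizer of \eqref{eq:eps_mc}, with $\nabla f(\Mdag)=\frac{\eps^2}{1+\eps^2}\left[\begin{smallmatrix}-1&1\\1&-1\end{smallmatrix}\right]$. At the competitor $M=\frac{1}{1+\eps^2}vv^\top$ with $v=(1,-1)^\top$ (another global minimizer), the linear term equals $-4\eps^2/(1+\eps^2)^2<0$.

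The paper's argument needs no sign information on that term. In the proof of Theorem~\ref{thm:socp_lifted_noisy}, apply RSC at $\hat X\hat X^\top$ with comparison point $\Mdag$, and use $\langle\nabla f(\hat X\hat X^\top),\hat X\hat X^\top\rangle=0$:
\[
\langle\nabla f(\hat X\hat X^\top),\Mdag\rangle\le-\tfrac{\alpha_s}{2}\fro{\Mdag-\hat X\hat X^\top}^2+\big[f(\Mdag)-f(\hat X\hat X^\top)\big].
\]
The bracket is nonpositive simply because $\hat X\hat X^\top$ is itself a rank-$r$ PSD candidate and $\Mdag$ is a global minimizer. This is what the paper means by ``global optimality makes its objective gap nonpositive.'' It replaces the step $f(\Mstar)-f(\hat X\hat X^\top)\le\|\tilde w\|_2^2/2$, which is why $\tilde w=0$ is legitimate despite the nonzero residual. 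The rest of that proof is unchanged: the eigenvector $u$, the bound on $C_2$, and the lifted computation, which involves only $b$ and $\hat X$.

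On the trace: this comparison produces $\tr(\Mdag)$, as you observe, and the paper's one-line sketch also passes over the replacement by $\tr(\Mstar)$. Your ``absorb the $O(\eps)$ difference'' option yields a modified first term rather than the displayed one. So either state the bound with $\tr(\Mdag)$ or supply a separate comparison of the two traces.
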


Theorem~\ref{thm:main_result} follows by applying the noiseless lifting comparison of \citet{ma2024algorithmic} around the surrogate minimizer $\Mdag$: global optimality makes its objective gap nonpositive, without requiring zero measurement residual. Theorems~\ref{thm:main_bernoulli} and~\ref{thm:main_gnmr} bound $\fro{\Mdag-\Mstar}$, so the triangle inequality gives \eqref{eq:thm_bernoulli_gaurantee} and \eqref{eq:thm_gnmr_gaurantee}. The noisy results in Theorem~\ref{thm:socp_lifted_noisy} and Corollary~\ref{cor:general} compare directly to $\Mstar$ and are not used here.

Although the result above holds only for rank-one critical points, its approximately rank-one extension is discussed in Theorem~\ref{thm:main_ext} of Appendix~\ref{app:noisy_lifted}. Theorem~\ref{thm:implicit_bias_gd} establishes approximately rank-one behavior for plain GD under small-initialization and finite-horizon conditions. Appendix~\ref{app:algorithmic_guarantees} gives the escape-curvature constant and a conditional perturbed-GD consequence: complete strict-saddle alternatives, derivative bounds, and trajectory containment yield $\widetilde{\mathcal O}(L_h\Delta_h/\eta_g^2)$ iterations to the protected neighborhood~\citep{jin2017escape}. These additional conditions are distinct from plain GD's implicit bias; a full GD--Tensor PCA runtime bound remains open.

%

\section{Numerical Experiments}
In this section, we evaluate our method\footnote{Code: \url{https://github.com/MOFA-LAB/mc_as_noisyms}. Experiments run on an M1 Max MacBook Pro.} against both classic and state-of-the-art solvers on two families of matrix completion (MC) instances. In addition to the uniformly and independently sampled instances as in Theorem~\ref{thm:main_bernoulli}, we also include the hard instance from \cite{yalccin2022factorization}, which is known to admit exponentially many spurious solutions:
\begin{equation}\label{eq:structured_prob}
    \Omega = \{ (i,i), (i,2k), (2k,i) \mid \forall i \in [n],\; k \in [ \lfloor n/2 \rfloor]\}.
\end{equation}
We compare our methods to: (i) the classic Burer--Monteiro (BM) factorization baseline for \eqref{eq:main_mc}; (ii) the SDP relaxation \citep{candes2010power}; (iii) GNMR~\citep{zilber2022gnmr}, a state-of-the-art BM method that solves a linearized least-squares subproblem each iteration (the corrected reference implementation, Algorithm~3 with $\alpha=-1$, is evaluated separately in Table~\ref{tab:direct_large}); (iv) Lin--RFM~\citep{radhakrishnan2025linear}, an SVD-free IRLS/Schatten-$p$ spectral-reweighting scheme; and (v) our lifted approach with $l=3$. Our method produces a tensor $\mb{w}_T$ after $T$ gradient steps, and Tensor PCA~\citep{ma2024algorithmic} extracts a principal component $X_T \in \RR^{n\times r}$ via $\mb{w}_T \approx \vecc(X_T)^{\otimes l}$, with $X_T$ being returned for \eqref{eq:main_mc}. We declare success if $\|X_T X_T^\top - M^*\|_F \le 0.05$ and report \emph{success rates} over 20 trials per problem, rather than average reconstruction error, because they are robust to outliers and better capture instances where $\epsilon$-MC may be an invalid surrogate. Ablation studies of $\epsilon$ and $p$ are included in Appendix~\ref{app:ablation}.

\begin{figure*}[ht] 
    \centering
    \subfigure[Problem Instance \eqref{eq:structured_prob}]{\label{fig:success1}\includegraphics[width=0.49\textwidth]{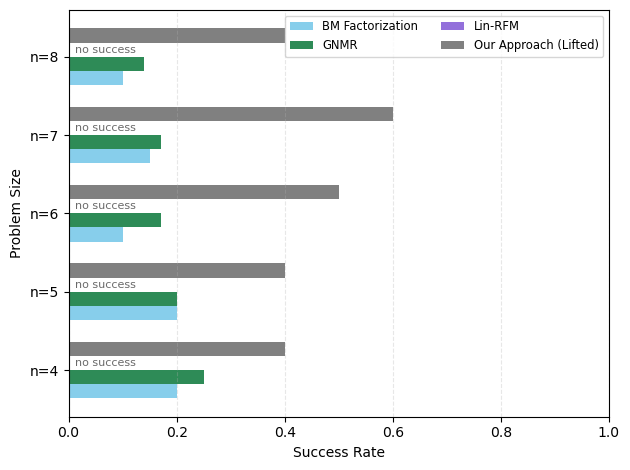}}
    \hfill
    \subfigure[Independent Entries with $p=0.15$]{\label{fig:success2}\includegraphics[width=0.49\textwidth]{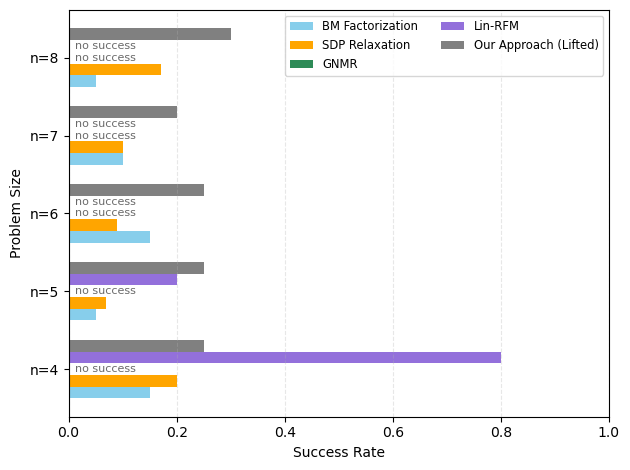}}
    
    \caption{We use $\epsilon=5\times10^{-5}$, a learning rate of $2\times10^{-2}$, and an initialization scale of $\xi=10^{-4}$ for \eqref{eq:lifted_main_highr}. SDP was not deployed for (a) since the solution is not unique in the relaxed regime. For (b), the variability of $\Omega$ necessitates running 10 distinct problem instances for each size $n$.}
    \label{fig:success}
\end{figure*}
We see from figure~\ref{fig:success} that across both instance families and sizes $n$, our lifted approach attains consistently higher success rates. The GNMR baseline in the original figure is competitive on the structured instance but degrades on the independent model, while Lin--RFM, as an IRLS/Schatten-$p$ method, shows strong performance at smaller sizes in the independent model and is less effective on the structured instance. Appendix~\ref{app:error-spatial} examines unsuccessful recoveries on the structured instance: all ten runs fit observed entries to mean-squared error below $5\times10^{-8}$, while seven have missing-entry error about $2.67$. The error maps locate these failures on unobserved entries.

To separate perturbation from lifting, we compare direct optimization of MC and $\epsilon$-MC on rank-five $n=200$ instances. Each solver uses $10$ matched problems, masks, and initializations per sampling rate, with $\epsilon=5\times10^{-5}$. Success means Frobenius error at most $0.05$; for reference GNMR (Algorithm~3, $\alpha=-1$), we evaluate $UV^\top$. Table~\ref{tab:direct_large} reports success rates (\%) and their change in percentage points.

\begin{table}[ht]
\centering
\caption{Direct recovery at $n=200$: perturbation changes only Adam's success rate.}
\label{tab:direct_large}
\renewcommand{\arraystretch}{1.15}
\setlength{\tabcolsep}{3pt}
\begin{tabular*}{\textwidth}{@{\extracolsep{\fill}}l*{3}{rrr}@{}}
\toprule
& \multicolumn{3}{c}{$p=0.02$} & \multicolumn{3}{c}{$p=0.05$} & \multicolumn{3}{c}{$p=0.08$}\\
\cmidrule(lr){2-4}\cmidrule(lr){5-7}\cmidrule(l){8-10}
Solver & MC & $\epsilon$-MC & $\Delta$ (pp) & MC & $\epsilon$-MC & $\Delta$ (pp) & MC & $\epsilon$-MC & $\Delta$ (pp)\\
\midrule
Adam & 0 & 0 & 0 & 30 & 40 & $\mathbf{+10}$ & 70 & 60 & $\mathbf{-10}$\\
GNMR (reference) & 0 & 0 & 0 & 0 & 0 & 0 & 0 & 0 & 0\\
GD (Armijo) & 0 & 0 & 0 & 90 & 90 & 0 & 100 & 100 & 0\\
L-BFGS & 0 & 0 & 0 & 90 & 90 & 0 & 100 & 100 & 0\\
\bottomrule
\end{tabular*}
\end{table}

Here $\Delta$ is the $\epsilon$-MC success rate minus the MC success rate. Only two of the $120$ paired outcomes change, both for Adam, showing that perturbation alone does not systematically resolve direct solvers' failures. Figure~\ref{fig:success} tests the combined approach, whereas Table~\ref{tab:direct_large} isolates perturbation within each solver. Together they illustrate how perturbation provides a regular sensing surrogate while lifting addresses the remaining optimization difficulty. Appendix~\ref{app:direct_controls} gives further controls.

\paragraph{Choosing $\epsilon$.}
Appendix~\ref{app:ablation} studies the structured $n=6$, $l=3$ instance with $100$ trials per perturbation. At the $0.05$ threshold, success is approximately $26\%$ at $\epsilon=5\times10^{-4}$ and $36\%$ at $5\times10^{-5}$, versus about $9\%$ for BM, but drops to $11\%$ at $10^{-5}$. This illustrates the tradeoff between perturbation bias and conditioning. The sampling-rate sweep shows the largest improvement over BM at small $p$.

\paragraph{Limitations.}
Choosing $\epsilon$ involves a tradeoff: smaller perturbations reduce surrogate bias but weaken the guaranteed curvature, making optimization harder. Explicit order-$l$ tensors require $\mathcal O((nr)^l)$ storage, limiting our implementation to small-scale problems. Simulated Oracle Direction (SOD)~\citep{shen2026simulated} uses lifting to construct escape candidates in the original factor space without storing the tensor, alleviating this cost. Appendix~\ref{app:sod} reports favorable results on larger structured instances; full adoption for noisy $\epsilon$-MC remains future work. Our guarantees concern symmetric PSD factors, a two-level mask, and rank-one or approximately rank-one lifted points. Broader settings and complete runtime and recovery guarantees, including Tensor PCA and factor extraction, remain further directions.

\section{Conclusion}

In conclusion, we shift from the classical goal of exact recovery of $M^*$ to solving a carefully designed surrogate that remains provably close to $M^*$ and whose landscape is amenable to over-parameterization. Our analysis separates surrogate accuracy from the optimization benefits of lifting, and the experiments illustrate how perturbation and over-parametrization can complement one another in matrix completion.

\clearpage
\begin{ack}
This work was supported by the National Natural Science Foundation of China (Grant No.~62506314), the Research Grants Council of Hong Kong (Grant No.~21208525), and City University of Hong Kong (Project No.~9382001).
\end{ack}

\bibliography{references.bib}


\appendix
\newpage

\section{Additional Ablation Studies}
\label{app:ablation}

In this section we complement our main experiments with ablation studies on the
user-chosen perturbation level $\epsilon$ and the sampling probability $p$ for
the structured instance in problem~\eqref{eq:structured_prob}. Throughout, we
consider the setting $n = 6$, depth $\ell = 3$, and initialization magnitude
$10^{-4}$.

\begin{figure}[ht]
    \centering
    \includegraphics[width=0.8\columnwidth]{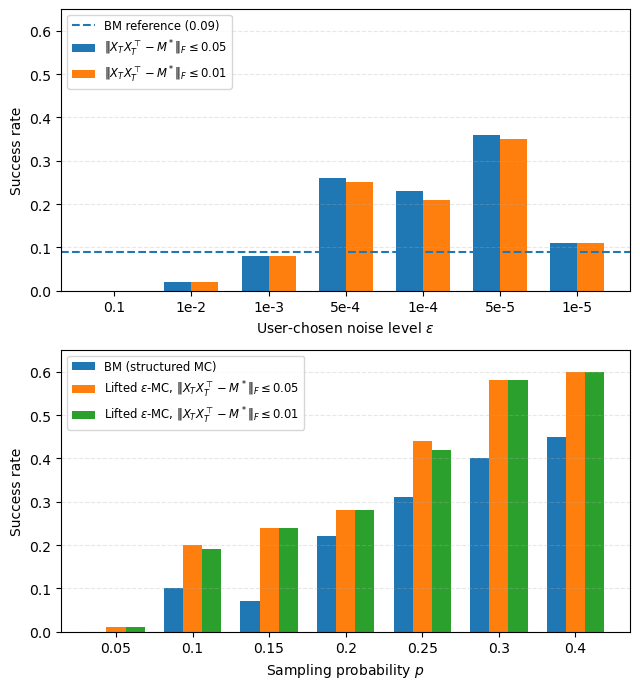}
    \caption{Ablation studies for the lifted $\epsilon$-MC formulation with $n=6$, $\ell = 3$,
    initialization magnitude $10^{-4}$. \emph{Top:} Success rates with respect to the perturbation level $\epsilon$, under two
    reconstruction criteria
    $\|X_T X_T^\top - M^\star\|_F  \le 0.05$ and
    $\|X_T X_T^\top - M^\star\|_F  \le 0.01$, together with the BM baseline
    success rate (dashed line) for the
    structured instance~\eqref{eq:structured_prob}. Success rate is polled from 100 independent trials each. \emph{Bottom:} Success rates of the independently sampled problem with respect to sampling probability $p$, comparing the BM approach and the lifted
    $\epsilon$-MC method under both reconstruction criteria. For each $p$ level, 10 random masks are generated and 20 trials each are run for a total of 200 trials.}
    \label{fig:epsilon-p-ablation}
\end{figure}

\paragraph{Ablation on $\epsilon$.}
For the first study we fix the sampling pattern as in
problem~\eqref{eq:structured_prob} and vary the perturbation level
$\epsilon \in \{0.1, 10^{-2}, 10^{-3}, 5{\times}10^{-4}, 10^{-4},
5{\times}10^{-5}, 10^{-5}\}$.
For each $\epsilon$, we record the empirical success rates under the two
reconstruction criteria
$\|X_T X_T^\top - M^\star\|_F  \le 0.05$ and
$\|X_T X_T^\top - M^\star\|_F  \le 0.01$.
As shown in the top panel of Figure~\ref{fig:epsilon-p-ablation}, the success
curves for the two thresholds closely track each other, indicating that this
problem behaves as a ``success or failure'' scenario rather than exhibiting a
gradual degradation in reconstruction quality.

These trends are consistent with the error bound in Theorem~\ref{thm:main_result}, which decomposes into a \emph{computational part}, proportional to $1/\epsilon$ (this rate is rather conservative and could be improved, please see Lemma~\ref{lem:epsilonMC-rip-general}), and an \emph{accuracy part} $e_1$ capturing the mismatch between the $\epsilon$-perturbed objective and the original matrix
completion problem. For large $\epsilon$ (e.g., $0.1$ and $10^{-2}$), the
perturbation error $e_1$ dominates and the success rate is low even if the
optimizer reaches a global minimum of the $\epsilon$-MC landscape. As $\epsilon$
decreases into an intermediate regime (roughly between $5{\times}10^{-4}$ and
$5{\times}10^{-5}$), the two terms balance and the success rate peaks. When
$\epsilon$ becomes too small (e.g., $10^{-5}$), the $1/\epsilon$ factor renders
the problem numerically more challenging, and the empirical success rate
deteriorates. In this example, the useful range spans approximately an order of magnitude; $\epsilon$ requires tuning, and this sweep does not supply a universal selection rule.

\paragraph{Ablation on sampling probability $p$.}
In the second study we fix $\epsilon = 5{\times}10^{-4}$ and vary the sampling
probability $p \in \{0.05, 0.1, 0.15, 0.2, 0.25, 0.3, 0.4\}$.
For each $p$, we again estimate the success rates over 10 randomly generated masks with 20 trials each amounting to 200 trials in total for both the BM method and our approach. The results are summarized in the bottom panel of
Figure~\ref{fig:epsilon-p-ablation}.

As $p$ increases, both methods become more successful, as expected from having
more observed entries. The lifted $\epsilon$-MC approach exhibits a clear
advantage in the low-$p$ regime, where the BM formulation struggles to recover
the independently sampled instance. At larger $p$, the BM baseline itself achieves high
success rates, which naturally reduces the margin between the two methods. At
the same time, further gains for our approach appear limited by the same
computational term in the error bound: while the accuracy part improves rapidly
with $p$, the difficulty of optimizing the lifted problem becomes the dominant
factor. 

Overall, these ablation studies illustrate how the choice of $\epsilon$
and the sampling probability $p$ jointly govern the empirical behavior of the
lifted $\epsilon$-MC formulation.

\section{Additional Details for Noisy Lifted Framework} \label{app:noisy_lifted}

\subsection{Additional Definitions}
\begin{definition}[RIP] \citep{candes2009exact} \label{def:rip}
    Given a natural number $p$, the linear map $\mathcal{A}: \mathbb{R}^{n \times n} \mapsto \mathbb{R}^{m}$ is said to satisfy $\delta_{p}$-RIP if there is a constant $\delta_{p} \in [0,1)$ such that
    \[ (1-\delta_{p})\|M \|_F^2 \leq \| \mathcal{A} (M) \|^2 \leq (1 + \delta_{p}) \|M\|_F^2 \]
    holds for matrices $M \in \mathbb{R}^{n \times n}$ satisfying $\rk(M) \leq p$.
\end{definition}

\begin{definition}[Tensor]
	As a generalization of the way vectors are used to parametrize finite-dimensional vector spaces, we use \emph{arrays} to parametrize tensors generated from product of finite-dimensional vector spaces, as per \cite{comon2008symmetric}. In particular, we define an $l$-way array as such:
	\[
		\mathbf{a} = \{a_{i_1i_2\dots i_l} | 1 \leq i_k \leq n_k, 1 \leq k \leq l \} \in \RR^{n_1 \times \dots \times n_l}
	\]
	Note that in this paper tensors and arrays can be regarded as synonymous since there exists an isomorphism between them. Moreover, if $n_1 = \dots = n_l$, then we call this tensor(array) an $l$-order(way), $n$-dimensional tensor. For the convenience of tensor representation, we use the notation $\RR^{n \circ l}$ with $n \circ l \coloneqq n \times \dots \times n$. In this work, tensors are denoted with bold variables, and other fonts are reserved for matrices, vectors, and scalars unless specified otherwise.
\end{definition}

\begin{definition}[Symmetric Tensor]
	Similar to the definition of symmetric matrices, for an order-$l$ tensor $\mb{a}$ with the same dimensions (i.e., $n_1 = \dots = n_l$), also called a cubic tensor, it is said that the tensor is symmetric if  its entries are invariance under any permutation of their indices:
	\[
		a_{i_{\sigma(1)} \cdots i_{\sigma(l)}} = a_{i_1 \cdots i_l} \quad \forall \sigma, \quad i_1, \dots, i_l \in \{1,\dots,n\}
	\]
	where $\sigma \in \mathcal{G}_l$ denotes a specific permutation and $\mathcal{G}_l$ is the symmetric group of permutations on $\{1, \dots, l\}$. We denote the set of symmetric tensors as $\mathrm{S}^l(\RR^n)$.
\end{definition}

\begin{definition}[Rank of Tensors]
	The rank of a cubic tensor $\mb{a} \in \RR^{n \circ l}$ is defined as
	\[
		\rk(\mb{a}) = \min\{r | \mb{a} = \sum_{i=1}^r u_i \otimes v_i \otimes \cdots \otimes w_i \}
	\]
	for some vector $u_i, \dots, w_i \in \RR^n$. Furthermore, according to \cite{kolda2015numerical}, if $\mb{a}$ is a symmetric tensor, then it can be decomposed as:
	\[
		\mb{a} = \sum_{i=1}^r \lambda_i u_i \otimes \dots \otimes u_i \coloneqq \sum_{i=1}^r \lambda_i u_i^{\otimes l}
	\]
	and the rank is conveniently defined as the number of nonzero $\lambda_i$'s, which is very similar to the rank of symmetric matrices indeed. The most important concept in our paper is rank-1 tensors, and for any tensor $\mb{a}$, a necessary and sufficient condition for it to be rank-1 is that 
	\[
		\mb{a} = u^{\otimes l}
	\]
	for some $u \in \RR^n$.
	
\end{definition}

\begin{definition}[Tensor Multiplication]
	Outer product is an operation carried out on a pair  of tensors, denoted as $\otimes$. The outer product of 2 tensors $\mathbf{a}$ and $\mathbf{b}$, respectively of orders $l$ and $p$, is a tensor of order $l+p$, denoted as $\mb{c} = \mb{a} \otimes \mb{b}$ such that:
	\[
		c_{i_1\dots i_l j_1 \dots j_p} = a_{i_1\dots i_l} b_{j_1 \dots j_p}
	\]
	When the 2 tensors are of the same dimension, this product is such that $\otimes: \RR^{n \circ l} \times \RR^{n \circ p} \mapsto \RR^{n \circ (l+p)}$. Henceforth, we use the shorthand notation
	\[
		 \underbrace{a \otimes \dots \otimes a}_{l \ \text{times}}\coloneqq  a^{\otimes l}
	\]
	We also define an inner product of two tensors. The mode-$q$ inner product between the 2 aforementioned tensors having the same $q$-th dimension is denoted as $\langle \mb{a}, \mb{b} \rangle_q$. Without loss of generality, assume that $q = 1$ and 
	\[
		\left[ \langle \mb{a}, \mb{b} \rangle_q \right]_{i_2\dots i_l j_2 \dots j_p} = \sum_{\alpha=1}^{n_q} a_{\alpha i_2\dots i_l} b _{\alpha j_2 \dots j_p}
	\]
	Note that when we write $\langle \cdot, \cdot \rangle_q$, we count the $q$-th dimension of the first entry. Indeed, this definition of inner product can also be trivially extended to multi-mode inner products by just summing over all modes, denoted as $\langle \mb{a}, \mb{b} \rangle_{q, \dots, s}$.
\end{definition}

\begin{lemma}[Section 10.2 \cite{petersen2008matrix}]
For four arbitrary matrices $A,B,C,D$ of compatible dimensions, it holds that

\begin{equation}
	\langle A \otimes B, C\otimes D \rangle_{2,4} = AC \otimes BD
\end{equation}
	
\end{lemma}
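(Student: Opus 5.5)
The plan is a direct entrywise verification. Both sides are order-four arrays, so it suffices to fix an index convention for the multi-mode contraction $\langle\cdot,\cdot\rangle_{2,4}$ and compare coordinates. Let $A\in\RR^{n_1\times n_2}$, $B\in\RR^{n_3\times n_4}$, $C\in\RR^{n_2\times n_5}$ and $D\in\RR^{n_4\times n_6}$, so that the products $AC$ and $BD$ are defined.

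By the definition of the outer product, $[A\otimes B]_{ijkl}=A_{ij}B_{kl}$ and $[C\otimes D]_{jmln}=C_{jm}D_{ln}$. Following the convention stated in the tensor-multiplication definition, the subscripts $2,4$ refer to modes of the first argument. Modes $2$ and $4$ of $A\otimes B$ carry the column index of $A$ and the column index of $B$. Compatibility of dimensions forces these to be paired with the row index of $C$ and the row index of $D$, i.e.\ modes $1$ and $3$ of $C\otimes D$.

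Summing over the two contracted indices and using distributivity of the finite double sum gives
\[
\sum_{j}\sum_{l}A_{ij}B_{kl}C_{jm}D_{ln}
=\Big(\sum_j A_{ij}C_{jm}\Big)\Big(\sum_l B_{kl}D_{ln}\Big)
=(AC)_{im}(BD)_{kn}.
\]
The right-hand side is exactly $[AC\otimes BD]_{imkn}$ by the definition of the outer product.

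The only real obstacle is bookkeeping of the surviving modes. The raw contraction lists the free indices as $(i,k,m,n)$: first the free modes of the first tensor, then those of the second. The outer product $AC\otimes BD$ lists them as $(i,m,k,n)$. I would resolve this by stating explicitly that $\langle\cdot,\cdot\rangle_{2,4}$ returns its free modes grouped factor-by-factor. Under that convention the index of $A$ is paired with the index of $C$, and the index of $B$ with the index of $D$, which is how this identity is used in the lifted formulation in \citet{ma2024algorithmic}. Equivalently, one can note that the two orderings differ by a fixed mode permutation, and that this is precisely the role of the permutation tensor $\mb{P}$ in \eqref{eq:lifted_main_highr}. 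With this convention fixed, the identity holds coordinatewise and therefore as tensors.
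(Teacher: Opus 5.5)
The paper gives no argument of its own for this lemma. It cites the Kronecker mixed-product rule $(A\oslash B)(C\oslash D)=AC\oslash BD$ from the referenced cookbook and restates it in contraction notation. Your entrywise computation is the standard proof of that rule, carried out on four-way arrays, and under the conventions you fix it is correct. What it adds over the citation is that it makes explicit two conventions the paper leaves implicit.

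First, pairing modes $2,4$ of $A\otimes B$ with modes $1,3$ of $C\otimes D$ is an interpretation you are imposing. It is a reasonable one, since it is the only pairing that typechecks for rectangular factors. But it is not what the displayed definition of $\langle\cdot,\cdot\rangle_q$ gives when read literally as contracting mode $q$ of both arguments; that reading yields $AC^\top\otimes BD^\top$.

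Second, your ordering caveat is substantive, not cosmetic. The raw order $(i,k,m,n)$ is exactly the Kronecker indexing (row $(i,k)$, column $(m,n)$) of the cited identity. The outer product $AC\otimes BD$ is instead indexed $(i,m,k,n)$, and the two arrays genuinely differ; take $A=B=I$, $C=e_1e_2^\top$, $D=e_1e_1^\top$. So swapping the two middle modes is precisely the passage from the paper's $\oslash$ to its $\otimes$, and some convention is unavoidable.

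One correction: this permutation is not the role of $\mb{P}$, which only undoes vectorization via $\langle\mb{P},\vecc(X)\rangle_3=X$. A better reason your convention is harmless is that where the paper actually invokes the lemma, \eqref{eq:highr_vectorization}, every mode of $\vecc(X)^{\otimes l}$ is contracted. No free modes of the second argument survive, so the ordering question never arises there.
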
 \label{lem:kron_iden}

\begin{definition}[Variational Eigenvalue of Tensors \cite{ma2024algorithmic}]\label{def:v_eigenvalues}
	For a given tensor $\mb{w} \in \RR^{n \circ l}$, we define its $k^{th}$ variational eigenvalue (v-Eigenvalue) $\lambda_k^v(\mb{w})$ as
	\[
		\lambda_k^v(\mb{w}) \coloneqq \max_{\substack{S \\ \dim(S)=k}} \min_{\mb{u} \in S} \frac{|\langle \mb{w}, \mb{u} \rangle |}{\|\mb{u}\|^2_F}, \quad k \in [n]
	\]
	where $S$ is a subspace of $\RR^{n \circ l}$ that is spanned by a set of orthogonal, symmetric, rank-1 tensors. Its dimension denotes the number of orthogonal tensors that span this space. 
\end{definition}

\subsection{Formulation Details}
As noted in our main formulation \eqref{eq:lifted_main_highr}, the decision variable $\mb{w}$ is a tensor of dimension $nr \times \dots \times nr$, since it serves as a repeated outer product of $\vecc(X)$ with $X \in \RR^{n \times r}$ being our original decision variable in \eqref{eq:main_ms} (here we assume $\rsearch = r$). The permutation $\mb{P}$ is needed in order to convert $\mb{w} \in \RR^{nr \circ l}$ back to $\RR^{[n \times r] \circ l}$ in order to do meaningful inner products. $\mb{P} \in \RR^{n \times r \times nr}$ is defined as 
\[
	\langle \mb{P}, \vecc(X) \rangle_3 = X \quad \forall X \in \RR^{n \times r}, n, r \in \mathbb{Z^+}
\]
Such $\mb{P}$ can be easily constructed via filling appropriate scalar "1"s in the tensor. Via Lemma~\ref{lem:kron_iden}, we also know that 
\begin{equation}\label{eq:highr_vectorization}
	 \langle \mb{P}^{\otimes l}, \vecc(X)^{\otimes l} \rangle_{3*[l]}= (\langle \mb{P}, \vecc(X) \rangle_3)^{\otimes l}= X^{\otimes l}
\end{equation}
Notationally, we abbreviate $\langle \mb{P}^{\otimes l}, \mb{w} \rangle_{3*[l]}$ as $\mb{P}(\mb{w})$ for enhanced readability for an arbitrary tensor $\mb{w}$ with dimension greater or equal to $2$. 

Since we also make extensive use of first and second order critical points of \eqref{eq:lifted_main_highr}, we present them here for accessibility:

\begin{lemma}\label{lem:cp_lifted_highr}
	The tensor $ \mb{\hat w} \in \RR^{nr \circ l}$ is an SOP of \eqref{eq:lifted_main_highr} if and only if
	\begin{subequations}
		\begin{gather}
			\langle \nabla f^l( \langle \mb{P}(\mb{\hat w}), \mb{P}(\mb{\hat w}) \rangle_{2*[l]}) , \mb{P}(\mb{\hat w})\rangle_{2*[l]} = 0, \label{eq:focp_lifted_highr}\\
		\begin{split}
			&2 \langle \nabla f^l(\langle \mb{P}(\mb{\hat w}), \mb{P}(\mb{\hat w}) \rangle_{2*[l]}), \langle \mb{P}(\Delta), \mb{P}(\Delta) \rangle_{2*[l]} + \\
			\| \langle \mb{A}^{\otimes l} , \langle \mb{P}(\mb{\hat w}),& \mb{P}(\Delta) \rangle_{2*[l]} + \langle \mb{P}(\Delta), \mb{P}(\mb{\hat w}) \rangle_{2*[l]} \rangle \|_F^2 \geq 0 \quad \forall \Delta \in \RR^{nr \circ l}
		\end{split}
		\label{eq:socp_lifted_highr}
		\end{gather}
	\end{subequations}
	with \eqref{eq:socp_lifted_highr} being a necessary and sufficient condition for $\mb{\hat w}$ to be a FOP and $\nabla f_w^l( \mb{M})$ is defined as 
	\begin{equation}
	\nabla f_w^l( \mb{M}) =   \langle (\mb{A}^{\otimes l})^*\mb{A}^{\otimes l}, \mb{M} \rangle - \left[ \langle \mb{A}^* \mb{A}, M^* \rangle + \langle \mb{A}^*, \tilde w \rangle \right]^{\otimes l}
\end{equation}
\end{lemma}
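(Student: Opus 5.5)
We compute the first and second derivatives of $h^l(\mb w)=f^l(\langle \mb P(\mb w),\mb P(\mb w)\rangle_{2*[l]})$ directly and apply the usual characterization of second-order points: $\nabla h^l(\mb{\hat w})=0$ and $\nabla^2 h^l(\mb{\hat w})[\Delta,\Delta]\ge 0$ for all $\Delta$. The plan is to treat $h^l$ as the composition of the quadratic map $\mb w\mapsto \mb M(\mb w):=\langle \mb P(\mb w),\mb P(\mb w)\rangle_{2*[l]}$ with the convex quadratic $f^l$. Here $f^l(\mb M)=\|\langle \mb A^{\otimes l},\mb M\rangle-\tilde b^{\otimes l}\|_F^2$, with $\tilde b=\A(\Mstar)+\tilde w$. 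We will use the same normalization as the paper, absorbing the factor $2$ from the squared norm into $\nabla f^l$ or leaving it as a harmless positive constant.

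\textbf{Step 1 (the gradient of $f^l$).} Expand $f^l(\mb M+\mb E)-f^l(\mb M)$ to get
\[
\nabla f^l(\mb M)=\langle (\mb A^{\otimes l})^*\mb A^{\otimes l},\mb M\rangle-\langle (\mb A^{\otimes l})^*,\tilde b^{\otimes l}\rangle .
\]
The adjoint of a Kronecker power is the power of the adjoint (the identity $\langle A\otimes B,C\otimes D\rangle_{2,4}=AC\otimes BD$ applied mode by mode), so
\[
\langle (\mb A^{\otimes l})^*,\tilde b^{\otimes l}\rangle=(\mb A^*\tilde b)^{\otimes l}=\big[\langle\mb A^*\mb A,\Mstar\rangle+\langle \mb A^*,\tilde w\rangle\big]^{\otimes l}.
\]
This is the displayed formula for $\nabla f^l_w$.

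\textbf{Step 2 (the first-order condition).} Differentiate the quadratic map. Its directional derivative is $D\mb M(\mb w)[\Delta]=\langle \mb P(\mb w),\mb P(\Delta)\rangle_{2*[l]}+\langle \mb P(\Delta),\mb P(\mb w)\rangle_{2*[l]}$, since $\mb P$ is linear. Hence
\[
Dh^l(\mb w)[\Delta]=\big\langle \nabla f^l(\mb M(\mb w)),\,\langle \mb P(\mb w),\mb P(\Delta)\rangle_{2*[l]}+\langle \mb P(\Delta),\mb P(\mb w)\rangle_{2*[l]}\big\rangle .
\]
The gradient $\nabla f^l$ is symmetric under swapping the row and column index groups, because $\mb A^*\mb A$ maps into symmetric matrices when the $A_i$ are symmetric (as for masks). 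So the two terms coincide. Moving $\mb P(\Delta)$ out via the adjoint of $\mb P$ gives $\nabla h^l(\mb w)=2\,\mb P^*\big(\langle\nabla f^l(\mb M(\mb w)),\mb P(\mb w)\rangle_{2*[l]}\big)$. The operator $\mb P$ is a bijective permutation, so $\mb P^*$ is injective, and vanishing of the gradient is equivalent to \eqref{eq:focp_lifted_highr}.

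\textbf{Step 3 (the second-order condition).} Since $f^l$ is quadratic and $\mb M$ is quadratic, the second directional derivative has exactly two pieces. The first is $\langle \nabla f^l(\mb M),D^2\mb M[\Delta,\Delta]\rangle$, with $D^2\mb M[\Delta,\Delta]=2\langle\mb P(\Delta),\mb P(\Delta)\rangle_{2*[l]}$. The second is the Gauss--Newton term $\|\langle\mb A^{\otimes l},D\mb M[\Delta]\rangle\|_F^2$. Up to the common positive factor, this gives exactly the left side of \eqref{eq:socp_lifted_highr}. Nonnegativity for all $\Delta$ is then the second-order condition, and together with \eqref{eq:focp_lifted_highr} this is the SOP characterization. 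The statement's remark about FOP is read as the standard pairing: \eqref{eq:focp_lifted_highr} characterizes FOPs, and adding \eqref{eq:socp_lifted_highr} characterizes SOPs.

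\textbf{Main obstacle.} The only delicate point is the index bookkeeping. We must check that the adjoint of $\mb A^{\otimes l}$ acts modewise, which is what lets the noisy data term factor as an $l$-th tensor power. We must also check that symmetry lets the two cross terms in $D\mb M$ merge. We would verify both by writing coordinates for $l=1$ and inducting on $l$ using the Kronecker identity.
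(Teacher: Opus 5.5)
Your argument is correct, but it does not follow the paper's route. The paper gives no computation: it asserts that the proof of Lemma~7 in Ma, Lavaei and Sojoudi (2024) goes through after replacing $b=\A(\Mstar)$ by $\tilde b=\A(\Mstar)+\tilde w$. You instead run the chain rule for $h^l=f^l\circ\mb M$ from scratch. This gives a self-contained explanation of why that substitution is the only change. The noise enters only the constant term of $\nabla f^l$, and your identity $(\mb A^{\otimes l})^*\tilde b^{\otimes l}=(\mb A^*\tilde b)^{\otimes l}$ shows that this term remains an $l$-th tensor power. The second-order condition keeps the same form, with $\tilde b$ entering only through $\nabla f^l$.

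Your computation also exposes a normalization that the citation hides. The relative coefficient $2$ in \eqref{eq:socp_lifted_highr} is correct because $\nabla f^l$ is taken without the factor $2$ from the squared norm; equivalently, one computes the Hessian of $\tfrac12 h^l$. So that factor is harmless only when treated consistently in both pieces, as your Step~3 does. Your reading of the FOP remark as a typo for \eqref{eq:focp_lifted_highr} is also right.

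One justification should be stated as a hypothesis rather than attributed to masks. Merging the two cross terms in Step~2 requires $\nabla f^l(\mb M(\hat{\mb w}))$ to be invariant under swapping the two groups of $l$ indices. This holds for symmetric $A_i$. It also holds for the mask operator when $\Omega$ is symmetric, since then $\mb A^*\mb A$ commutes with transposition and $\mb A^*\tilde b$ is symmetric. However, the individual mask matrices $e_ae_b^\top$ are not symmetric, and for an asymmetric Bernoulli mask the invariance need not hold. Unlike at the matrix level, symmetrizing the $A_i$ is not free in the lifted problem. The reason is that $\langle\mb P(\mb w),\mb P(\mb w)\rangle_{2*[l]}$ is invariant only under the simultaneous swap of all index pairs, not under swapping each pair separately. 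The paper relies on the same implicit assumption of symmetric sensing matrices, so this is not a gap relative to the paper's claim.
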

The proof to this lemma is highly technical and can be obtained by slightly changing the proof to Lemma 7 in \cite{ma2024algorithmic} by changing $b = \mathcal{A}(M^*)$ to $\tilde b = \mathcal{A}(M^*) + \tilde w$ defined above.

Here we present the full theorem regarding implicit bias in \eqref{eq:lifted_main_highr}:

\begin{theorem}\label{thm:implicit_bias_gd}
	Consider a finite-horizon gradient descent trajectory $\{ \mb{w}_t \}_{t \in [T]}$ of \eqref{eq:lifted_main_highr} with $\mb{w}_{t+1} = \mb{w}_{t} - \eta \nabla h^l(\mb{w}_{t})$ starting from the initialization $\mb{w}_0 = \xi x_0^{\otimes l}$ with $\xi \in \RR$ denoting the scale of the initialization, $\eta$ representing the step-size and $x_0 \in \RR^{nr}$ being an arbitrary vector with $\|x_0\|^2_2 = 1$. Then there exists $t(\kappa,l) \geq 1$ and $\kappa < 1$ such that
		\begin{equation}\label{eq:w_eig_kappa}
			\frac{\lambda^v_{2}(\mb{w}_t)}{\lambda^v_1(\mb{w}_t)} \leq \kappa, \qquad \forall t \in [t(\kappa,l), T]
		\end{equation}
		if the initialization scale $\xi$ is sufficiently small, where $t(\kappa, l)$ is expressed as
		\begin{equation}\label{eq:t_kappa_l}
			t(\kappa, l) = \left \lceil \ln\left( \frac{\|x_0\|^l_2}{\kappa |v_1^\top x_0|^l}\right) \ln\left( \frac{ 1+\eta \sigma_1^l(U)}{1+\eta \sigma_2^l(U)} \right)^{-1} \right \rceil
		\end{equation}
		where $\sigma_{1}(U)$ and $\sigma_{2}(U)$ denote the first and second singular values of $U$ and $v_1,v_2$ are the associated singular vectors, with
		\begin{equation}
			U = \langle \mb{A}_r, \tilde b \rangle_{1} \in \RR^{nr \times nr}, \quad  \mb{A}_r \coloneqq I_r \oslash_{2,3} \mb{A}
		\end{equation}
\end{theorem}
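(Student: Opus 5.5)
The plan is to analyze gradient descent on $h^l$ in the small-initialization regime, where the dynamics are governed by the linearization around $\mb{w}=0$. Since $h^l(\mb{w}) = \| \langle \mb{A}^{\otimes l}, \langle \mb{P}(\mb{w}),\mb{P}(\mb{w})\rangle_{2*[l]}\rangle - \tilde b^{\otimes l}\|_F^2$, expanding the square shows that the leading-order term of the gradient at small $\mb{w}$ is the cross term with $\tilde b^{\otimes l}$. Using \eqref{eq:highr_vectorization} and Lemma~\ref{lem:kron_iden} to commute the permutation $\mb{P}$ and the Kronecker structure, this cross term is, up to a constant, the multilinear map
\[
\mb{w}\mapsto \langle U^{\otimes l}, \mb{w}\rangle, \qquad U=\langle \mb{A}_r,\tilde b\rangle_1 ,
\]
which acts on each of the $l$ modes by the symmetric matrix $U\in\RR^{nr\times nr}$. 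The quartic term in $\mb{w}$ is of order $\|\mb{w}\|^3$ and is negligible while $\|\mb{w}_t\|$ stays of order $\xi$ times a bounded growth factor.

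\textbf{Step 1 (linearized dynamics).} Show that, up to an absorbed constant in $\eta$, the iteration $\mb{w}_{t+1}=\mb{w}_t-\eta\nabla h^l(\mb{w}_t)$ reads
\[
\mb{w}_{t+1}=(\mathcal I+\eta\,\mathcal U^{\otimes l})\mb{w}_t+\mb{e}_t, \qquad \|\mb{e}_t\|_F=\mathcal O(\|\mb{w}_t\|_F^3).
\]
With the spectral decomposition $U=\sum_i\sigma_i v_iv_i^\top$, the operator $\mathcal U^{\otimes l}$ is diagonal in the basis $v_{i_1}\otimes\cdots\otimes v_{i_l}$, with eigenvalues $\prod_k\sigma_{i_k}$. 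The initialization $\xi x_0^{\otimes l}$ has coefficients $\xi\prod_k (v_{i_k}^\top x_0)$. The noiseless iterate $\bar{\mb{w}}_t$ therefore has an explicit expansion in which the $v_1^{\otimes l}$ component grows like $(1+\eta\sigma_1^l)^t$, while every other component grows at most like $(1+\eta\sigma_2^l)^t$.

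\textbf{Step 2 (bound the v-eigenvalue ratio).} Lower bound $\lambda_1^v(\bar{\mb{w}}_t)$ by testing Definition~\ref{def:v_eigenvalues} with $\mb{u}=v_1^{\otimes l}$; this gives at least $\xi|v_1^\top x_0|^l(1+\eta\sigma_1^l)^t$.

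Upper bound $\lambda_2^v$ by a max–min argument. Any $2$-dimensional subspace spanned by orthogonal symmetric rank-one tensors contains a unit $\mb{u}$ orthogonal to $v_1^{\otimes l}$. Its inner product with $\bar{\mb{w}}_t$ is bounded by the norm of the non-leading part, which is at most $\xi\|x_0\|^l(1+\eta\sigma_2^l)^t$.

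Taking the ratio, requiring it to be at most $\kappa$, and solving for $t$ yields exactly $t(\kappa,l)$ in \eqref{eq:t_kappa_l}. The ratio stays below $\kappa$ for all later $t$, by monotonicity of the geometric factor.

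\textbf{Step 3 (control the error over the finite horizon).} Bound the accumulated perturbation $\sum_s(1+\eta\sigma_1^l)^{t-s}\|\mb{e}_s\|$ by $\mathcal O(\xi^3 C_T)$, where $C_T$ depends on $T$ and $(1+\eta\sigma_1^l)^{3T}$. Since the signal scales like $\xi$, choosing $\xi$ small enough (depending on $T$ and the slack in $\kappa$) makes the perturbation a small fraction of the signal. Weyl-type stability of the max–min quantities, which are $1$-Lipschitz in $\mb{w}$ in Frobenius norm, then preserves the ratio bound with a slightly enlarged $\kappa<1$.

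The main obstacle is Step 1: verifying carefully that the lowest-order gradient term is precisely $\langle U^{\otimes l},\cdot\rangle$ acting mode-wise, with the $\mb{P}$-contractions matching $\mb{A}_r=I_r\oslash_{2,3}\mb{A}$. A second difficulty is making the Step 2 bound on $\lambda_2^v$ rigorous for v-eigenvalues rather than ordinary singular values. For that I would first try the orthogonal-complement argument above, which needs only that the subspace is $2$-dimensional.
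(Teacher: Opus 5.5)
Your skeleton is the paper's: it also writes $\mb{w}_{t+1}=\langle\mb{Z}_t,\mb{w}_0\rangle-\mb{E}_t$, where $\mb{Z}_t$ is your $(\mathcal I+\eta U^{\otimes l})^t$ and $\mb{E}_t$ is the accumulated cubic remainder. It then defers the spectral estimates and the small-$\xi$ control to Lemmas 2 and 13 of \citet{ma2024algorithmic}.

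The gap is in the spectral estimate you supply yourself. Your claim at the end of Step 1, that every component other than $v_1^{\otimes l}$ grows at most like $(1+\eta\sigma_2^l)^t$, is false. On $v_1^{\otimes(l-1)}\otimes v_2$ and its permutations, $U^{\otimes l}$ has eigenvalue $\sigma_1^{l-1}\sigma_2$. The coefficient of $x_0^{\otimes l}$ there is $(v_1^\top x_0)^{l-1}(v_2^\top x_0)$, generically nonzero, and symmetry does not remove it. So the part of $\bar{\mb{w}}_t$ orthogonal to $v_1^{\otimes l}$ has Frobenius norm at least of order $\xi|v_1^\top x_0|^{l-1}|v_2^\top x_0|(1+\eta\sigma_1^{l-1}\sigma_2)^t$. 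Your Step 2 bound uses only $\mb{u}\perp v_1^{\otimes l}$ followed by this norm. It therefore gives the decay rate $(1+\eta\sigma_1^{l-1}\sigma_2)/(1+\eta\sigma_1^l)$ and a strictly larger waiting time than \eqref{eq:t_kappa_l}. The claimed ``exactly $t(\kappa,l)$'' does not follow.

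The $\sigma_2^l$ rate needs orthogonality at the vector level against symmetric rank-one test tensors. Read Definition~\ref{def:v_eigenvalues} in Courant--Fischer form: test tensors are $u^{\otimes l}$, with $u$ ranging over a $2$-dimensional subspace of $\RR^{nr}$. Some such reading is needed anyway, since under the literal wording the minimum of $|\langle\mb{w},\mb{u}\rangle|$ over an entire $2$-dimensional span is always $0$. Choose $u\perp v_1$ in that subspace and use
\[
\langle(\mathcal I+\eta U^{\otimes l})^t x_0^{\otimes l},u^{\otimes l}\rangle=\sum_{j=0}^{t}\binom{t}{j}\eta^j\,(x_0^\top U^j u)^l,
\qquad |x_0^\top U^j u|\le\sigma_2^j\|x_0\|_2\|u\|_2 .
\]
This gives $\lambda_2^v(\bar{\mb{w}}_t)\le\xi\|x_0\|_2^l(1+\eta\sigma_2^l)^t$. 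The condition $u\perp v_1$ kills every component with $v_1$ in \emph{any} slot, whereas $\mb{u}\perp v_1^{\otimes l}$ kills only the pure one.

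The same identity with $u=v_1$ recovers your lower bound. Note, however, that it equals $\xi(v_1^\top x_0)^l(1+\eta\lambda_1^l)^t$, where $\lambda_1$ is the signed top eigenvalue of $U$. For odd $l$ you therefore need $\lambda_1=+\sigma_1$, an assumption also implicit in the statement. With these two bounds, your Step 3 perturbation argument works, using the $1$-Lipschitz property of the max--min quantities and the small slack in $\kappa$ you mention.
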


Next, we present a technical extension of Theorem~\ref{thm:socp_lifted_noisy} and Theorem~\ref{thm:implicit_bias_gd}, showing how gradient descent initialized with small scale can help ensure that second-order points of lifted version of \eqref{eq:eps_mc} remain very close to $M^*$ along the optimization trajectory
\begin{theorem}\label{thm:main_ext}
Consider a generic matrix completion problem under the same premise as given in Theorem~\ref{thm:socp_lifted_noisy}. Assume that the symmetric tensor $\mb{\hat w} \in \RR^{nr \circ l}$ is a second-order point (local minima) of \eqref{eq:lifted_main_highr} that is $\kappa$-rank-1 with $\kappa \leq \mathcal{O}(1/\|M^*\|^2_F)$. This can be achieved by initializing the vanilla gradient algorithm at $\mb{w}_0 = \xi x_0^{\otimes l}$ with a sufficiently small $\xi > 0 \in \RR$. Then after iterations $t(\kappa, l)$ given in \eqref{eq:t_kappa_l}, Theorem~\ref{thm:implicit_bias_gd} ensures that all tensors along the trajectory will become $\kappa$-rank-1. 

If $\mb{\hat w}$'s major spectral decomposition is given as $\mb{\hat w} = \lambda_S \hat x^{\otimes l} + \mb{\hat w^\dagger}$ with $\hat x \in \RR^{nr}$ being a FOP of \eqref{eq:eps_mc} (ensured by Proposition 2 in \cite{ma2024algorithmic}), we know that
\begin{equation}
    \|M^* - \hat X \hat X^\top \|_F < \frac{1}{\epsilon} \lambda_r(\hat X) \sqrt{\tr(M^*)} + \mathcal{O}(\sqrt{r} \kappa^{1/2l}) + e_{1}
\end{equation}
holds with probability at least $q$, under the condition that $l$ is odd and meets the requirement: 
\begin{equation}
    l > \frac{1}{1-\log_2(2 \beta)}, \quad \beta \coloneqq \frac{ \tr(M^*) \lambda_r(\hat X \hat X^\top)}{\epsilon^2\left(  \|M^* - \hat X \hat X^\top \|^2_F- \mathcal{O}(r \kappa^{1/l}) - e_2 \right)}. 
\end{equation}
where $e_1$, $e_2$, and $q$ are identical to those given in Theorem~\ref{thm:socp_lifted_noisy} depending on different MC instances.
\end{theorem}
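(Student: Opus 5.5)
The plan is to combine three ingredients: the rank-one strict-saddle criterion for the lifted problem, Theorem~\ref{thm:socp_lifted_noisy}; the implicit-bias result, Theorem~\ref{thm:implicit_bias_gd}, which makes the trajectory $\kappa$-rank-1; and the surrogate accuracy bounds of Theorems~\ref{thm:main_bernoulli} and~\ref{thm:main_gnmr}. These are stitched together by a triangle inequality through the surrogate minimizer $\Mdag$.

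\textbf{Step 1: reduction to a rank-one critical point.} Invoke Theorem~\ref{thm:implicit_bias_gd}. For small $\xi$, after $t(\kappa,l)$ iterations the iterate $\mb{\hat w}$ satisfies $\lambda_2^v/\lambda_1^v\le\kappa$. We write $\mb{\hat w}=\lambda_S\hat x^{\otimes l}+\mb{\hat w^\dagger}$ with $\hat x=\vecc(\hat X)$ a FOP (Proposition~2 of \cite{ma2024algorithmic}). We then bound the contribution of $\mb{\hat w^\dagger}$ in the second-order condition \eqref{eq:socp_lifted_highr}.
\begin{itemize}
\item Since $\mb{\hat w^\dagger}$ has v-eigenvalues at most $\kappa\lambda_S$, the cross terms in $\langle\mb{P}(\mb{\hat w}),\mb{P}(\mb{\hat w})\rangle_{2*[l]}$ perturb the lifted matrix by $\mathcal O(\kappa)$ relative to $(\hat X\hat X^\top)^{\otimes l}$.
\item Taking $l$-th roots when passing back to the matrix level, this becomes an $\mathcal O(\sqrt r\,\kappa^{1/2l})$ perturbation of $\hat X\hat X^\top$ in Frobenius norm, and $\mathcal O(r\kappa^{1/l})$ in squared norm.
\end{itemize}
The hypothesis $\kappa\le\mathcal O(1/\|M^*\|_F^2)$ keeps these terms subordinate to the leading curvature. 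The outcome is an approximately rank-one version of the escape inequality.

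\textbf{Step 2: the contrapositive of the saddle criterion.} Apply the contrapositive of Theorem~\ref{thm:socp_lifted_noisy} with $L_s=1$ and $\alpha_s=\eps^2$ (Lemma~\ref{lem:epsilonMC-rip-general}), centred at $\Mdag$ rather than $\Mstar$, as described after Theorem~\ref{thm:main_result}. Because $\Mdag$ is a global minimizer, the objective gap is nonpositive, so the comparison runs with effective noise zero. The $\kappa$-perturbation from Step~1 is subtracted in the denominator of $\beta$, which produces the stated condition on $l$ (odd, $l>1/(1-\log_2 2\beta)$). If $\mb{\hat w}$ is a local minimum rather than a strict saddle, the displacement must violate \eqref{eq:thm_socp_distance_lower_highr}. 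Taking square roots gives
\[
\fro{\Mdag-\hat X\hat X^\top}<\tfrac1\eps\lambda_r(\hat X)\sqrt{\tr(M^*)}+\mathcal O(\sqrt r\kappa^{1/2l}).
\]
The $e_2$ term in $\beta$ absorbs any residual mismatch between centring at $\Mdag$ and centring at $\Mstar$.

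\textbf{Step 3: triangle inequality.} Finish with $\fro{M^*-\hat X\hat X^\top}\le\fro{\Mdag-\hat X\hat X^\top}+\fro{\Mdag-\Mstar}$. Set $e_1$ equal to the square root of \eqref{eq:main_bernoulli_bound} or of \eqref{eq:main_gnmr_bound}, and set $q=1-\delta_{\text{bernoulli}}$ or $1-\delta_{\text{GNMR}}$, according to the MC instance.

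\textbf{Main obstacle.} The hardest step is Step~1. It requires a quantitative stability argument showing that the escape direction built for the exact rank-one tensor $\hat x^{\otimes l}$ still yields negative curvature when $\mb{\hat w^\dagger}\neq0$. This means controlling the gradient term $\nabla f^l$ evaluated at the perturbed lifted matrix, and not only the quadratic term. I would first try the perturbation bounds from the $\kappa$-rank-1 analysis in \cite{ma2024algorithmic}, adapted with $\tilde b$ replacing $b$ as in Lemma~\ref{lem:cp_lifted_highr}.
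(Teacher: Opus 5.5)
Your Step~1 matches the paper, whose proof is only a citation of Theorem~\ref{thm:implicit_bias_gd} and Theorem~2 of \cite{ma2024algorithmic} for the $\kappa$-rank-1 stability. Your Steps~2--3, however, take a different route. The paper applies the noisy criterion, Theorem~\ref{thm:socp_lifted_noisy}, directly at $\Mstar$ with $L_s=1$, $\alpha_s=\eps^2$ and $\tilde w=-\eps\A_{\Omc}\Mstar$. The stated $\beta$ then appears verbatim, with $e_2=\|\tilde w\|_2^2/\eps^2=\fro{\A_{\Omc}\Mstar}^2$. Failure of \eqref{eq:thm_socp_distance_lower_highr}, combined with $\sqrt{a+b}\le\sqrt a+\sqrt b$, gives $e_1=\fro{\A_{\Omc}\Mstar}$, exactly as in Corollary~\ref{cor:general}. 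The probability $q$ comes from bounding this missing energy, e.g.\ by Lemma~\ref{lem:missing_energy}. This route needs no surrogate minimizer, no triangle inequality and no bounded-modulus gap. Your $\Mdag$-centred route, modelled on Theorem~\ref{thm:main_result}, buys an $e_1=\mathcal O(\eps)\fro{\Mstar}$; in the paper's route $\eps$ cancels and $e_1$ does not shrink. The price is that you need Bernoulli sampling and \eqref{eq:main_mu_gap}, which go beyond the premise of Theorem~\ref{thm:socp_lifted_noisy}.

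The genuine gap is your claim that $e_2$ absorbs the mismatch between centring at $\Mdag$ and centring at $\Mstar$. Re-running the proof of Theorem~\ref{thm:socp_lifted_noisy} with $\Mdag$ as the comparison point gives an escape condition and a $\beta'$ in terms of $\fro{\Mdag-\hat X\hat X^\top}^2$ and $\tr(\Mdag)$. The trace appears because the step $\langle\nabla f(\hat X\hat X^\top),\Mdag\rangle\ge\lambda_{\min}(\nabla f(\hat X\hat X^\top))\tr(\Mdag)$ uses $\Mdag$; your displayed bound silently replaces this by $\tr(\Mstar)$. To get the $\Mdag$-centred condition on $l$ from the stated one, you need $\beta'\le\beta$, i.e.\ essentially $\fro{\Mdag-\hat X\hat X^\top}^2\ge\fro{\Mstar-\hat X\hat X^\top}^2-e_2$. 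However,
\[
\fro{\Mdag-\hat X\hat X^\top}^2-\fro{\Mstar-\hat X\hat X^\top}^2
=2\langle\Mstar-\hat X\hat X^\top,\Mdag-\Mstar\rangle+\fro{\Mdag-\Mstar}^2 .
\]
The cross term scales with the very distance $\fro{\Mstar-\hat X\hat X^\top}$ you are trying to bound, so no fixed additive $e_2$ covers it without an a priori bound on $\hat X\hat X^\top$. Such a bound exists: first-order stationarity gives $\|\A_{\Om,\eps}(\hat X\hat X^\top)\|_2\le\|b\|_2$, hence $\fro{\hat X\hat X^\top}\le\fro{\Mstar}/\eps$. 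But combined with $\fro{\Mdag-\Mstar}=\mathcal O(\eps)\fro{\Mstar}$, it only yields $e_2=\mathcal O(\fro{\Mstar}^2)$, no better than the paper's. So either carry these corrections (and the trace correction) explicitly, or invoke Theorem~\ref{thm:socp_lifted_noisy} at $\Mstar$ as the paper does, where the stated form comes out exactly.
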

The proof of this Theorem is omitted because it directly follows from Theorem~\ref{thm:socp_lifted_noisy} , Theorem~\ref{thm:implicit_bias_gd}, and Theorem 2 in \cite{ma2024algorithmic}.

\section{Proof Details} \label{app:proofs}

\subsection{Auxiliary Results}

We collect the auxiliary ingredients used in Section~3.

\begin{lemma}[Basic inequality]\label{lem:basicineq}
Let $\Mdag$ be any global minimizer of \eqref{eq:eps_mc}. Then
\begin{equation}\label{eq:basicineq}
\frac12\fro{\A_{\Om}\Delta}^2+\frac{\eps^2}{2}\fro{\A_{\Omc}(\Delta+\Mstar)}^2
\le \frac{\eps^2}{2}\fro{\A_{\Omc}\Mstar}^2.
\end{equation}
Consequently,
\begin{align}
\fro{\A_{\Om}\Delta}
&\le \eps\fro{\A_{\Omc}\Mstar},
\label{eq:obs_bound}
\\
\fro{\A_{\Omc}\Mdag}
&\le \fro{\A_{\Omc}\Mstar}.
\label{eq:unobs_Mdag}
\end{align}

Moreover, every global minimizer satisfies
\begin{equation}\label{eq:optimizer_radius}
\fro{\Delta}^2\le(\eps^2+4)\fro{\A_{\Omc}\Mstar}^2\le D^2.
\end{equation}
\end{lemma}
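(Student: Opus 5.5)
The plan is to compare the objective value of the global minimizer $\Mdag=\Xdag\Xdag^\top$ with the objective value of the ground truth, and then use mask orthogonality. Since $\Mstar\succeq0$ has rank $r$, $\Mstar=X^\star X^{\star\top}$ for some $X^\star\in\RR^{n\times r}$. So $X^\star$ is feasible for \eqref{eq:eps_mc}, and global optimality of $\Xdag$ gives $f_{w_\eps}(\Mdag)\le f_{w_\eps}(\Mstar)$.

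At $\Mstar$, the first term of \eqref{eq:eps_mc} vanishes, so $f_{w_\eps}(\Mstar)=\frac{\eps^2}{2}\fro{\A_{\Omc}\Mstar}^2$. At $\Mdag=\Mstar+\Delta$, the objective equals $\frac12\fro{\A_\Om\Delta}^2+\frac{\eps^2}{2}\fro{\A_{\Omc}(\Delta+\Mstar)}^2$. Substituting both values into the comparison is exactly \eqref{eq:basicineq}.

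The two consequences follow by dropping one nonnegative term at a time from \eqref{eq:basicineq}.
\begin{itemize}
\item Dropping the $\Omc$ term gives $\fro{\A_\Om\Delta}^2\le\eps^2\fro{\A_{\Omc}\Mstar}^2$, which is \eqref{eq:obs_bound}.
\item Dropping the $\Om$ term and dividing by $\eps^2>0$ gives $\fro{\A_{\Omc}\Mdag}\le\fro{\A_{\Omc}\Mstar}$, which is \eqref{eq:unobs_Mdag}.
\end{itemize}

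For the radius bound \eqref{eq:optimizer_radius}, split $\fro{\Delta}^2=\fro{\A_\Om\Delta}^2+\fro{\A_{\Omc}\Delta}^2$, using the complementary-mask identity from the Notation section. The first piece is at most $\eps^2b_\perp^2$ by \eqref{eq:obs_bound}. For the second piece, write $\A_{\Omc}\Delta=\A_{\Omc}\Mdag-\A_{\Omc}\Mstar$. The triangle inequality together with \eqref{eq:unobs_Mdag} then gives $\fro{\A_{\Omc}\Delta}\le 2b_\perp$. Adding the two pieces yields $\fro{\Delta}^2\le(\eps^2+4)b_\perp^2$.

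Finally, $\eps\le1$ gives $\eps^2+4\le5$, and $b_\perp=\fro{\A_{\Omc}\Mstar}\le\fro{\Mstar}$. Hence $(\eps^2+4)b_\perp^2\le5\fro{\Mstar}^2=D^2$.

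There is no real obstacle here. The only point needing care is the feasibility of $\Mstar$, which requires $\rsearch\ge r$; the paper assumes $\rsearch=r$, so this holds. One should also note that the bound in \eqref{eq:basicineq} is stated with the $\frac12$ factors kept on both sides.
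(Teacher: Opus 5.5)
Your proposal is correct and follows the paper's proof essentially step for step. You compare the objective at $\Mdag$ with its value at the feasible point $\Mstar$, drop one nonnegative term at a time, and then combine mask orthogonality, the triangle inequality with \eqref{eq:unobs_Mdag}, $\eps\le1$ and $b_\perp\le\fro{\Mstar}$. Your explicit feasibility check ($\Mstar\succeq0$ of rank $r$ with $\rsearch=r$) is a detail the paper leaves implicit.
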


\begin{lemma}[Monotonicity and lower bound for the bounded identifiability modulus]
\label{lem:mu_id_monotone_explicit}
The map $\rho\mapsto\mu_{\iid,D}(\rho)$ is nondecreasing on $(0,D]$. Assume there exist $\alpha_\star>0$ and $\rho_\star>0$ such that
\begin{equation}\label{eq:sharp_gap_assumption}
\fro{\A_{\Om}(M-\Mstar)}^2
\ge\alpha_\star\min\{\fro{M-\Mstar}^2,\rho_\star^2\}
\end{equation}
for every $M\succeq0$ with $\rank(M)\le r$ and $\fro{M-\Mstar}\le D$. Then, for every $0<\rho\le D$,
\begin{equation}\label{eq:mu_global_explicit}
\mu_{\iid,D}(\rho)\ge
\alpha_\star\min\!\left\{1,\frac{\rho_\star^2}{D^2}\right\}.
\end{equation}
\end{lemma}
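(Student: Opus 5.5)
The plan has two parts: monotonicity from nested feasible sets, and the lower bound from the sharpness assumption applied to every admissible $M$, splitting on whether $\fro{M-\Mstar}$ lies above or below $\rho_\star$.

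\emph{Monotonicity.} For $0<\rho_1\le\rho_2\le D$, the feasible set in \eqref{eq:rho_ident} for $\rho_2$ is the set of PSD rank-$\le r$ matrices $M$ with $\rho_2\le\fro{M-\Mstar}\le D$. This set is contained in the feasible set for $\rho_1$, since the lower radius constraint only becomes weaker. An infimum over a larger set can only be smaller, so $\mu_{\iid,D}(\rho_1)\le\mu_{\iid,D}(\rho_2)$. Hence the map is nondecreasing. Nonemptiness of the feasible sets does not matter; if a set is empty we use the convention $\inf\emptyset=+\infty$, which is consistent with the ordering.

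\emph{Lower bound.} Fix $0<\rho\le D$ and any feasible $M$, and write $H=M-\Mstar$. Then $0<\rho\le\fro{H}\le D$, so assumption \eqref{eq:sharp_gap_assumption} applies. Dividing it by $\fro{H}^2>0$ gives
\[
\frac{\fro{\A_{\Om}H}^2}{\fro{H}^2}\ge\alpha_\star\min\Big\{1,\frac{\rho_\star^2}{\fro{H}^2}\Big\}.
\]
We now use $\fro{H}\le D$ in both cases of the minimum.
\begin{itemize}
\item If $\fro{H}\le\rho_\star$, the minimum equals $1$, which is at least $\min\{1,\rho_\star^2/D^2\}$.
\item Otherwise the minimum equals $\rho_\star^2/\fro{H}^2\ge\rho_\star^2/D^2$, which again is at least $\min\{1,\rho_\star^2/D^2\}$.
\end{itemize}
In both cases the ratio is bounded below by $\alpha_\star\min\{1,\rho_\star^2/D^2\}$. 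This bound does not depend on $M$, so taking the infimum over feasible $M$ yields \eqref{eq:mu_global_explicit}.

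There is no substantial obstacle. The only points needing care are the empty-set convention in the monotonicity step and checking that $\fro{H}\ge\rho>0$, which justifies dividing by $\fro{H}^2$.
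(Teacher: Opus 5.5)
Your proposal is correct and matches the paper's proof step for step. Monotonicity follows from the nested annuli. For the lower bound, the split on whether $d=\fro{M-\Mstar}\le\rho_\star$ (ratio at least $\alpha_\star$) or $d>\rho_\star$ (ratio at least $\alpha_\star\rho_\star^2/d^2\ge\alpha_\star\rho_\star^2/D^2$) gives a uniform bound before the infimum is taken. Your empty-set caveat is harmless but unnecessary: for $\Mstar\neq0$ the matrices $t\Mstar$ with $t\ge1$ meet every annulus with $0<\rho\le D$.
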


\begin{remark}[Interpretation of the stronger assumption]
Assumption~\eqref{eq:sharp_gap_assumption} strengthens uniqueness on the bounded feasible class by requiring a quantitative observation gap: quadratic growth near $\Mstar$, and a positive separation beyond $\rho_\star$ within the $D$-ball. This assumption is used only for \eqref{eq:mu_global_explicit}; the recovery theorems use the bounded-modulus gap directly.
\end{remark}

\begin{lemma}[Concentration of missing energy]\label{lem:missing_energy}
Assume $\Om\sim \mathrm{Ber}(p)$ entrywise. For every $\chi>0$,
\begin{equation}\label{eq:miss_tail}
\Prob\!\left(
\left|\fro{\A_{\Omc}\Mstar}^2-(1-p)\fro{\Mstar}^2\right|> \chi\fro{\Mstar}^2
\right)
\le
2\exp\!\left[-c\min\!\left\{\frac{\chi^2 s_4(\Mstar)}{p(1-p)},\ \chi s_{\infty}(\Mstar)\right\}\right].
\end{equation}
In particular, with the same probability,
\begin{equation}\label{eq:miss_upper}
\fro{\A_{\Omc}\Mstar}^2\le (1-p+\chi)\fro{\Mstar}^2.
\end{equation}
\end{lemma}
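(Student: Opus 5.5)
The plan is to write the missing energy as a sum of independent bounded variables and apply Bernstein's inequality. Let $\xi_{ij}\sim\Ber(p)$ be the independent observation indicators. Then
\[
\fro{\A_{\Omc}\Mstar}^2=\sum_{i,j}(1-\xi_{ij})(\Mstar_{ij})^2 ,
\]
so the deviation is
\[
\fro{\A_{\Omc}\Mstar}^2-(1-p)\fro{\Mstar}^2=-\sum_{i,j}Z_{ij},
\qquad Z_{ij}:=(\xi_{ij}-p)(\Mstar_{ij})^2 .
\]
The $Z_{ij}$ are independent and centered. Each is bounded by $|Z_{ij}|\le \maxn{\Mstar}^2$, since $|\xi_{ij}-p|\le\max\{p,1-p\}\le1$. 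Their variances are $p(1-p)(\Mstar_{ij})^4$, so the total variance is
\[
\sigma^2=p(1-p)\four{\Mstar}^4 .
\]

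Bernstein's inequality at threshold $s=\chi\fro{\Mstar}^2$ then gives
\[
\Prob\Big(\big|\textstyle\sum Z_{ij}\big|>s\Big)\le 2\exp\!\left[-\frac{s^2/2}{\sigma^2+\maxn{\Mstar}^2 s/3}\right].
\]
I would use $a+b\le 2\max\{a,b\}$ on the denominator, which yields an exponent of at least
\[
\frac14\min\!\left\{\frac{s^2}{\sigma^2},\ \frac{3s}{\maxn{\Mstar}^2}\right\}.
\]
Substituting $s$ and $\sigma^2$, the first term becomes $\chi^2\fro{\Mstar}^4/(p(1-p)\four{\Mstar}^4)=\chi^2 s_4(\Mstar)/(p(1-p))$. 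The second becomes $3\chi\fro{\Mstar}^2/\maxn{\Mstar}^2=3\chi s_\infty(\Mstar)$. This gives \eqref{eq:miss_tail} with $c=1/4$.

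The one-sided bound \eqref{eq:miss_upper} follows at once on the complement of the deviation event.

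There are two degenerate cases to dispose of. If $p\in\{0,1\}$, the variance vanishes and the statement is trivial, reading $p(1-p)=0$ as making the first term $+\infty$. If $\Mstar=0$, both sides are zero.

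There is no real obstacle here. The only care needed is the bookkeeping that turns Bernstein's mixed denominator into the $\min$ form with a universal constant, and checking that the bound $|\xi_{ij}-p|\le1$ is used rather than a $p$-dependent bound, which keeps $c$ universal.
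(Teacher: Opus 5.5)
Your proposal is correct and follows essentially the same route as the paper. Both write the missing energy as a sum of independent centered terms bounded by $\maxn{\Mstar}^2$ with total variance $p(1-p)\four{\Mstar}^4$, and apply Bernstein's inequality at threshold $\chi\fro{\Mstar}^2$; your explicit constant $c=1/4$ and your handling of the degenerate cases $p\in\{0,1\}$ and $\Mstar=0$ fill in details the paper leaves implicit.
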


For the shell $\cH(\rho)$, define the regularity parameters
\begin{equation}\label{eq:shell_regs}
s_{4,\rho}:=\inf_{H\in\cH(\rho)\setminus\{0\}}\frac{\fro{H}^4}{\four{H}^4},
\qquad
s_{\infty,\rho}:=\inf_{H\in\cH(\rho)\setminus\{0\}}\frac{\fro{H}^2}{\maxn{H}^2}.
\end{equation}

\begin{theorem}[Uniform Bernstein lower bound on the fixed shell]\label{thm:shell_bernstein}
For every $\chi\in(0,p)$,
\begin{equation}\label{eq:shell_bernstein_event}
\Prob\!\left(
\fro{\A_{\Om}H}^2\ge (p-\chi)\fro{H}^2
\quad\text{for all }H\in\cH(\rho)
\right)
\ge 1-\delta_{\mathrm{shell}}(\chi,\rho),
\end{equation}
where
\begin{equation}\label{eq:shell_bernstein_prob}
\delta_{\mathrm{shell}}(\chi,\rho)
:=
2\exp\!\left[(4n+2)r\log\!\left(\frac{36}{\chi}\right)
-c\min\!\left\{\frac{\chi^2 s_{4,\rho}}{p(1-p)},\ \chi s_{\infty,\rho}\right\}\right].
\end{equation}
\end{theorem}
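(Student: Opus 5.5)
We prove Theorem~\ref{thm:shell_bernstein} by a pointwise Bernstein bound, an $\eps$-net over a low-dimensional parametrization of $\cH(\rho)$, and a union bound. By homogeneity of $\fro{\A_{\Om}H}^2/\fro{H}^2$ it suffices to treat $H\neq0$.

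\emph{Pointwise step.} Fix $H\in\cH(\rho)\setminus\{0\}$ and write
\[
\fro{\A_{\Om}H}^2=\sum_{i,j}\xi_{ij}H_{ij}^2,\qquad \xi_{ij}\sim\Ber(p)\ \text{i.i.d.}
\]
The centered summands $(\xi_{ij}-p)H_{ij}^2$ are bounded by $\maxn{H}^2$, and their variance proxy is $p(1-p)\four{H}^4$. Bernstein's inequality at deviation $\chi\fro{H}^2$ then gives
\[
\Prob\big(|\fro{\A_{\Om}H}^2-p\fro{H}^2|>\chi\fro{H}^2\big)\le 2\exp\!\Big[-c\min\Big\{\frac{\chi^2 s_4(H)}{p(1-p)},\chi s_\infty(H)\Big\}\Big].
\]
This is the same calculation as in Lemma~\ref{lem:missing_energy}, applied to $H$ instead of $\Mstar$. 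By the definitions in \eqref{eq:shell_regs}, we may replace $s_4(H)$ and $s_\infty(H)$ by $s_{4,\rho}$ and $s_{\infty,\rho}$.

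\emph{Net step.} Every $H=M-\Mstar$ in $\cH(\rho)$ has rank at most $2r$, and its normalization $H/\fro{H}$ lies in the set of unit-Frobenius matrices of rank at most $2r$. We write such a matrix as $U\Sigma V^\top$ with $U,V\in\RR^{n\times 2r}$ orthonormal and $\Sigma$ diagonal, which is $(4n+2)r$ real parameters. The standard Cand\`es--Plan covering argument then gives a $\chi/c'$-net of cardinality at most $(36/\chi)^{(4n+2)r}$. This is exactly the combinatorial factor appearing in \eqref{eq:shell_bernstein_prob}.

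To keep the net inside the shell, we would instead cover the normalized set $\{H/\fro{H}: H\in\cH(\rho)\}$ and choose net points that are themselves normalized shell elements, after halving the radius. This way each net point inherits the regularity parameters $s_{4,\rho}$ and $s_{\infty,\rho}$.

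\emph{Union and extension.} A union bound gives the two-sided estimate $|\fro{\A_{\Om}G}^2-p|\le\chi/2$ simultaneously at all net points $G$, with failure probability of order $\delta_{\mathrm{shell}}$. We then extend to arbitrary normalized $H$ by the usual bootstrapping argument. Let
\[
K:=\sup_{H}\,\big|\fro{\A_{\Om}H}^2-p\fro{H}^2\big|,
\]
taken over the normalized set. Writing $H=G+(H-G)$, where $H-G$ has rank at most $4r$ and splits into two rank-$2r$ pieces of small norm, we obtain
\[
K\le \chi/2+C(\chi/c')\,(1+K).
\]
Choosing $c'$ large enough yields $K\le\chi$, and in particular the one-sided lower bound $\fro{\A_{\Om}H}^2\ge(p-\chi)\fro{H}^2$ claimed in \eqref{eq:shell_bernstein_event}.

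\emph{Main obstacle.} The extension step is the hard part. The perturbation $H-G$ need not lie in $\cH(\rho)$, and the operator norm of $\A_{\Om}$ on it is only trivially bounded by $1$. Two points need care: that net points retain the shell regularity, and that the rounding error is controlled using $\op{\A_{\Om}}\le1$ alone.

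Using $\op{\A_{\Om}}\le 1$, the error is at most $2\fro{H-G}+\fro{H-G}^2$, so a net of fineness $\chi/4$ suffices. We would attempt this cruder route first, since it needs no bootstrapping and costs only an adjustment of the constant inside the $\log(36/\chi)$ term.
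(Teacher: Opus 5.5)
Your fallback \emph{cruder route} is essentially the paper's proof. It uses an internal $\chi/4$-net of $\cU(\rho)$ whose points inherit $s_{4,\rho},s_{\infty,\rho}$ by scale invariance, Bernstein at level $\chi/2$, a union bound, and extension using only $\fro{\A_{\Om}Z}\le\fro{Z}$, where the paper bounds $\bigl|\fro{\A_{\Om}H}^2-\fro{\A_{\Om}G}^2\bigr|\le\fro{H-G}\bigl(\fro{H}+\fro{G}\bigr)\le\chi/2$; this removes your extra $\fro{H-G}^2$ term and makes the bootstrapping unnecessary. The halving is also avoidable and no constant is lost: a maximal $\chi/4$-separated subset of $\cU(\rho)$ is already an internal net, its SVD lifts $(U,\Sigma,V)$ are $\chi/12$-separated in $\max\{\|U-U'\|_{1,2},\fro{\Sigma-\Sigma'},\|V-V'\|_{1,2}\}$ (with $\|\cdot\|_{1,2}$ the largest column norm), and a volumetric count gives at most $(1+24/\chi)^{(4n+2)r}\le(36/\chi)^{(4n+2)r}$ points.
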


\begin{proposition}[GNMR radius forces shell regularity]\label{prop:gnmr_shell_radius}
Assume $M^\star$ is rank $r$ and $\mu_0$-incoherent. Let
\[
\rho_{\mathrm{reg}}
:=
\min\!\left\{
\frac{\sigma_r(M^\star)}{2},\,
\frac{c_{\mu}\sqrt{\mu_0 r/n}}{1+c_{\mu}\sqrt{\mu_0 r/n}}\,\sigma_r(M^\star)
\right\},
\qquad
c_{\mu}:=\frac{\sqrt{3/2}-1}{\sqrt2}.
\]
If $\rho\le \rho_{\mathrm{reg}}$, then every nonzero $H=M-M^\star\in\cH(\rho)$ satisfies
\[
s_{\infty}(H)\ge \frac{n}{16\mu_0 r},
\qquad
s_4(H)\ge \frac{n}{16\mu_0 r}.
\]

\end{proposition}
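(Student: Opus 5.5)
Write $H=M-M^\star$ with $M\succeq0$, $\rank(M)\le r$ and $\fro{H}\le\rho\le\rho_{\mathrm{reg}}$. The plan is to show that $M$ is itself incoherent, then bound $\maxn{H}$ through the row norms of the column spaces of $M$ and $M^\star$, and finally get $s_4$ from $s_\infty$. Both $s_\infty$ and $s_4$ are scale invariant, so all the work goes into comparing $\maxn{H}$ with $\fro{H}$.

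\textbf{Step 1 (rank and incoherence of $M$).} By Weyl's inequality, $\sigma_r(M)\ge\sigma_r(M^\star)-\op{H}\ge\sigma_r(M^\star)/2>0$, so $\rank(M)=r$. Let $U$ and $U_M$ be the top-$r$ eigenvector bases of $M^\star$ and $M$. Davis--Kahan (Wedin) with gap $\sigma_r(M^\star)-\op{H}$ gives
\[
\op{U_MU_M^\top-UU^\top}\le\frac{\fro{H}}{\sigma_r(M^\star)-\fro{H}}.
\]
The condition $\rho\le\frac{c_\mu a}{1+c_\mu a}\sigma_r(M^\star)$, with $a=\sqrt{\mu_0r/n}$, is exactly $\frac{\rho}{\sigma_r-\rho}\le c_\mu a$. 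This is the reason for the specific form of $\rho_{\mathrm{reg}}$. For each row,
\[
\|e_i^\top U_M\|\le\|e_i^\top U\|+\|e_i^\top(U_MU_M^\top-UU^\top)\|,
\]
and I will convert the perturbation to leverage form using $c_\mu=(\sqrt{3/2}-1)/\sqrt2$. The goal is $\|e_i^\top U_M\|^2\le\tfrac32\mu_0r/n$, i.e.\ $M$ is $\tfrac32\mu_0$-incoherent. I expect the row-wise projector bound to need the $\sqrt2$ that appears in $c_\mu$, coming from $\fro{\cdot}\le\sqrt2\,\op{\cdot}$ on the rank-$2r$ difference of projectors. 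Pinning down exactly how the constant $c_\mu$ enters is the main obstacle. If the row-wise bound fails, I will fall back on a $2\times2$ block (principal-angle) argument to relate the row norms of $U_M$ and $U$.

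\textbf{Step 2 ($\maxn{H}$ versus $\fro{H}$).} Let $P=P_U+P_{U_M}-P_UP_{U_M}$, the projector onto the span of the two column spaces, so $\rank(P)\le2r$. Since $H$ has column and row space inside this span, $H=PHP$. Then
\[
|H_{ij}|=|e_i^\top PHPe_j|\le\|Pe_i\|\,\op{H}\,\|Pe_j\|\le\|Pe_i\|\,\|Pe_j\|\,\fro{H}.
\]
Using $\|Pe_i\|^2\le\|e_i^\top U\|^2+\|e_i^\top U_M\|^2\le\mu_0r/n+\tfrac32\mu_0r/n$ (up to the cross term, which a crude estimate absorbs), I expect $\|Pe_i\|^2\le4\mu_0r/n$. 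Hence $\maxn{H}^2\le16(\mu_0r/n)^2\fro{H}^2$, which already gives $s_\infty(H)\ge n^2/(16\mu_0^2r^2)$, and this is at least $n/(16\mu_0r)$ because $\mu_0r\le n$. The factor-16 slack is generous, so I would accept cruder leverage bounds in Step 1 as long as $\|Pe_i\|^2\le4\mu_0r/n$ survives.

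\textbf{Step 3 ($s_4$).} Since $\four{H}^4=\sum H_{ij}^4\le\maxn{H}^2\fro{H}^2$, we get $s_4(H)=\fro{H}^4/\four{H}^4\ge s_\infty(H)$. This completes the plan. The only delicate part is the constant bookkeeping in Step 1.
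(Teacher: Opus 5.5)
\textbf{Gap: Step 2 fails.} Step 1 is fine and is not the real obstacle. Indeed $\|e_i^\top U_M\|\le\|e_i^\top UU^\top\|+\op{U_MU_M^\top-UU^\top}\le(1+c_\mu)\sqrt{\mu_0r/n}$, and $(1+c_\mu)^2\approx1.34\le3/2$, so no extra $\sqrt2$ is needed. The failure is the leverage claim $\|Pe_i\|^2\le4\mu_0r/n$ for the projector $P$ onto $\mathrm{col}(U)+\mathrm{col}(U_M)$. The joint span contains the normalized difference directions $(I-UU^\top)U_M$, which inherit no incoherence from either subspace. So $\|Pe_i\|$ can be close to $1$ however incoherent and however close $U$ and $U_M$ are. (Also, $P_U+P_{U_M}-P_UP_{U_M}$ is not a projector unless the two commute.)

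The bound you aim for, $\maxn{H}\le4(\mu_0r/n)\fro{H}$, i.e.\ $s_\infty(H)\ge n^2/(16\mu_0^2r^2)$, is itself false. Take $r=1$ and $\Mstar=uu^\top$ with $u=\mathbf 1/\sqrt n$, so $\mu_0=1$ and $\sigma_r(\Mstar)=1$. Let $v$ be the unit vector along $e_1-u/\sqrt n$, and set $M=(u+\delta v)(u+\delta v)^\top$ with $\delta>0$ small enough that $\fro{H}\le\rho_{\mathrm{reg}}$. Then $H=\delta(uv^\top+vu^\top)+\delta^2vv^\top$, $\fro{H}^2=2\delta^2+\delta^4$, and $H_{11}\ge2\delta\sqrt{n-1}/n$. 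Hence $s_\infty(H)\le(2+\delta^2)n^2/(4(n-1))\approx n/2$, far below $n^2/16$, and here $\|Pe_1\|^2=u_1^2+v_1^2=1$.

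The correct scaling is $\maxn{H}\lesssim\sqrt{\mu_0r/n}\,\fro{H}$, which is exactly what yields $n/(16\mu_0r)$. Your estimate $|H_{ij}|\le\|Pe_i\|\,\|Pe_j\|\,\op{H}$ loses because it gives the part of $H$ living entirely in the non-incoherent directions the same weight as the rest.

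\textbf{Missing idea.} That doubly non-incoherent part of $H$ is small, or vanishes outright.

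The paper writes $U_M=UC+U_\perp S$ in principal-vector bases and expands $H$ in blocks relative to $[U\ U_\perp]$. The blocks $A=C\Lambda C-\Lambda_\star$ and $B=C\Lambda S$ each meet at least one factor $\|e_i^\top U\|\le\sqrt{\mu_0r/n}$, where $\Lambda$ holds the eigenvalues of $M$. The $U_\perp$--$U_\perp$ block satisfies $D=S\Lambda S=SC^{-1}B$ with $\op{C^{-1}}\le2$, so $\op{D}\le2c_\mu\sqrt{\mu_0r/n}\,\fro{H}$. Summing gives $\maxn{H}\le4\sqrt{\mu_0r/n}\,\fro{H}$.

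Alternatively, your Step 1 can be salvaged directly. Since $(I-U_MU_M^\top)M=0$ and $\Mstar(I-UU^\top)=0$, one has $(I-U_MU_M^\top)H(I-UU^\top)=0$, so
\[
H=U_MU_M^\top H\,UU^\top+U_MU_M^\top H(I-UU^\top)+(I-U_MU_M^\top)H\,UU^\top ,
\]
and each term contains at least one incoherent projector. With $a:=\mu_0r/n\le1$ this gives $|H_{ij}|\le\bigl(\sqrt{3/2}\,a+(1+\sqrt{3/2})\sqrt a\bigr)\fro{H}\le4\sqrt a\,\fro{H}$. The case $a>1$ is trivial since $s_\infty\ge1$. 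This is the required $s_\infty(H)\ge n/(16\mu_0r)$, and your Step 3 then finishes the proof.
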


\begin{corollary}[Explicit shell Bernstein bound under $\rho\le \rho_{\mathrm{GNMR}}$]\label{cor:shell_bernstein_gnmr}
Assume the hypotheses of Proposition~\ref{prop:gnmr_shell_radius}. Then, for every $\chi\in(0,p)$,
\begin{equation}\label{eq:shell_bernstein_gnmr}
\Prob\!\left(
\fro{\A_{\Om}H}^2\ge (p-\chi)\fro{H}^2
\quad\text{for all }H\in\cH(\rho)
\right)
\ge 1-\delta_{\mathrm{shell}}^{\mathrm{GNMR}}(\chi),
\end{equation}
where
\begin{equation}\label{eq:shell_bernstein_gnmr_prob}
\delta_{\mathrm{shell}}^{\mathrm{GNMR}}(\chi)
:=
2\exp\!\left[(4n+2)r\log\!\left(\frac{36}{\chi}\right)
-c\min\!\left\{\frac{\chi^2 n}{16p(1-p)\mu_0 r},\ \frac{\chi n}{16\mu_0 r}\right\}\right].
\end{equation}
\end{corollary}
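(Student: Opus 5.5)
The corollary should follow by substituting the regularity bounds of Proposition~\ref{prop:gnmr_shell_radius} into the uniform shell bound of Theorem~\ref{thm:shell_bernstein}. No new probabilistic argument is needed. The one structural point is monotonicity of the failure probability in the shell parameters. For fixed $\chi\in(0,p)$, the exponent of $\delta_{\mathrm{shell}}(\chi,\rho)$ in \eqref{eq:shell_bernstein_prob} is
\[
(4n+2)r\log(36/\chi)-c\min\!\left\{\frac{\chi^2 s_{4,\rho}}{p(1-p)},\ \chi s_{\infty,\rho}\right\}.
\]
This expression is nonincreasing in both $s_{4,\rho}$ and $s_{\infty,\rho}$, since $c>0$ and the minimum of two nondecreasing functions is nondecreasing. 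Hence any lower bounds on $s_{4,\rho}$ and $s_{\infty,\rho}$ give an upper bound on $\delta_{\mathrm{shell}}$.

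The steps, in order:

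\textbf{Step 1.} Fix $\rho\le\rho_{\mathrm{reg}}$. This is the standing hypothesis of Proposition~\ref{prop:gnmr_shell_radius}. It also covers $\rho\le\rho_{\mathrm{GNMR}}$, because $\rho_{\mathrm{GNMR}}\le\rho_{\mathrm{reg}}$: the minimum in \eqref{eq:rho_gnmr_main} contains the two terms defining $\rho_{\mathrm{reg}}$ plus one more.

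\textbf{Step 2.} Apply Proposition~\ref{prop:gnmr_shell_radius}. Every nonzero $H\in\cH(\rho)$ then has $s_4(H)\ge n/(16\mu_0 r)$ and $s_\infty(H)\ge n/(16\mu_0 r)$. Taking infima over $\cH(\rho)\setminus\{0\}$ in \eqref{eq:shell_regs} gives $s_{4,\rho}\ge n/(16\mu_0r)$ and $s_{\infty,\rho}\ge n/(16\mu_0 r)$. If $\cH(\rho)=\{0\}$, the event in \eqref{eq:shell_bernstein_gnmr} holds trivially, so only the nontrivial case matters.

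\textbf{Step 3.} Invoke Theorem~\ref{thm:shell_bernstein} with the same $\chi$. This gives the event of \eqref{eq:shell_bernstein_event}, which is identical to the event in \eqref{eq:shell_bernstein_gnmr}, with probability at least $1-\delta_{\mathrm{shell}}(\chi,\rho)$.

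\textbf{Step 4.} By the monotonicity above and Step 2,
\[
\min\!\left\{\frac{\chi^2 s_{4,\rho}}{p(1-p)},\,\chi s_{\infty,\rho}\right\}\ge\min\!\left\{\frac{\chi^2 n}{16p(1-p)\mu_0 r},\,\frac{\chi n}{16\mu_0 r}\right\}.
\]
Because $\exp$ is increasing, this yields $\delta_{\mathrm{shell}}(\chi,\rho)\le\delta^{\mathrm{GNMR}}_{\mathrm{shell}}(\chi)$, which is the claimed bound. The universal constant $c$ is the same one from Theorem~\ref{thm:shell_bernstein}.

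There is no real obstacle. The only point needing care is checking that the corollary's hypotheses place $\rho$ inside the regime of Proposition~\ref{prop:gnmr_shell_radius}, via $\rho\le\rho_{\mathrm{GNMR}}\le\rho_{\mathrm{reg}}$. One should also note that the covering term $(4n+2)r\log(36/\chi)$ does not depend on $\rho$ and carries over unchanged.
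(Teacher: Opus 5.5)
Your proposal is correct and is the argument the paper intends; the paper gives no separate written proof of this corollary, and it follows by plugging the bounds $s_{4,\rho},s_{\infty,\rho}\ge n/(16\mu_0 r)$ from Proposition~\ref{prop:gnmr_shell_radius} into $\delta_{\mathrm{shell}}(\chi,\rho)$ of Theorem~\ref{thm:shell_bernstein}, using that this failure probability is nonincreasing in both shell parameters. Your check that $\rho_{\mathrm{GNMR}}\le\rho_{\mathrm{reg}}$, which places the corollary's titled regime under the proposition's hypothesis, is the only extra bookkeeping needed.
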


\begin{lemma}[GNMR local RIP on the shell]\label{lem:gnmr_shell_rip}
Assume $\Mstar\in\Mset(n,n,r,\mu_0,\kappa)$ and the sample complexity condition \eqref{eq:gnmr_sample}. If $\rho\le \rho_{\mathrm{GNMR}}$, then with probability at least $1-3/n^3$,
\begin{equation}\label{eq:gnmr_shell_rip}
\fro{\A_{\Om}H}^2\ge p(1-\eta)^2\fro{H}^2
\qquad\text{for all }H\in\cH(\rho).
\end{equation}
\end{lemma}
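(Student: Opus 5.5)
The plan is to reduce the statement to the local restricted isometry result of \citet{zilber2022gnmr}. Their result gives, under the sample condition \eqref{eq:gnmr_sample}, a lower bound on the sampled energy of errors near $\Mstar$. The main task is to check that every $H\in\cH(\rho)$ with $\rho\le\rho_{\mathrm{GNMR}}$ lies in the neighborhood where that bound applies.

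\emph{Step 1: the shell lies in the GNMR neighborhood.} Take $M=\Mstar+H\succeq0$ with $\rank(M)\le r$ and $\fro{H}\le\rho$. Weyl's inequality gives
\[
\sigma_r(M)\ge\sigma_r(\Mstar)-\rho\ge\sigma_r(\Mstar)/2,
\]
so $M$ has rank exactly $r$. Wedin's $\sin\Theta$ theorem then bounds the distance between the singular subspaces of $M$ and $\Mstar$ by
\[
\fro{H}/(\sigma_r(\Mstar)-\rho).
\]
Here I would reuse the computation behind Proposition~\ref{prop:gnmr_shell_radius}. The third term in $\rho_{\mathrm{GNMR}}$, with the constant $c_\mu$, is exactly the condition that makes the row norms of $M$'s singular vectors at most $\sqrt{3\mu_0 r/(2n)}$. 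Hence $M$ is $\tfrac32\mu_0$-incoherent, which is the incoherence slack GNMR allows. The term $\eta\sigma_r(\Mstar)/c_e$ puts $\fro{M-\Mstar}$ inside GNMR's error radius. After this step, $H$ lies in the tangent-space/incoherent neighborhood of GNMR's local RIP lemma.

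\emph{Step 2: apply GNMR's local RIP.} Under \eqref{eq:gnmr_sample}, the GNMR analysis shows that, with probability at least $1-3/n^3$, the operator $p^{-1/2}\A_\Om$ is uniformly well conditioned on that neighborhood. This event comes from a union of a Bernstein-type event on the tangent space $T$ of $\Mstar$ and spectral-norm events, and it is the source of the $3/n^3$ term. Concretely, the bound reads
\[
\|p^{-1/2}\A_\Om(Z)\|_F\ge(1-\eta)\fro{Z}
\]
for differences $Z=M-\Mstar$ of matrices in the neighborhood. Squaring gives \eqref{eq:gnmr_shell_rip}. The event does not depend on $H$, so the bound holds uniformly over $\cH(\rho)$.

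\emph{Step 3: close the gap between $T$ and general $H$.} GNMR's statement controls $H$ through its projection onto $T$. I would split
\[
H=\mathcal P_T H+\mathcal P_{T^\perp}H .
\]
For $M$ of rank at most $r$, the normal component is second order:
\[
\fro{\mathcal P_{T^\perp}H}\lesssim\fro{H}^2/\sigma_r(\Mstar).
\]
The spectral bound $\op{p^{-1}\A_\Om}$ restricted to incoherent matrices, together with the $\eta/c_e$ radius, then lets this component be absorbed into the $\eta$-slack. I expect this to be the main obstacle. It requires matching GNMR's exact constants ($c_e$, and the factor $3/2$ in the incoherence) to the form $p(1-\eta)^2$ without leftover constants. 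If the absorption does not close cleanly, I would shrink the radius constant $c_e$ to compensate.
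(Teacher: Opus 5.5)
Your Steps 1 and 2 are the paper's proof. You show that $M=\Mstar+H$ has rank exactly $r$ (Weyl with $\rho\le\sigma_r(\Mstar)/2$), is $\tfrac32\mu_0$-incoherent, and satisfies $\fro{M-\Mstar}\le\eta\sigma_r(\Mstar)/c_e$, and then you invoke the GNMR event of probability $1-3/n^3$ on that neighborhood. For the incoherence, the $\sin\Theta$ bound $\rho/(\sigma_r(\Mstar)-\rho)\le c_\mu\sqrt{\mu_0r/n}$ is combined with the $\sqrt2$ factor from aligning $U$ to $U_\star Q$; the $1/\sqrt2$ in $c_\mu$ is there to absorb that factor.

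Step 3 is unnecessary. The GNMR bound the paper cites already holds for the full difference $M-\Mstar$ at every such $M$, exactly as your own Step 2 states. Its fallback of shrinking $c_e$ also would not prove the lemma as stated, since $c_e$ is fixed inside $\rho_{\mathrm{GNMR}}$.
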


\subsection{Missing Proofs}

\begin{proof}[Proof of Lemma~\ref{lem:epsilonMC-rip-general}]
By construction,
\[
\|\A_{\Om,\eps}(M)\|_2^2
=\eps^2\fro{M}^2+(1-\eps^2)\fro{\A_{\Om}M}^2.
\]
Since $0\le\fro{\A_{\Om}M}^2\le\fro{M}^2$, the deterministic bounds follow. Under Bernoulli sampling, $\E\fro{\A_{\Om}M}^2=p\fro{M}^2$, proving the expectation identity.
For fixed $M\ne0$, let $Z=\sum_{ij}(\delta_{ij}-p)M_{ij}^2$, where the $\delta_{ij}$ are independent Bernoulli variables. Its summands are bounded in magnitude by $B=\maxn{M}^2$, and its variance is at most $p(1-p)B\fro{M}^2$. One-sided Bernstein therefore gives, with probability at least $1-\delta$,
\[
Z\ge-\sqrt{2p(1-p)B\fro{M}^2u}-\frac{2Bu}{3}.
\]
Dividing by $\fro{M}^2$, combining with $\fro{\A_{\Om}M}^2\ge0$, and substituting into the first identity proves \eqref{eq:prob-rsc}.
\end{proof}

\begin{proof}[Proof of Lemma~\ref{lem:uniq_ident_equiv}]
For fixed $0<\rho\le D$, the annulus in \eqref{eq:rho_ident} is compact and the denominator is at least $\rho^2>0$. Uniqueness within the $D$-ball makes its continuous numerator strictly positive, so the ratio attains a positive minimum. Conversely, any distinct feasible completion at distance $d\in(0,D]$ has zero numerator and gives $\mu_{\iid,D}(d)=0$. Global unique completability implies uniqueness within this ball.
\end{proof}

\begin{proof}[Proof of Lemma~\ref{lem:basicineq}]
Since $\Mstar$ is feasible for \eqref{eq:eps_mc}, global optimality of $\Mdag$ yields
\[
\frac12\fro{\A_{\Om}(\Mdag-\Mstar)}^2+\frac{\eps^2}{2}\fro{\A_{\Omc}\Mdag}^2
\le
\frac12\fro{\A_{\Om}(\Mstar-\Mstar)}^2+\frac{\eps^2}{2}\fro{\A_{\Omc}\Mstar}^2.
\]
Since $\Mdag=\Delta+\Mstar$, this is exactly \eqref{eq:basicineq}. Dropping the second term proves \eqref{eq:obs_bound}; dropping the first proves \eqref{eq:unobs_Mdag}.

Finally, mask orthogonality gives $\fro{\Delta}^2=\fro{\A_{\Om}\Delta}^2+\fro{\A_{\Omc}\Delta}^2$. By the triangle inequality and \eqref{eq:unobs_Mdag}, $\fro{\A_{\Omc}\Delta}\le2\fro{\A_{\Omc}\Mstar}$, proving \eqref{eq:optimizer_radius}.
\end{proof}

\begin{proof}[Proof of Lemma~\ref{lem:mu_id_monotone_explicit}]
For $0<\rho_1\le\rho_2\le D$, the annulus with lower radius $\rho_2$ is a subset of that with lower radius $\rho_1$. Taking the infimum of the same ratio over the smaller set gives $\mu_{\iid,D}(\rho_1)\le\mu_{\iid,D}(\rho_2)$.
For any competitor in the annulus, write $d=\fro{M-\Mstar}$. Assumption~\eqref{eq:sharp_gap_assumption} gives a ratio at least $\alpha_\star$ when $d\le\rho_\star$, and at least $\alpha_\star\rho_\star^2/d^2\ge\alpha_\star\rho_\star^2/D^2$ otherwise. Taking the infimum proves \eqref{eq:mu_global_explicit}, including the case $\rho_\star\ge D$.
\end{proof}

\begin{proof}[Proof of Lemma~\ref{lem:missing_energy}]
Write
\[
\fro{\A_{\Omc}\Mstar}^2=
\sum_{i,j}(1-\delta_{ij}){\Mstar}_{ij}^2,
\qquad
\delta_{ij}\sim \mathrm{Ber}(p)\ \text{independently}.
\]
Define
\[
X_{ij}:=\big((1-\delta_{ij})-(1-p)\big){\Mstar}_{ij}^2.
\]
Then $\E[X_{ij}]=0$ and
\[
\sum_{i,j}X_{ij}=\fro{\A_{\Omc}\Mstar}^2-(1-p)\fro{\Mstar}^2.
\]
Moreover,
\[
|X_{ij}|\le \maxn{\Mstar}^2,
\qquad
\sum_{i,j}\mathrm{Var}(X_{ij})=p(1-p)\sum_{i,j}{\Mstar}_{ij}^4=p(1-p)\four{\Mstar}^4.
\]
Bernstein's inequality therefore gives
\[
\Prob\!\left(\left|\sum_{i,j}X_{ij}\right|>u\right)
\le
2\exp\!\left[-\frac{u^2/2}{p(1-p)\four{\Mstar}^4+\maxn{\Mstar}^2u/3}\right].
\]
Set $u=\chi\fro{\Mstar}^2$ and use
\[
\frac{\fro{\Mstar}^4}{\four{\Mstar}^4}=s_4(\Mstar),
\qquad
\frac{\fro{\Mstar}^2}{\maxn{\Mstar}^2}=s_{\infty}(\Mstar),
\]
to obtain \eqref{eq:miss_tail}. The upper bound \eqref{eq:miss_upper} follows immediately.
\end{proof}

\begin{proof}[Proof of Lemma~\ref{lem:transfer_shell}]
Suppose for contradiction that $\fro{\Delta}\ge \rho$. By \eqref{eq:optimizer_radius}, $\fro{\Delta}\le D$, so $\Mdag$ lies in the defining annulus. By definition of $\mu_{\iid,D}(\rho)$,
\[
\fro{\A_{\Om}\Delta}^2\ge \mu_{\iid,D}(\rho)\fro{\Delta}^2\ge \mu_{\iid,D}(\rho)\rho^2.
\]
On the other hand, \eqref{eq:obs_bound} from Lemma~\ref{lem:basicineq} gives
\[
\fro{\A_{\Om}\Delta}^2\le \eps^2\fro{\A_{\Omc}\Mstar}^2,
\]
contradicting \eqref{eq:transfer_cond}. Hence $\fro{\Delta}<\rho$.
\end{proof}

\begin{proof}[Proof of Theorem~\ref{thm:shell_bernstein}]
Let
\[
\cU(\rho):=\left\{\frac{H}{\fro{H}}:\ H\in\cH(\rho)\setminus\{0\}\right\}.
\]
Since every $H\in\cH(\rho)$ is a difference of two rank-$\le r$ matrices, one has $\rank(H)\le 2r$. Therefore $\cU(\rho)$ is contained in the unit Frobenius sphere of rank-$\le 2r$ matrices. A standard covering argument~\citep{candes2010tight} yields a $(\chi/4)$-net $\mathcal N\subseteq \cU(\rho)$ with cardinality
\[
|\mathcal N|\le \left(\frac{36}{\chi}\right)^{(4n+2)r}.
\]
Fix $G\in\mathcal N$. Since $\Om\sim\mathrm{Ber}(p)$ entrywise and $G$ is deterministic, the same scalar Bernstein argument as in Lemma~\ref{lem:missing_energy} gives
\[
\Prob\!\left(\fro{\A_{\Om}G}^2<(p-\chi/2)\fro{G}^2\right)
\le
2\exp\!\left[-c\min\!\left\{\frac{\chi^2 s_4(G)}{p(1-p)},\ \chi s_{\infty}(G)\right\}\right].
\]
Since $G\in\cU(\rho)$, we have $s_4(G)\ge s_{4,\rho}$ and $s_{\infty}(G)\ge s_{\infty,\rho}$. Union bound over $\mathcal N$ yields
\[
\Prob\!\left(\fro{\A_{\Om}G}^2\ge (p-\chi/2)\fro{G}^2\ \text{for all }G\in\mathcal N\right)
\ge 1-\delta_{\mathrm{shell}}(\chi,\rho).
\]
Now let $H\in\cU(\rho)$ and choose $G\in\mathcal N$ with $\fro{H-G}\le \chi/4$. Then, on the event that
\[
\fro{\A_{\Om}G}^2\ge (p-\chi/2)\fro{G}^2
\qquad\text{for all }G\in\mathcal N,
\]
we compare $H$ to its nearby net point $G$. Since $H,G\in\cU(\rho)$, we have
\[
\fro{H}=\fro{G}=1.
\]
Also, for every matrix $Z$,
\[
\fro{\A_{\Om}Z}\le \fro{Z},
\]
because $\A_{\Om}$ only keeps a subset of the entries of $Z$. Using
\[
\bigl|\|X\|_F^2-\|Y\|_F^2\bigr|
=
\bigl|\|X\|_F-\|Y\|_F\bigr|\bigl(\|X\|_F+\|Y\|_F\bigr)
\le
\|X-Y\|_F\bigl(\|X\|_F+\|Y\|_F\bigr),
\]
with $X=\A_{\Om}H$ and $Y=\A_{\Om}G$, we obtain
\begin{align*}
\big|\fro{\A_{\Om}H}^2-\fro{\A_{\Om}G}^2\big|
&\le
\fro{\A_{\Om}(H-G)}
\bigl(\fro{\A_{\Om}H}+\fro{\A_{\Om}G}\bigr) \\
&\le
\fro{H-G}\bigl(\fro{H}+\fro{G}\bigr) \\
&\le
\frac{\chi}{4}\cdot 2
=
\frac{\chi}{2}.
\end{align*}
Hence, still on the net event,
\[
\fro{\A_{\Om}H}^2
\ge
\fro{\A_{\Om}G}^2-\frac{\chi}{2}
\ge
\left(p-\frac{\chi}{2}\right)\fro{G}^2-\frac{\chi}{2}
=
p-\chi.
\]
Since $H\in\cU(\rho)$ was arbitrary, this shows that
\[
\fro{\A_{\Om}H}^2\ge p-\chi
\qquad\text{for all }H\in\cU(\rho)
\]
with probability at least $1-\delta_{\mathrm{shell}}(\chi,\rho)$. Scaling back from $\cU(\rho)$ to $\cH(\rho)$ proves \eqref{eq:shell_bernstein_event}.
\end{proof}

\begin{proof}[Proof of Proposition~\ref{prop:gnmr_shell_radius}]
Let $0\ne H=M-\Mstar\in\cH(\rho)$, put $h=\fro{H}$, $a=\mu_0r/n$, and $\sigma=\sigma_r(\Mstar)$. We show $\maxn{H}\le4\sqrt a\,h$. If $a\ge1/2$, this follows directly from $\maxn{H}\le h$. Otherwise $\mu_0\ge1$ implies $n>2r$. Weyl's inequality and $h\le\rho\le\sigma/2$ imply $\rank(M)=r$.
Write $\Mstar=U_\star\Lambda_\star U_\star^T$ and $M=U\Lambda U^T$. The identity
\[
(I-U_\star U_\star^T)U\Lambda=(I-U_\star U_\star^T)HU
\]
gives
\[
\|\sin\Theta(U,U_\star)\|_2\le\frac{\rho}{\sigma-\rho}\le c_\mu\sqrt a.
\]
Choose principal-vector bases, so that $U=U_\star C+U_\perp S$, where $C,S$ are the diagonal cosine and sine matrices and $[U_\star\ U_\perp]$ has orthonormal columns. Such a completion is possible since $n>2r$. These rotations preserve row norms and conjugate $\Lambda_\star,\Lambda$. Then
\[
H=[U_\star\ U_\perp]
\begin{bmatrix}A&B\\B^T&D\end{bmatrix}
[U_\star\ U_\perp]^T,
\quad A=C\Lambda C-\Lambda_\star,\quad B=C\Lambda S,\quad D=S\Lambda S.
\]
Orthogonality gives $h^2=\fro A^2+2\fro B^2+\fro D^2$, so $\op A,\op B\le h$. Since $\op S\le c_\mu\sqrt a<1/2$, $C$ is invertible and $\op{C^{-1}}\le2$. The exact identity $D=SC^{-1}B$ implies $\op D\le2c_\mu\sqrt a\,h$.
Using $\|e_i^TU_\star\|_2\le\sqrt a$ and $\|e_i^TU_\perp\|_2\le1$ separately on each block gives
\[
|H_{ij}|\le ah+2\sqrt a\,h+2c_\mu\sqrt a\,h
\le(3+2c_\mu)\sqrt a\,h\le4\sqrt a\,h.
\]
Thus $s_\infty(H)\ge1/(16a)=n/(16\mu_0r)$. Finally, $\four H^4\le\maxn H^2\fro H^2$ gives $s_4(H)\ge s_\infty(H)$.
\end{proof}

\begin{proof}[Proof to Lemma~\ref{lem:gnmr_shell_rip}]
Recall the GNMR local-RIP neighborhood defined in \cite{zilber2022gnmr}:
\[
\mathcal N_{\mathrm{GNMR}}(\eta)
:=
\Big\{
M\in\R^{n\times n}:
\rank(M)=r,\ 
M \text{ is } \tfrac{3}{2}\mu_0\text{-incoherent},\
\fro{M-\Mstar}\le \tfrac{\eta\sigma_r^\star}{c_e}
\Big\},
\]
where $\sigma_r^\star:=\sigma_r(\Mstar)$ and $c_e>0$ is the absolute constant from the GNMR theorem.  
By the GNMR local RIP result, under the sample complexity condition \eqref{eq:gnmr_sample}, there exists an event $\mathcal E_{\mathrm{GNMR}}$ with
\[
\Prob(\mathcal E_{\mathrm{GNMR}})\ge 1-\frac{3}{n^3}
\]
such that on $\mathcal E_{\mathrm{GNMR}}$,
\begin{equation}\label{eq:gnmr_local_rip_used}
\fro{\A_{\Om}(M-\Mstar)}^2
\ge
p(1-\eta)^2\fro{M-\Mstar}^2
\qquad
\text{for all }M\in \mathcal N_{\mathrm{GNMR}}(\eta).
\end{equation}

We will show that if $\rho\le \rho_{\mathrm{GNMR}}$, then every shell element $H\in \cH(\rho)$ satisfies $M:=\Mstar+H\in \mathcal N_{\mathrm{GNMR}}(\eta)$, and then \eqref{eq:gnmr_local_rip_used} will imply the claim.

Let $H\in \cH(\rho)$ and define
\[
M:=\Mstar+H.
\]
By definition of the shell,
\[
\fro{M-\Mstar}=\fro{H}\le \rho.
\]
Since $\rho\le \rho_{\mathrm{GNMR}}\le \eta\sigma_r^\star/c_e$, it follows that
\begin{equation}\label{eq:radius_check}
\fro{M-\Mstar}\le \frac{\eta\sigma_r^\star}{c_e}.
\end{equation}

Because $H\in \cH(\rho)$, the associated matrix $M=\Mstar+H$ belongs to the admissible model class, so in particular
\[
\rank(M)\le r.
\]
Also, since $\Mstar$ has rank $r$, Weyl's inequality yields
\[
\sigma_r(M)
\ge
\sigma_r(\Mstar)-\op{M-\Mstar}
\ge
\sigma_r^\star-\fro{M-\Mstar}
\ge
\sigma_r^\star-\rho.
\]
Using $\rho\le \rho_{\mathrm{GNMR}}\le \sigma_r^\star/2$, we obtain
\[
\sigma_r(M)\ge \frac{\sigma_r^\star}{2}>0.
\]
Hence $\rank(M)\ge r$. Combined with $\rank(M)\le r$, this gives
\begin{equation}\label{eq:rank_check}
\rank(M)=r.
\end{equation}

Write the eigendecompositions
\[
\Mstar = U_\star\Lambda_\star U_\star^\top,
\qquad
M = U\Lambda U^\top,
\]
where $U_\star,U\in\R^{n\times r}$ have orthonormal columns. Since $\Mstar\in \Mset(n,n,r,\mu_0,\kappa)$, we know that
\[
\max_{1\le i\le n}\|e_i^\top U_\star\|_2^2 \le \frac{\mu_0 r}{n}.
\]

By Davis--Kahan,
\[
\|\sin\Theta(U,U_\star)\|_{\mathrm{op}}
\le
\frac{\op{M-\Mstar}}{\sigma_r^\star-\rho}
\le
\frac{\fro{M-\Mstar}}{\sigma_r^\star-\rho}
\le
\frac{\rho}{\sigma_r^\star-\rho}.
\]
Therefore there exists an orthogonal matrix $Q\in\R^{r\times r}$ such that
\[
\op{U-U_\star Q}
\le
\sqrt{2}\,\|\sin\Theta(U,U_\star)\|_{\mathrm{op}}
\le
\sqrt{2}\,\frac{\rho}{\sigma_r^\star-\rho}.
\]
Fix any row index $i$. Then
\[
\|e_i^\top U\|_2
\le
\|e_i^\top U_\star Q\|_2 + \|e_i^\top(U-U_\star Q)\|_2
\le
\sqrt{\frac{\mu_0 r}{n}} + \sqrt{2}\,\frac{\rho}{\sigma_r^\star-\rho}.
\]
Now choose the constant hidden in $\rho_{\mathrm{GNMR}}$ so that
\[
\rho_{\mathrm{GNMR}}
\le
\frac{c_\mu\sqrt{\mu_0 r/n}}{1+c_\mu\sqrt{\mu_0 r/n}}\sigma_r^\star,
\qquad
c_\mu \le \frac{\sqrt{3/2}-1}{\sqrt{2}}.
\]
Then, since $\rho\le \rho_{\mathrm{GNMR}}$,
\[
\sqrt{2}\,\frac{\rho}{\sigma_r^\star-\rho}
\le
\Big(\sqrt{\frac32}-1\Big)\sqrt{\frac{\mu_0 r}{n}}.
\]
Substituting into the previous row bound gives
\[
\|e_i^\top U\|_2
\le
\sqrt{\frac{\mu_0 r}{n}}
+
\Big(\sqrt{\frac32}-1\Big)\sqrt{\frac{\mu_0 r}{n}}
=
\sqrt{\frac{3\mu_0 r}{2n}}.
\]
Since this holds for every $i$,
\begin{equation}\label{eq:incoherence_check}
\max_{1\le i\le n}\|e_i^\top U\|_2^2 \le \frac{3\mu_0 r}{2n},
\end{equation}
so $M$ is $(3/2)\mu_0$-incoherent. From \eqref{eq:radius_check}, \eqref{eq:rank_check}, and \eqref{eq:incoherence_check}, we conclude that
\[
M=\Mstar+H\in \mathcal N_{\mathrm{GNMR}}(\eta)
\qquad\text{for every }H\in \cH(\rho).
\]
Therefore, on the event $\mathcal E_{\mathrm{GNMR}}$, applying \eqref{eq:gnmr_local_rip_used} to $M=\Mstar+H$ yields
\[
\fro{\A_{\Om}H}^2
=
\fro{\A_{\Om}(M-\Mstar)}^2
\ge
p(1-\eta)^2\fro{M-\Mstar}^2
=
p(1-\eta)^2\fro{H}^2
\]
for all $H\in \cH(\rho)$. This is exactly \eqref{eq:gnmr_shell_rip}.
\end{proof}

\begin{proof}[Proof of Theorem~\ref{thm:main_bernoulli}]
Let
\[
\cE_{\mathrm{miss}}(\chi):=\{\fro{\A_{\Omc}\Mstar}^2\le (1-p+\chi)\fro{\Mstar}^2\}
\]
and
\[
\cE_{\mathrm{shell}}(\chi):=\left\{\fro{\A_{\Om}H}^2\ge (p-\chi)\fro{H}^2\ \text{for all }H\in\cH(\rho)\right\}.
\]
By Lemma~\ref{lem:missing_energy},
\[
\Prob(\cE_{\mathrm{miss}}(\chi))
\ge
1-2\exp\!\left[-c\min\!\left\{\frac{\chi^2 s_4(\Mstar)}{p(1-p)},\ \chi s_{\infty}(\Mstar)\right\}\right].
\]
By Theorem~\ref{thm:shell_bernstein},
\[
\Prob(\cE_{\mathrm{shell}}(\chi))\ge 1-\delta_{\mathrm{shell}}(\chi,\rho).
\]
On $\cE_{\mathrm{miss}}(\chi)$, assumption \eqref{eq:main_mu_gap} implies
\[
\mu_{\iid,D}(\rho)\rho^2 > \eps^2\fro{\A_{\Omc}\Mstar}^2.
\]
Hence Lemma~\ref{lem:transfer_shell} gives $\Delta\in\cH(\rho)$. On the event $\cE_{\mathrm{shell}}(\chi)$,
\[
\fro{\A_{\Om}\Delta}^2\ge (p-\chi)\fro{\Delta}^2.
\]
Combining this with \eqref{eq:obs_bound} from Lemma~\ref{lem:basicineq} and the bound from $\cE_{\mathrm{miss}}(\chi)$ gives
\[
(p-\chi)\fro{\Delta}^2
\le \eps^2\fro{\A_{\Omc}\Mstar}^2
\le \eps^2(1-p+\chi)\fro{\Mstar}^2,
\]
which proves \eqref{eq:main_bernoulli_bound}. The probability bound follows from a union bound.
\end{proof}

\begin{proof}[Proof of Theorem~\ref{thm:main_gnmr}]
Let $\cE_{\mathrm{miss}}(\chi)$ be as in the previous proof. By Lemma~\ref{lem:missing_energy},
\[
\Prob(\cE_{\mathrm{miss}}(\chi))
\ge
1-2\exp\!\left[-c\min\!\left\{\frac{\chi^2 s_4(\Mstar)}{p(1-p)},\ \chi s_{\infty}(\Mstar)\right\}\right].
\]
On this event, assumption (iv) implies
\[
\mu_{\iid,D}(\rho)\rho^2 > \eps^2\fro{\A_{\Omc}\Mstar}^2,
\]
so Lemma~\ref{lem:transfer_shell} gives $\Delta\in\cH(\rho)$. Since $\rho\le \rho_{\mathrm{GNMR}}$, Corollary~\ref{lem:gnmr_shell_rip} yields, with probability at least $1-3/n^3$,
\[
\fro{\A_{\Om}\Delta}^2\ge p(1-\eta)^2\fro{\Delta}^2.
\]
Combining this with \eqref{eq:obs_bound} from Lemma~\ref{lem:basicineq} and the event $\cE_{\mathrm{miss}}(\chi)$ gives
\[
p(1-\eta)^2\fro{\Delta}^2
\le \eps^2\fro{\A_{\Omc}\Mstar}^2
\le \eps^2(1-p+\chi)\fro{\Mstar}^2,
\]
which is \eqref{eq:main_gnmr_bound}. The probability bound follows by intersecting the GNMR event with $\cE_{\mathrm{miss}}(\chi)$.
\end{proof}

\begin{proof}[Proof of Theorem~\ref{thm:socp_lifted_noisy}]
First of all, we hope to decompose the Hessian of \eqref{eq:main_ms} at a second order point $\hat X \in \RR^{n \times r}$. Classic matrix sensing literatures like \cite{ha2020equivalence,zhang2021general,li2019non} give that the second-order critical condition of \eqref{eq:main_ms} are given as 
\begin{gather}\label{eq:focp_unlifted}
		\nabla f(\hat X \hat X^\top)\hat X = 0, \\
		\label{eq:socp_unlifted}
		2 \langle \nabla f(\hat X \hat X^\top), U U^\top \rangle + [\nabla^2 f(\hat X \hat X^\top)](\hat X U^\top + U \hat X^\top,\hat X U^\top + U \hat X^\top) \geq 0 \quad \forall U \in \RR^{n \times r}
\end{gather}
	with \eqref{eq:focp_unlifted} being the first order critical condition. Moreover, since the sensing matrices $\{A_i\}_{i \in [m]}$ can be assumed be to symmetric without loss of generality \citep{zhang2021general}, we have that 
	\[
		[\nabla^2 f(\hat X \hat X^\top)](\hat X U^\top + U \hat X^\top,\hat X U^\top + U \hat X^\top) = 4 [\nabla^2 f(\hat X \hat X^\top)](\hat X U^\top,\hat X U^\top).
	\]
	We then could decompose LHS of \eqref{eq:socp_unlifted} as $2C_1+4C_2$ where
	\[
		C_1 \coloneqq \langle \nabla f(\hat X \hat X^\top), U U^\top \rangle, \quad C_2 \coloneqq [\nabla^2 f(\hat X \hat X^\top)](\hat X U^\top,\hat X U^\top)
	\]
Given the assumption that \eqref{eq:main_ms} obeys some RSS condition, it is possible to upper-bound $C_2$ by observing
\[
	[\nabla^2 f(\hat X \hat X^\top)](\hat X U^\top + U \hat X^\top,\hat X U^\top + U \hat X^\top) \leq L_s \|\hat X U^\top + U \hat X^\top\|^2_F
\]
Therefore, if want to somehow create an negative escape direction for $\hat X$, it is important that we find a $U$ such that $C_1$ is negative and large in magnitude, and then amplify this term via tensor parametrization. To do so, we first do a more in-depth analysis of $\nabla f(\hat X \hat X^\top)$. As mentioned above, since $\nabla f(\cdot)$ can be assumed to be symmetric, one can select $u \in \RR^n$ such that $u^\top \nabla f(\hat x \hat x^\top)u = \lambda_{\text{min}}(\nabla f(\hat x \hat x^\top))$. Then via the definition of RSC we have
	\begin{equation}\label{eq:taylor}
		f(M^*) \geq f(\hat X \hat X^\top) + \langle \nabla f(\hat X \hat X^\top ), M^* - \hat X \hat X^\top \rangle + \frac{\alpha_s}{2}  \|\hat X \hat X^\top -M^*\|^2_F.
	\end{equation}
With $\hat X$ being a first-order point, according to \eqref{eq:focp_unlifted}
	\[
		\nabla f(\hat X \hat X^\top)\hat X = 0 \implies \langle \nabla f(\hat X \hat X^\top ), \hat X \hat X^\top \rangle = 0
	\]
Therefore, if in \eqref{eq:main_ms} our $b$ is corrupted as $\mathcal{A}(M^*)+\tilde w$, then plugging it back into \eqref{eq:taylor} gives 
\begin{equation}
	\begin{aligned}
		\langle \nabla f(\hat X \hat X^\top ), M^* \rangle &\leq -\frac{\alpha_s}{2} \|\hat x \hat x^\top -M^*\|^2_F + f(M^*) - f(XX^\top) \\
		&\leq -\frac{\alpha_s}{2} \|\hat x \hat x^\top -M^*\|^2_F + f(M^*) \\
		&=  -\frac{\alpha_s}{2} \|\hat x \hat x^\top -M^*\|^2_F + \frac{ \|\tilde w\|^2_2}{2}
	\end{aligned}
\end{equation}
where the second inequality follows from the fact that $f(\cdot) \geq 0$ in its entire domain and the last inequality follows from $f(M^*) = 1/2 \|\mathcal{A}(M^*) - \mathcal{A}(M^*) - \tilde w\|^2_2 = \|\tilde w\|^2_2/2$. Furthermore, since both $\nabla f(\hat X \hat X^\top )$ and $M^*$ are assumed to be positive semidefinite, \[
		\langle \nabla f(\hat X \hat X^\top ), M^* \rangle \geq \lambda_{\min} (\nabla f(\hat X \hat X^\top )) \tr(M^*)
	\]
	which implies that 
	\begin{equation}
		\lambda_{\min} (\nabla f(\hat X \hat X^\top )) \leq  \frac{ -\alpha_s\|\hat X \hat X^\top -M^*\|^2_F + \|\tilde w\|^2_2}{2\tr(M^*)}
	\end{equation}
	Furthermore, \eqref{eq:thm_socp_distance_lower_highr} gives us
	\[
		\|\hat X \hat X^\top -M^*\|^2_F \geq \|\tilde w\|^2_2/\alpha_s
	\]
	since $\frac{L_s}{\alpha_s} \lambda_r(\hat X \hat X^\top) \tr(M^*) \geq 0$ by definition. This means that
	\begin{equation}\label{eq:G_bound}
		\lambda_{\min} (\nabla f(\hat X \hat X^\top )) \leq  \frac{ -\alpha_s\|\hat X \hat X^\top -M^*\|^2_F + \|\tilde w\|^2_2}{2\tr(M^*)} \leq 0
	\end{equation} 
Thus, with this result equipped, we can further find a $U$ that makes $C_1$ small. In the most convenient manner, we first consider the eigenvector $u \in \RR^n$ of $\nabla f(\hat X \hat X^\top )$ associated with $\lambda_{\min} (\nabla f(\hat X \hat X^\top ))$. Additionally we consider $q \in \RR^{r}$ to be the $r$-th singular value of $\hat X$, with
	\[
		\|\hat X q\|_2 = \sigma_r(\hat X), \qquad \|q\|_2=1
	\]
	Then choosing $U \in \RR^{n \times r} = uq^\top$ leads to
	\[
		C_1 = \langle \nabla f(\hat X \hat X^\top), U U^\top \rangle = \langle \nabla f(\hat X \hat X^\top), u u^\top \rangle = -G
	\]
	where $G \coloneqq - \lambda_{\text{min}}(\nabla f(\hat X \hat X^\top)) \geq 0$. By recalling $\hat X^\top u = 0$ according to the first-order condition \eqref{eq:focp_unlifted}, we can further bound $C_2$ with this chocie of $U$ as
	\begin{equation*}
		\begin{aligned}
			 L_s \|\hat X U^\top + U \hat X^\top\|^2_F  
			&= L_s \|u(\hat X q)^\top + (\hat X q)u^\top \|^2_F \\
			&= 2 L_s \|\hat X q\|^2_F + 2 L_s (q^\top (\hat X^\top u))^2 \\
			&= 2 L_s \lambda_r(\hat X \hat X^\top),
		\end{aligned}
	\end{equation*}
	leading to
	\[
		C_2 \leq \frac{1}{2} L_s \lambda_r(\hat X \hat X^\top)
	\]
Now, if we choose $\Delta = \vecc(U)^{\otimes l}$ for the aforementioned $U \in \RR^{n \times r}$, the LHS of \eqref{eq:socp_lifted_highr} can be expressed as:
	\begin{equation}\label{eq:thm_socp_highr_help1}
		\begin{aligned}
			&2 ( \langle \mb{A}, \hat X \hat X^\top \rangle_{2,3}^\top \langle \mb{A}, uu^\top \rangle_{2,3})^{l} - 2 \left( (\langle \mb{A}, M^* \rangle_{2,3} +\tilde w)^\top \langle \mb{A}, uu^\top \rangle_{2,3} \right)^{l} + 4 ( \|\langle \mb{A}, \hat X U^\top \rangle_{2,3}\|^2_2)^{l} \\
			\leq  &2 (\lambda_{\text{min}}(\nabla f(\hat X \hat X^\top)))^l + 4 C_2^l \\
			=  &2 C_1^l + 4 C_2^l
		\end{aligned}
	\end{equation}
	where the inequality follows from:
	\[
		a^n - b^n \leq (a-b)^n, \quad \forall b \geq a \geq 0
	\]
	Here, since $a-b = C_1 \leq 0$, the above inequality can be used. As a result,
	\[
		\text{LHS of \eqref{eq:socp_lifted_highr}} \leq \underbrace{-2G^l}_{\text{Part 1}} + \underbrace{\frac{2}{2^{l-1}} L^l_s \lambda_r(\hat X \hat X^\top)^l}_{\text{Part 2}}
	\]
	We know since $G \geq 0$, Part 1 is always negative assuming $l$ is odd, and Part 2 is always positive. Therefore, it suffices to find an order $l$ such that 
	\begin{equation}\label{eq:l_condition_highr}
		G^l > (1/2^{l-1}) L^l_s \lambda_r(\hat X \hat X^\top)^l
	\end{equation}
	Conveniently, \eqref{eq:G_bound} says that 
	\begin{equation}\label{eq:G_lower_highr}
		G \geq \frac{\alpha_s \|M^* - \hat X \hat X^\top \|^2_F - \|\tilde w\|^2_2}{2 \tr(M^*)},
	\end{equation}
	which can be used to derive sufficient condition for \eqref{eq:l_condition_highr}. Therefore, if
	\[
		\left( \frac{\alpha_s \|M^* - \hat X \hat X^\top \|^2_F - \|\tilde w\|^2_2}{2 \tr(M^*)} \right)^l > (1/2^{l-1}) L^l_s \lambda_r(\hat X \hat X^\top)^l,
	\]
	we can conclude that \eqref{eq:l_condition_highr} holds, which implies that the LHS of \eqref{eq:socp_lifted_highr} is negative, directly proving that $\hat X^{\otimes l}$ is not an SOP anymore. Elementary manipulations of the above equation give that a sufficient condition is 
	\begin{equation}\label{eq:l_condition2}
		\|M^* - \hat X \hat X^\top \|^2_F - \|\tilde w\|^2_2/\alpha_s > 2^{1/l} \frac{L_s}{\alpha_s} \lambda_r(\hat X \hat X^\top) \tr(M^*)
	\end{equation}
	We now consider \eqref{eq:thm_socp_distance_lower_highr}, which means that
	\begin{equation}\label{eq:xhat_upper_D_highr}
		\lambda_r(\hat X \hat X^\top)  \leq \frac{\alpha_s \|M^* - \hat X \hat X^\top \|^2_F -\|\tilde w\|^2_2}{L_s \tr(M^*)}
	\end{equation}
	Subsequently, define a constant $\gamma$ such that
	\[
		L_s \lambda_r (\hat X \hat X^\top) = \gamma \left[\frac{\alpha_s \|M^* - \hat X \hat X^\top \|^2_F -\|\tilde w\|^2_2}{2 \tr(M^*)} \right]
	\]
	Then, \eqref{eq:G_lower_highr} and \eqref{eq:xhat_upper_D_highr} together imply that $1 \leq \gamma <2$. Using this simplified notation, our sufficient condition \eqref{eq:l_condition2} becomes
	\begin{equation}\label{eq:l_condition3_highr}
		1 > \frac{\gamma}{2^{(l-1)/l}}
	\end{equation}
	Given $1 \leq \gamma < 2$, there always exists a large enough $l$ such that \eqref{eq:l_condition3_highr} holds, which proves that LHS of \eqref{eq:socp_lifted_highr} is negative, proving that $\vecc(\hat X)^{\otimes l}$ is a strict saddle, concluding the proof.
	
	To derive a sufficient $l$, we simply acknowledge
	\[
		\gamma = \frac{2 L_s \tr(M^*) \lambda_r(\hat X \hat X^\top)}{\alpha_s  \|M^* - \hat X \hat X^\top \|^2_F -\|\tilde w\|^2_2 } \coloneqq 2 \beta
	\]
	and that $\beta \leq 1$ due to assumption \eqref{eq:thm_socp_distance_lower_highr}. Therefore, for \eqref{eq:l_condition3_highr} to hold true, it is enough to have 
	\[
		2^{(l-1)/l} > 2 \beta \implies \frac{l-1}{l} > \log_2(2 \beta) \implies l > \frac{1}{1-\log_2(2 \beta)} 
	\]
\end{proof}

\begin{proof}[Proof of Theorem~\ref{thm:implicit_bias_gd}]
	First of all, we hope to decompose the GD trajectory of \eqref{eq:lifted_main_highr} $\{\mb{w}_t\}_{t =0}^T$ as follows:
	\begin{equation}\label{eq:wt_breakdown}
		\mb{w}_{t+1} = \langle \mb{Z}_t, \mb{w}_0 \rangle - \mb{E}_t \coloneqq \mb{\tilde w}_t - \mb{E}_t
	\end{equation}
	where 
	\begin{align*}
		\mb{Z}_t &\coloneqq  (\mathcal{I}+ \eta \langle \mb{A}^{\otimes l}_r ,\tilde b^{\otimes l} \rangle)^t, \quad \mb{A}_r = I_r \oslash_{2,3} A \\
		\mb{E}_t &\coloneqq \sum_{i=1}^t (\mathcal{I}+ \eta \langle \mb{A}^{\otimes l}_r ,\tilde b^{\otimes l} \rangle)^{t-i} \mb{\hat E}_i \\
		 \mb{\hat E}_i &\coloneqq \eta \langle \langle (\mb{A}^l_r)^* \mb{A}^l, \langle \mb{P}(\mb{w}_{i-1}), \mb{P}(\mb{w}_{i-1})\rangle_{2*[l]} \rangle, \mb{w}_{i-1} \rangle_{2*[l]} \\
		 (\mb{A}^l_r)^* \mb{A}^l &\coloneqq \langle (\mb{A}_r)^{\otimes l}, \mb{A}^{\otimes l} \rangle_{3,6,\dots,3l} \in \RR^{[nr \times nr \times n \times n] \circ l}
	\end{align*}
	This can be proved via induction where
	\begin{align*}
		\mb{w}_1 &= \left(\mathcal{I}+\eta 
	\langle \mb{A}^{\otimes l}_r, \tilde b^{\otimes l} -  (\mb{A}^{\otimes l})^*\langle \mb{P}(\mb{w}_0), \mb{P}(\mb{w}_0)\rangle \right) \mb{w}_0 \\
		&= (\mathcal{I}+\eta \langle \mb{A}^{\otimes l}_r ,\tilde b^{\otimes l} \rangle) \mb{w}_0 - \eta \langle (\mb{A}^l_r)^* \mb{A}^l, \langle \mb{P}(\mb{w}_0), \mb{P}(\mb{w}_0)\rangle \rangle \mb{w}_0 \\
		&= \langle \mb{Z}_1, \mb{w}_0 \rangle - \mb{E}_1
	\end{align*}
	This serves as our base case, and the induction step can be proven as 
	\begin{align*}
		\mb{w}_{t+1} &= \left(\mathcal{I}+\eta 
	\langle \mb{A}^{\otimes l}_r, \tilde b^{\otimes l} -  (\mb{A}^{\otimes l})^*\langle \mb{P}(\mb{w}_t),\mb{P}(\mb{w}_t)\rangle \right) \mb{w}_t \\
		&= (\mathcal{I}+\eta \langle \mb{A}^{\otimes l}_r ,\tilde b^{\otimes l} \rangle) \mb{w}_t - \eta \langle (\mb{A}^l_r)^* \mb{A}^l, \langle \mb{P}(\mb{w}_t), \mb{P}(\mb{w}_t)\rangle \rangle \mb{w}_t  \\
		& = (\mathcal{I}+\eta \langle \mb{A}^{\otimes l}_r ,\tilde b^{\otimes l} \rangle) \mb{w}_t - \mb{\hat E}_{t+1} \\
		&= (\mathcal{I}+\eta \langle \mb{A}^{\otimes l}_r ,\tilde b^{\otimes l} \rangle) \left( \mb{\tilde w}_t - \sum_{i=1}^t (\mathcal{I}+ \eta \langle \mb{A}^{\otimes l}_r ,\tilde b^{\otimes l} \rangle)^{t-i} \mb{\hat E}_i \right) - \mb{\hat E}_{t+1} \\
		&= \mb{\tilde w}_{t+1} - \sum_{i=1}^t (\mathcal{I}+ \eta \langle \mb{A}^{\otimes l}_r ,\tilde b^{\otimes l} \rangle)^{t+1-i} \mb{\hat E}_i- \mb{\hat E}_{t+1} \\
		&= \mb{\tilde w}_{t+1} - \sum_{i=1}^{t+1} (\mathcal{I}+ \eta \langle \mb{A}^{\otimes l}_r ,\tilde b^{\otimes l} \rangle)^{t+1-i} \mb{\hat E}_i \\
		&= \mb{\tilde w}_{t+1} - \mb{E}_t
	\end{align*}
	Therefore, we can then use a version of Lemma 13 and Lemma 2 in \cite{ma2024algorithmic} with $U = \langle  \mb{A}_r^* \mb{A}, M^* \rangle$ replaced with $U = \langle \mb{A}_r, \tilde b \rangle_{1}$ to prove this theorem, following the steps in proof to Theorem 1 in \cite{ma2024algorithmic}.
\end{proof}

\section{Spatial distribution of reconstruction error}
\label{app:error-spatial}
Global metrics such as the Frobenius error
$\|M^{\text{sol}} - M^\star\|_F$ summarize the quality of reconstruction in a single
number, but they do not reveal how errors are distributed across the matrix. In such cases, it
is natural to ask whether failure manifests as a few ``bad'' rows or columns,
or whether the algorithm fits the observed entries well but extrapolates
incorrectly on a subset of unobserved entries.

To investigate this question, we analyze the entrywise squared error
$
E_{ij}^2 = (M^{\text{sol}}_{ij} - M^\star_{ij})^2
$
and its spatial distribution over the matrix. We consider the structured instance of problem~\eqref{eq:structured_prob} and use our method described in Theorem~\ref{thm:main_result} to obtain reconstructed solutions $M^{\text{sol}}$. We performed numerical studies on $E$, and have made some interesting observations. A striking feature of all unsuccessful runs is that the error on observed entries is essentially zero. The optimization
procedure consistently recovers the observed data very accurately. The
non-negligible reconstruction error is confined to unobserved entries, indicating
that the algorithm converges to stationary points that are locally optimal with
respect to the observed data, rather than diverging or producing a globally
incoherent factorization.

Table~\ref{tab:mse-obs-unobs} showcases this phenomenon quantitatively.
Across 10 independent runs, we report the mean-squared error on observed and
unobserved entries, denoted $\text{MSE}_{\text{obs}}$ and
$\text{MSE}_{\text{unobs}}$, respectively. In 10 out of 10 runs, $\text{MSE}_{\text{obs}}$ is on the order of $10^{-8}$ or even lower, while $\text{MSE}_{\text{unobs}}$ is solely responsible in determining whether the reconstruction was successful or not. This shows a gap of roughly eight orders of magnitude between the observed and unobserved regions. To take a closer look at the results, we selected 4 runs out of the 10 and plotted their entrywise reconstruction errors using heat-maps in Figure~\ref{fig:structured-error-maps}. We overlayed the observation mask~$\Omega$ with white squares indicating the observed entries, while the background color encodes the squared reconstruction error.

From Figure~\ref{fig:structured-error-maps} we observe that the error patterns on the unobserved entries differ substantially
across the three unsuccessful runs. In each case, different subsets of the
unobserved region carry the dominant error, and there is no single block or
band that is systematically problematic across runs. This behavior is consistent
with the presence of many spurious local minima in
problem~\eqref{eq:structured_prob}: different random initializations lead
gradient descent to distinct low-rank completions that interpolate the observed
entries but disagree on how to fill in the missing part. In contrast, the
successful run exhibits uniformly small squared error on both observed and
unobserved entries, corresponding to convergence to the global minimum. Taken
together, these experiments suggest that unsuccessful recoveries are primarily due to convergence to benign local minima that fit the data but differ in their extrapolation to unobserved entries, rather than to systematic failure in any specific region of the matrix.

\begin{table}[H]
    \centering
    \setlength{\tabcolsep}{2pt}
    \begin{tabular}{l*{10}{c}}
        \toprule
        & s0 & s1 & s2 & s3 & s4 & s5 & s6 & s7 & s8 & s9 \\
        \midrule
        $\mathrm{MSE}_{\text{obs}}$   &
        3.26e-8 & 4.89e-8 & 9.46e-9 & 9.79e-9 & 1.35e-8 &
        1.67e-8 & 1.07e-8 & 9.40e-9 & 1.41e-8 & 1.50e-8 \\
        $\mathrm{MSE}_{\text{unobs}}$ &
        2.67e0 & 4.95e-8 & 2.67e0 & 2.67e0 & 8.23e-8 &
        8.24e-8 & 2.67e0 & 2.67e0 & 2.67e0 & 2.67e0 \\
        \bottomrule
    \end{tabular}
    \caption{Mean-squared error on observed and unobserved entries for the
    structured instance in problem~\eqref{eq:structured_prob} with
    $n = 8$, $\ell = 3$, $\epsilon = 10^{-4}$, learning rate $0.02$, and
    initialization scale $10^{-4}$. Columns $s0$--$s9$ correspond to
    independent runs (\texttt{sol\_0}--\texttt{sol\_9}). 
    In most runs, $\mathrm{MSE}_{\text{obs}}$ is near machine precision while
    $\mathrm{MSE}_{\text{unobs}}$ is either of order $1$ (unsuccessful
    completions) or of order $10^{-8}$ (successful completions), highlighting
    that unsuccessful runs fit the observed entries but differ in their
    extrapolation to unobserved entries.}
    \label{tab:mse-obs-unobs}
\end{table}

\begin{figure}[H]
    \centering
    \subfigure[Run 1 (unsuccessful recovery)]{\label{fig:errmap-1}\includegraphics[width=0.48\linewidth]{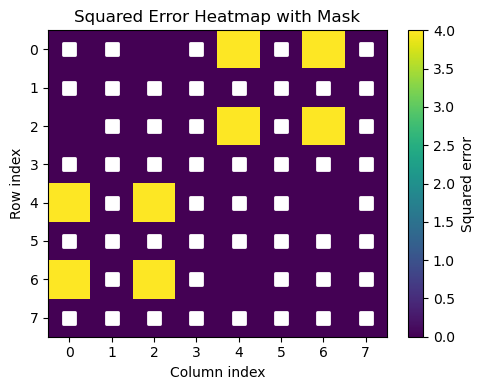}}\hfill
    \subfigure[Run 2 (unsuccessful recovery)]{\label{fig:errmap-2}\includegraphics[width=0.48\linewidth]{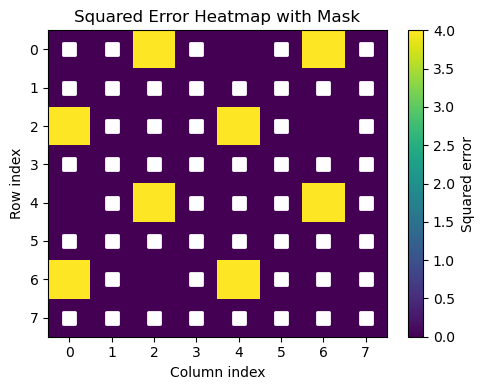}}

    \vspace{0.4em}

    \subfigure[Run 3 (unsuccessful recovery)]{\label{fig:errmap-3}\includegraphics[width=0.48\linewidth]{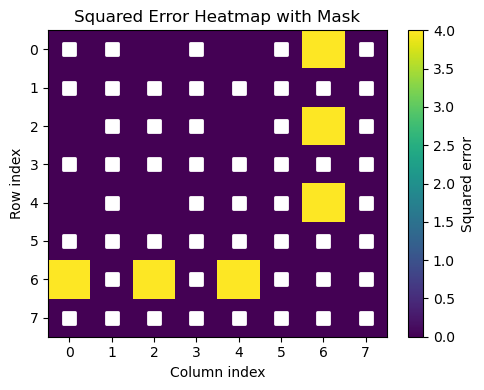}}\hfill
    \subfigure[Run 4 (successful recovery)]{\label{fig:errmap-4}\includegraphics[width=0.48\linewidth]{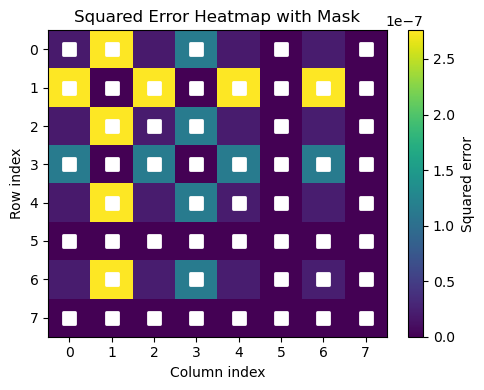}}

    \caption{Spatial distribution of squared reconstruction error for the
    structured instance in problem~\eqref{eq:structured_prob} with
    $n = 8$, $\ell = 3$, $\epsilon = 10^{-4}$, learning rate $0.02$, and
    initialization scale $10^{-4}$.
    The background colormap shows $(M^{\text{sol}}_{ij} - M^\star_{ij})^2$, and white
    squares denote observed entries $(i,j)\in\Omega$.
    In the three runs with higher global error (a)--(c), the reconstruction is
    essentially exact on observed entries and the error is concentrated on
    unobserved entries, with different spatial patterns across runs.
    In the successful run (d), the error is small on both observed and
    unobserved entries.}
    \label{fig:structured-error-maps}
\end{figure}


\newpage
\input{algorithmic_guarantees.tex}

\input{rebuttal_experiments.tex}

\end{document}

%% file: algorithmic_guarantees.tex
\section{Conditional algorithmic guarantees}\label{app:algorithmic_guarantees}

We make explicit how the pointwise escape result can support a perturbed-gradient-descent (pGD) argument, and the additional conditions needed. Throughout this section, $h^l$ has the unhalved normalization in \eqref{eq:lifted_main_highr}. For $\epsilon$-MC, $\alpha_s=\epsilon^2$, $L_s=1$, and $\tilde w=-\epsilon\A_{\Omc}\Mstar$; only its squared norm enters the following bounds.

\paragraph{Escape curvature.}
For a second-order point $\hat X$ covered by Theorem~\ref{thm:socp_lifted_noisy}, set
\[
 E:=\fro{\Mstar-\hat X\hat X^\top},\qquad
 s:=\lambda_r(\hat X\hat X^\top),\qquad
 a:=\frac{\alpha_s E^2-\|\tilde w\|_2^2}{2\tr(\Mstar)}.
\]
Assume $\tr(\Mstar)>0$, $a>0$, and odd $l$ with $\beta<2^{-1/l}$, where $\beta$ is defined in \eqref{eq:thm_socp_lcond_highr}; thus $L_s s=2a\beta$. The proof of Theorem~\ref{thm:socp_lifted_noisy} chooses unit vectors $u,q$ so that $U=uq^\top$ and $\Delta=\vecc(U)^{\otimes l}$ have unit Frobenius norm, $C_1=-G$ with $G\geq a$, and $C_2\leq L_s s/2$. Its half-squared-residual calculation gives
\begin{equation}\label{eq:algorithmic_curvature}
 2C_1^l+4C_2^l
 \leq-2G^l+2^{2-l}(L_s s)^l
 \leq-2a^l(1-2\beta^l)<0.
\end{equation}
Passing to the unhalved loss doubles the Hessian, so the negative bound in \eqref{eq:algorithmic_curvature} remains a conservative upper bound for $\lambda_{\min}(\nabla^2h^l(\mb{\hat w}))$ at $\mb{\hat w}=\vecc(\hat X)^{\otimes l}$.

\paragraph{From pointwise curvature to a region.}
Fix an ambient ball $K_R:=\{\mb w:\fro{\mb w}\leq R\}$. Since $h^l$ is polynomial, its gradient and Hessian have finite Lipschitz constants $L_h$ and $\rho_h$ on $K_R$. Let $\mathcal C_R\subseteq K_R$ contain the covered lifts above with uniform margins $a\geq\underline a>0$ and $\beta\leq\overline\beta<2^{-1/l}$, and set
\[
 \lambda:=2\underline a^l(1-2\overline\beta^l)>0.
\]
At any $\mb w\in K_R$ within Frobenius distance $\tau\leq\lambda/(2\rho_h)$ of $\mb w_1\in\mathcal C_R$, Weyl's inequality gives
\begin{equation}\label{eq:algorithmic_tube}
 \lambda_{\min}(\nabla^2h^l(\mb w))
 \leq-\lambda+\rho_h\fro{\mb w-\mb w_1}
 \leq-\lambda/2.
\end{equation}
Let $\mathcal P$ be the set of rank-one lifts where \eqref{eq:thm_socp_distance_lower_highr} fails. This protected set is not identified with the global minimizers. To obtain strict-saddle alternatives on a region $\mathcal D_R\subseteq K_R$, one additionally needs the following coverage condition: every $\mb w\in\mathcal D_R$ with $\fro{\nabla h^l(\mb w)}<\theta$ and $\operatorname{dist}(\mb w,\mathcal P)>\zeta$ lies in the tube in \eqref{eq:algorithmic_tube}. Theorem~\ref{thm:socp_lifted_noisy} establishes curvature at covered lifts of second-order points; it does not establish this condition at arbitrary small-gradient tensors.

\begin{corollary}[Conditional pGD conclusion]\label{cor:conditional_pgd}
Assume this coverage condition holds. Run the ambient-space pGD algorithm of \citet{jin2017escape}, with its prescribed perturbation distribution, step size, trigger, and waiting schedule. Assume its trajectory remains in $\mathcal D_R$ and all perturbed points and local segments required by its escape analysis remain in $K_R$. For an admissible tolerance
\[
 0<\eta_g<\min\{\theta,\lambda^2/(4\rho_h)\},
\]
with probability at least $1-\delta$, pGD reaches a point within distance $\zeta$ of $\mathcal P$ in $\widetilde{\mathcal O}(L_h\Delta_h/\eta_g^2)$ iterations, where $\Delta_h:=h^l(\mb w_0)-\inf_{\mb w}h^l(\mb w)$. The suppressed factors are polylogarithmic in dimension, inverse failure probability, and the landscape and accuracy parameters.
\end{corollary}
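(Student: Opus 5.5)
The plan is to reduce Corollary~\ref{cor:conditional_pgd} to the standard strict-saddle theorem of \citet{jin2017escape}. That theorem is stated for a function that has an $L_h$-Lipschitz gradient and a $\rho_h$-Lipschitz Hessian. With probability $1-\delta$, perturbed GD then reaches an $\eta_g$-second-order stationary point, meaning $\fro{\nabla h^l(\mb w)}\le\eta_g$ and $\lambda_{\min}(\nabla^2 h^l(\mb w))\ge-\sqrt{\rho_h\eta_g}$, within $\widetilde{\mathcal O}(L_h\Delta_h/\eta_g^2)$ iterations.

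Since $h^l$ is only polynomial, these Lipschitz constants hold only on $K_R$. The first step is therefore to check that the proof of Jin et al.\ uses smoothness only along the iterates, the perturbation balls, and the short ``escape'' segments it compares. By hypothesis, all of these lie in $K_R$, so every smoothness inequality invoked there holds with $L_h,\rho_h$. The descent-per-step bound and the improve-or-localize argument used in the escaping phase both go through verbatim. I will state this as a localized version of their theorem rather than re-prove it.

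The second step is to convert the $\eta_g$-SOSP conclusion into proximity to $\mathcal P$, arguing by contradiction. Let $\mb w\in\mathcal D_R$ be the point returned, and suppose $\operatorname{dist}(\mb w,\mathcal P)>\zeta$. Since $\eta_g<\theta$, we have $\fro{\nabla h^l(\mb w)}<\theta$. The coverage condition then places $\mb w$ within distance $\tau\le\lambda/(2\rho_h)$ of some $\mb w_1\in\mathcal C_R$. By \eqref{eq:algorithmic_tube}, $\lambda_{\min}(\nabla^2h^l(\mb w))\le-\lambda/2$, where \eqref{eq:algorithmic_curvature} and the Hessian doubling guarantee that $\lambda>0$ is a valid curvature bound at covered lifts. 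On the other hand, the SOSP property gives $\lambda_{\min}\ge-\sqrt{\rho_h\eta_g}$. The tolerance condition $\eta_g<\lambda^2/(4\rho_h)$ yields $\sqrt{\rho_h\eta_g}<\lambda/2$, which is a contradiction. Hence the returned point lies within $\zeta$ of $\mathcal P$.

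The third step is bookkeeping. $\Delta_h$ is finite because $h^l\ge0$. The polylogarithmic factors are inherited from Jin et al., namely the dimension $(nr)^l$, $1/\delta$, $L_h$, $\rho_h$, $\Delta_h$, and $1/\eta_g$.

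The main obstacle is the first step: justifying the localization of Jin et al.'s global smoothness assumptions to $K_R$. I would handle it by noting that their analysis applies Taylor bounds only between pairs of points on segments joining iterates, perturbed iterates, and coupled sequences. The corollary's containment hypothesis is formulated exactly to cover these. The remaining steps are a direct inequality chase.
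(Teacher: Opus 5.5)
Your proposal is correct and follows the paper's proof. The localized pGD guarantee of \citet{jin2017escape} gives $\fro{\nabla h^l(\mb w)}\le\eta_g<\theta$ and $\lambda_{\min}(\nabla^2h^l(\mb w))\ge-\sqrt{\rho_h\eta_g}>-\lambda/2$, which excludes the tube bound \eqref{eq:algorithmic_tube}, so coverage forces $\operatorname{dist}(\mb w,\mathcal P)\le\zeta$. Your discussion of replacing Jin et al.'s global smoothness by smoothness on $K_R$ under the containment hypothesis is what the paper compresses into ``under the stated smoothness and containment conditions.''
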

\begin{proof}
Under the stated smoothness and containment conditions, the pGD guarantee gives $\fro{\nabla h^l(\mb w)}\leq\eta_g$ and $\lambda_{\min}(\nabla^2h^l(\mb w))\geq-\sqrt{\rho_h\eta_g}>-\lambda/2$. The strict tolerance excludes both the large-gradient alternative and \eqref{eq:algorithmic_tube}; coverage therefore gives the claimed distance bound.
\end{proof}

Theorem~\ref{thm:implicit_bias_gd} concerns plain GD under small-initialization and finite-horizon conditions. Its spectral $\kappa$-rank-one property does not itself establish the Frobenius tube bound above or preserve that tube under random pGD perturbations. Similarly, the additional $\mathcal O(\sqrt r\,\kappa^{1/(2l)})$ recovery term in Theorem~\ref{thm:main_ext} applies under that theorem's separate hypotheses. Coverage and containment for the actual perturbed trajectory remain additional requirements; replacing ambient perturbations by rank-one-aware kicks requires a separate analysis. Consequently, Corollary~\ref{cor:conditional_pgd} supplies a conditional iteration statement, rather than a complete runtime and recovery theorem for GD followed by Tensor PCA and factor extraction.

%% file: rebuttal_experiments.tex

\section{Direct optimization of the surrogate}\label{app:direct_controls}
We separate the effect of perturbation from lifting by pairing ordinary MC
($\epsilon=0$) and $\epsilon$-MC ($\epsilon=5\times10^{-5}$) within each
solver, using the same problems, masks, and initializations.
The smaller controls use rank-one $n=8$ instances, Bernoulli sampling
with $p=0.15$ or the structured family in \eqref{eq:structured_prob}, and
$100$ runs per solver/family. Success means that the returned estimate has
Frobenius error at most $0.05$ relative to $M^*$.
For reference GNMR (Algorithm~3 of \citet{zilber2022gnmr}, $\alpha=-1$),
we evaluate its returned $UV^\top$ estimate.

Only $0.4\%$ of all paired binary success outcomes change.
The rank-five Bernoulli results at $n=200$ in Table~\ref{tab:direct_large}
use $10$ matched runs per cell. Across these $120$ pairs, only two outcomes
change, both for Adam. Perturbation alone therefore does not systematically
resolve the optimization difficulty in these tests. These controls are
separate from the original lifted-method comparisons in Figure~\ref{fig:success}.

The deterministic constants at this perturbation are
$\alpha_s=\epsilon^2=2.5\times10^{-9}$ and $L_s=1$, giving
$L_s/\alpha_s=4\times10^8$ and normalized RIP constant
$(1-\epsilon^2)/(1+\epsilon^2)\approx0.999999995$.
Thus positive RSC does not place the problem in the small-RIP regime of
conventional matrix-sensing landscape guarantees. This is a conditioning
diagnostic, not a causal explanation of every observed failure.

\clearpage
\section{Simulated lifting for the structured setting}\label{app:sod}
The direct-solver controls illustrate a practical distinction between
surrogate accuracy and optimization: a small perturbation can preserve an
accurate target while GD still converges to a spurious stationary point.
Tensor lifting addresses this obstruction by exposing negative curvature
at covered lifts of bad factorized points, including points that admit no
local descent direction in the original factor space. Explicitly optimizing
the lifted variable, however, requires $\mathcal O((nr)^l)$ storage. This
cost limits the dimensions that can be studied with the direct tensor
implementation even when the escape mechanism itself is useful.

Simulated Oracle Direction (SOD)~\citep{shen2026simulated} uses the lifted
construction to derive escape candidates that can be computed in the
original factor space. The higher-order tensor serves as a mathematical
oracle for constructing these candidates rather than as an optimization
variable that must be stored. When GD stalls, SOD evaluates the candidates
using the original objective, accepts an objective-decreasing move, and
resumes GD from the accepted point. This permits a finite escape move even
when a local factor-space search has stalled. The procedure can be repeated
at subsequent stationary points; its derivation and algorithmic details
are given by \citet{shen2026simulated}.

This connection suggests a practical route for using the lifted landscape
without its dominant storage cost. We examine that route on the structured
family in \eqref{eq:structured_prob}, augmenting ordinary GD with SOD while
keeping the same factor variable throughout. The study reaches
$n=50,100,200$ without forming the lifted tensor and compares the resulting
trajectories with GD under matched update budgets.

Let $M^*=zz^\top$, where $z_i=1$ for odd $i$ and $0$ otherwise. To examine
SOD in its noiseless weighted-sensing setting, this study uses
\begin{equation}\label{eq:sod_pilot}
f_n(x)=\frac12\|\mathcal A_\Omega(xx^\top-M^*)\|_2^2
+\frac{\epsilon_n^2}{2}
\|\mathcal A_{\Omega^c}(xx^\top-\mathbf1\mathbf1^\top)\|_2^2,
\end{equation}
with the all-ones vector $\mathbf1\in\mathbb R^n$ and
$\epsilon_n=0.48/\sqrt{\lceil n/2\rceil-1}$.
The complement target is fixed at one independently of the missing data;
in this structured family it coincides with the missing ground-truth
entries. Thus \eqref{eq:sod_pilot} uses the same observation pattern but a
different complement target from the zero-filled surrogate
\eqref{eq:eps_mc}. The scale $0.48$ was selected from
$\{0.18,0.24,0.30,0.36,0.42,0.48\}$ in separate $100$-run screening at
dimensions $20,25,30,35,40,45,50$; the reported trials use fresh seeds.

For each dimension, GD and GD+SOD share $500$ Gaussian initializations
normalized to Euclidean norm $10^{-2}$. The step size is
$\eta_n=\min\{0.02,1/n\}$, with a base cap of
$3000\max\{1,n/50\}$ updates and stationarity tolerance $10^{-8}$.
After the GD prefix, SOD is eligible when the gradient norm is at most
$2\times10^{-5}$. We allow at most five accepted SOD events, each followed
by at most $1200\max\{1,n/50\}$ GD updates. Plain GD receives exactly the
same number of GD updates as its paired GD+SOD run, while SOD adds the
escape computations. Success means $\|xx^\top-M^*\|_F\le0.05$.

\begin{table}[ht]
\centering
\caption{SOD success rates over $500$ paired starts per dimension, with
matched GD-update budgets.}
\label{tab:sod}
\begin{tabular}{rrrr}
\toprule
$n$ & $\epsilon_n$ & GD (\%) & GD+SOD (\%)\\
\midrule
50 & 0.097980 & 13.2 & 51.4\\
100 & 0.068571 & 2.0 & 46.4\\
200 & 0.048242 & 0.0 & 52.2\\
\bottomrule
\end{tabular}
\end{table}

Table~\ref{tab:sod} shows substantial improvements at all three dimensions,
with GD+SOD retaining roughly $46\%$--$52\%$ success through $n=200$.
These favorable results illustrate the applicability of simulated lifting
to the structured setting and motivate implementations that use the
lifted escape mechanism without explicit tensor storage.

%% file: neurips_2026.bbl
\begin{thebibliography}{}

\bibitem[Candes and Plan, 2010]{candes2010tight}
Candes, E.~J. and Plan, Y. (2010).
\newblock Tight oracle bounds for low-rank matrix recovery from a minimal
  number of random measurements.
\newblock {\em arXiv preprint arXiv:1001.0339}.

\bibitem[Cand{\`e}s and Recht, 2009]{candes2009exact}
Cand{\`e}s, E.~J. and Recht, B. (2009).
\newblock Exact matrix completion via convex optimization.
\newblock {\em Foundations of Computational Mathematics}, 9(6):717--772.

\bibitem[Cand{\`e}s and Tao, 2010]{candes2010power}
Cand{\`e}s, E.~J. and Tao, T. (2010).
\newblock The power of convex relaxation: Near-optimal matrix completion.
\newblock {\em IEEE Transactions on Information Theory}, 56(5):2053--2080.

\bibitem[Chen et~al., 2021]{chen2021spectral}
Chen, Y., Chi, Y., Fan, J., Ma, C., et~al. (2021).
\newblock Spectral methods for data science: A statistical perspective.
\newblock {\em Foundations and Trends{\textregistered} in Machine Learning},
  14(5):566--806.

\bibitem[Chen et~al., 2020]{chen2020noisy}
Chen, Y., Chi, Y., Fan, J., Ma, C., and Yan, Y. (2020).
\newblock Noisy matrix completion: Understanding statistical guarantees for
  convex relaxation via nonconvex optimization.
\newblock {\em SIAM journal on optimization}, 30(4):3098--3121.

\bibitem[Comon et~al., 2008]{comon2008symmetric}
Comon, P., Golub, G., Lim, L.-H., and Mourrain, B. (2008).
\newblock Symmetric tensors and symmetric tensor rank.
\newblock {\em SIAM Journal on Matrix Analysis and Applications},
  30(3):1254--1279.

\bibitem[Dai and Milenkovic, 2010]{dai2010set}
Dai, W. and Milenkovic, O. (2010).
\newblock Set: An algorithm for consistent matrix completion.
\newblock In {\em 2010 IEEE International Conference on Acoustics, Speech and
  Signal Processing}, pages 3646--3649. IEEE.

\bibitem[Ding and Chen, 2020]{ding2020leave}
Ding, L. and Chen, Y. (2020).
\newblock Leave-one-out approach for matrix completion: Primal and dual
  analysis.
\newblock {\em IEEE Transactions on Information Theory}, 66(11):7274--7301.

\bibitem[Du et~al., 2017]{du2017gradient}
Du, S.~S., Jin, C., Lee, J.~D., Jordan, M.~I., Singh, A., and Poczos, B.
  (2017).
\newblock Gradient descent can take exponential time to escape saddle points.
\newblock {\em Advances in neural information processing systems}, 30.

\bibitem[Fattahi and Sojoudi, 2020]{fattahi2020exact}
Fattahi, S. and Sojoudi, S. (2020).
\newblock Exact guarantees on the absence of spurious local minima for
  non-negative rank-1 robust principal component analysis.
\newblock {\em Journal of Machine Learning Research}, 21:1--51.

\bibitem[Ge et~al., 2017]{ge2017no}
Ge, R., Jin, C., and Zheng, Y. (2017).
\newblock No spurious local minima in nonconvex low rank problems: A unified
  geometric analysis.
\newblock In {\em Proceedings of the 34th International Conference on Machine
  Learning}, volume~70 of {\em Proceedings of Machine Learning Research}, pages
  1233--1242.

\bibitem[Ge et~al., 2016]{ge2016matrix}
Ge, R., Lee, J.~D., and Ma, T. (2016).
\newblock Matrix completion has no spurious local minimum.
\newblock {\em Advances in neural information processing systems}, 29.

\bibitem[Ghassemi et~al., 2018]{ghassemi2018global}
Ghassemi, M., Sarwate, A., and Goela, N. (2018).
\newblock Global optimality in inductive matrix completion.
\newblock In {\em 2018 IEEE International Conference on Acoustics, Speech and
  Signal Processing (ICASSP)}, pages 2226--2230. IEEE.

\bibitem[Gross, 2011]{gross2011recovering}
Gross, D. (2011).
\newblock Recovering low-rank matrices from few coefficients in any basis.
\newblock {\em IEEE Transactions on Information Theory}, 57(3):1548--1566.

\bibitem[Gu et~al., 2014]{gu2014weighted}
Gu, S., Zhang, L., Zuo, W., and Feng, X. (2014).
\newblock Weighted nuclear norm minimization with application to image
  denoising.
\newblock In {\em Proceedings of the IEEE conference on computer vision and
  pattern recognition}, pages 2862--2869.

\bibitem[Ha et~al., 2020]{ha2020equivalence}
Ha, W., Liu, H., and Barber, R.~F. (2020).
\newblock An equivalence between critical points for rank constraints versus
  low-rank factorizations.
\newblock {\em SIAM Journal on Optimization}, 30(4):2927--2955.

\bibitem[Haldar and Hernando, 2009]{haldar2009rank}
Haldar, J.~P. and Hernando, D. (2009).
\newblock Rank-constrained solutions to linear matrix equations using
  powerfactorization.
\newblock {\em IEEE Signal Processing Letters}, 16(7):584--587.

\bibitem[Jain and Kar, 2017]{jain2017nonconvex}
Jain, P. and Kar, P. (2017).
\newblock Non-convex optimization for machine learning.
\newblock {\em arXiv preprint arXiv:1712.07897}.

\bibitem[Jin et~al., 2017]{jin2017escape}
Jin, C., Ge, R., Netrapalli, P., Kakade, S.~M., and Jordan, M.~I. (2017).
\newblock How to escape saddle points efficiently.
\newblock In {\em International conference on machine learning}, pages
  1724--1732. PMLR.

\bibitem[Jin et~al., 2019]{jin2019towards}
Jin, M., Molybog, I., Mohammadi-Ghazi, R., and Lavaei, J. (2019).
\newblock Towards robust and scalable power system state estimation.
\newblock In {\em 2019 IEEE 58th Conference on Decision and Control (CDC)},
  pages 3245--3252. IEEE.

\bibitem[Keshavan et~al., 2010]{keshavan2010matrix}
Keshavan, R.~H., Montanari, A., and Oh, S. (2010).
\newblock Matrix completion from a few entries.
\newblock {\em IEEE transactions on information theory}, 56(6):2980--2998.

\bibitem[Klopp, 2015]{klopp2015matrix}
Klopp, O. (2015).
\newblock Matrix completion by singular value thresholding: sharp bounds.
\newblock {\em Electronic Journal of Statistics}, 9(2):2348--2369.

\bibitem[Kolda, 2015]{kolda2015numerical}
Kolda, T.~G. (2015).
\newblock Numerical optimization for symmetric tensor decomposition.
\newblock {\em Mathematical Programming}, 151(1):225--248.

\bibitem[Koren et~al., 2009]{koren2009matrix}
Koren, Y., Bell, R., and Volinsky, C. (2009).
\newblock Matrix factorization techniques for recommender systems.
\newblock {\em Computer}, 42(8):30--37.

\bibitem[Lasserre, 2001]{lasserre2001global}
Lasserre, J.~B. (2001).
\newblock Global optimization with polynomials and the problem of moments.
\newblock {\em SIAM Journal on optimization}, 11(3):796--817.

\bibitem[Lee and Bresler, 2010]{lee2010admira}
Lee, K. and Bresler, Y. (2010).
\newblock Admira: Atomic decomposition for minimum rank approximation.
\newblock {\em IEEE Transactions on Information Theory}, 56(9):4402--4416.

\bibitem[Li et~al., 2019]{li2019non}
Li, Q., Zhu, Z., and Tang, G. (2019).
\newblock The non-convex geometry of low-rank matrix optimization.
\newblock {\em Information and Inference: A Journal of the IMA}, 8(1):51--96.

\bibitem[Li et~al., 2016]{LLR16}
Li, Y., Liang, Y., and Risteski, A. (2016).
\newblock Recovery guarantee of weighted low-rank approximation via alternating
  minimization.
\newblock In {\em Proceedings of the 33rd International Conference on Machine
  Learning}, volume~48 of {\em Proceedings of Machine Learning Research}, pages
  2358--2367. PMLR.

\bibitem[Li et~al., 2018]{li2018algorithmic}
Li, Y., Ma, T., and Zhang, H. (2018).
\newblock Algorithmic regularization in over-parameterized matrix sensing and
  neural networks with quadratic activations.
\newblock In {\em Conference On Learning Theory}, pages 2--47. PMLR.

\bibitem[Lustig et~al., 2008]{lustig2008compressed}
Lustig, M., Donoho, D.~L., Santos, J.~M., and Pauly, J.~M. (2008).
\newblock Compressed sensing mri.
\newblock {\em IEEE signal processing magazine}, 25(2):72--82.

\bibitem[Ma and Fattahi, 2022]{ma2022global}
Ma, J. and Fattahi, S. (2022).
\newblock Global convergence of sub-gradient method for robust matrix recovery:
  Small initialization, noisy measurements, and over-parameterization.
\newblock {\em arXiv preprint arXiv:2202.08788}.

\bibitem[Ma et~al., 2024]{ma2024algorithmic}
Ma, Z., Lavaei, J., and Sojoudi, S. (2024).
\newblock Algorithmic regularization in tensor optimization: Towards a lifted
  approach in matrix sensing.
\newblock {\em Advances in Neural Information Processing Systems}, 36.

\bibitem[Ma et~al., 2023]{ma2023over}
Ma, Z., Molybog, I., Lavaei, J., and Sojoudi, S. (2023).
\newblock Over-parametrization via lifting for low-rank matrix sensing:
  Conversion of spurious solutions to strict saddle points.
\newblock In {\em International Conference on Machine Learning}. PMLR.

\bibitem[Mishra et~al., 2014]{mishra2014fixed}
Mishra, B., Meyer, G., Bonnabel, S., and Sepulchre, R. (2014).
\newblock Fixed-rank matrix factorizations and riemannian low-rank
  optimization.
\newblock {\em Computational Statistics}, 29:591--621.

\bibitem[Mohan and Fazel, 2012]{mohan2012iterative}
Mohan, K. and Fazel, M. (2012).
\newblock Iterative reweighted algorithms for matrix rank minimization.
\newblock {\em The Journal of Machine Learning Research}, 13(1):3441--3473.

\bibitem[Molybog et~al., 2020]{molybog2020conic}
Molybog, I., Madani, R., and Lavaei, J. (2020).
\newblock Conic optimization for quadratic regression under sparse noise.
\newblock {\em The Journal of Machine Learning Research}, 21(1):7994--8029.

\bibitem[Nguyen et~al., 2019]{nguyen2019low}
Nguyen, L.~T., Kim, J., and Shim, B. (2019).
\newblock Low-rank matrix completion: A contemporary survey.
\newblock {\em IEEE Access}, 7:94215--94237.

\bibitem[Parrilo, 2003]{parrilo2003semidefinite}
Parrilo, P.~A. (2003).
\newblock Semidefinite programming relaxations for semialgebraic problems.
\newblock {\em Mathematical programming}, 96(2):293--320.

\bibitem[Petersen et~al., 2008]{petersen2008matrix}
Petersen, K.~B., Pedersen, M.~S., et~al. (2008).
\newblock The matrix cookbook.
\newblock {\em Technical University of Denmark}, 7(15):510.

\bibitem[Pimentel-Alarc{\'o}n et~al., 2016]{pimentel2016characterization}
Pimentel-Alarc{\'o}n, D.~L., Boston, N., and Nowak, R.~D. (2016).
\newblock A characterization of deterministic sampling patterns for low-rank
  matrix completion.
\newblock {\em IEEE Journal of Selected Topics in Signal Processing},
  10(4):623--636.

\bibitem[Radhakrishnan et~al., 2025]{radhakrishnan2025linear}
Radhakrishnan, A., Belkin, M., and Drusvyatskiy, D. (2025).
\newblock Linear recursive feature machines provably recover low-rank matrices.
\newblock {\em Proceedings of the National Academy of Sciences},
  122(13):e2411325122.

\bibitem[Razenshteyn et~al., 2016]{RSW16}
Razenshteyn, I., Song, Z., and Woodruff, D.~P. (2016).
\newblock Weighted low rank approximations with provable guarantees.
\newblock In {\em Proceedings of the 48th Annual ACM Symposium on Theory of
  Computing}, pages 250--263. ACM.

\bibitem[Recht, 2011]{recht2011simpler}
Recht, B. (2011).
\newblock A simpler approach to matrix completion.
\newblock {\em Journal of Machine Learning Research}, 12(12).

\bibitem[Recht et~al., 2010]{recht2010guaranteed}
Recht, B., Fazel, M., and Parrilo, P.~A. (2010).
\newblock Guaranteed minimum-rank solutions of linear matrix equations via
  nuclear norm minimization.
\newblock {\em SIAM Review}, 52(3):471--501.

\bibitem[Shen et~al., 2026]{shen2026simulated}
Shen, T., Yang, J., He, J., Gao, K., Zheng, Z., and Ma, Z. (2026).
\newblock Escaping local minima provably in non-convex matrix sensing: A
  deterministic framework via simulated lifting.
\newblock {\em arXiv preprint arXiv:2602.05887}.

\bibitem[Tanner and Wei, 2016]{tanner2016low}
Tanner, J. and Wei, K. (2016).
\newblock Low rank matrix completion by alternating steepest descent methods.
\newblock {\em Applied and Computational Harmonic Analysis}, 40(2):417--429.

\bibitem[Wang et~al., 2014]{wang2014rank}
Wang, Z., Lai, M.-J., Lu, Z., Fan, W., Davulcu, H., and Ye, J. (2014).
\newblock Rank-one matrix pursuit for matrix completion.
\newblock In {\em International Conference on Machine Learning}, pages 91--99.
  PMLR.

\bibitem[Wen et~al., 2012]{wen2012solving}
Wen, Z., Yin, W., and Zhang, Y. (2012).
\newblock Solving a low-rank factorization model for matrix completion by a
  nonlinear successive over-relaxation algorithm.
\newblock {\em Mathematical Programming Computation}, 4(4):333--361.

\bibitem[Yalcin et~al., 2023]{yalcin2022semidefinite}
Yalcin, B., Ma, Z., Lavaei, J., and Sojoudi, S. (2023).
\newblock Semidefinite programming versus burer-monteiro factorization for
  matrix sensing.
\newblock In {\em Proceedings of the AAAI Conference on Artificial
  Intelligence}.

\bibitem[Yal{\c{c}}{\i}n et~al., 2022]{yalccin2022factorization}
Yal{\c{c}}{\i}n, B., Zhang, H., Lavaei, J., and Sojoudi, S. (2022).
\newblock Factorization approach for low-complexity matrix completion problems:
  Exponential number of spurious solutions and failure of gradient methods.
\newblock In {\em International Conference on Artificial Intelligence and
  Statistics}, pages 319--341. PMLR.

\bibitem[Zhang et~al., 2021]{zhang2021general}
Zhang, H., Bi, Y., and Lavaei, J. (2021).
\newblock General low-rank matrix optimization: Geometric analysis and sharper
  bounds.
\newblock {\em Advances in Neural Information Processing Systems},
  34:27369--27380.

\bibitem[Zhang, 2021]{zhang2021sharp}
Zhang, R.~Y. (2021).
\newblock Sharp global guarantees for nonconvex low-rank matrix recovery in the
  overparameterized regime.
\newblock {\em arXiv preprint arXiv:2104.10790}.

\bibitem[Zhang, 2022]{zhang2022improved}
Zhang, R.~Y. (2022).
\newblock Improved global guarantees for the nonconvex burer--monteiro
  factorization via rank overparameterization.
\newblock {\em arXiv preprint arXiv:2207.01789}.

\bibitem[Zhang et~al., 2017]{zhang2017conic}
Zhang, Y., Madani, R., and Lavaei, J. (2017).
\newblock Conic relaxations for power system state estimation with line
  measurements.
\newblock {\em IEEE Transactions on Control of Network Systems},
  5(3):1193--1205.

\bibitem[Zilber and Nadler, 2022a]{zilber2022gnmr}
Zilber, P. and Nadler, B. (2022a).
\newblock Gnmr: A provable one-line algorithm for low rank matrix recovery.
\newblock {\em SIAM journal on mathematics of data science}, 4(2):909--934.

\bibitem[Zilber and Nadler, 2022b]{zilber2022inductive}
Zilber, P. and Nadler, B. (2022b).
\newblock Inductive matrix completion: No bad local minima and a fast
  algorithm.
\newblock In {\em International Conference on Machine Learning}, pages
  27671--27692. PMLR.

\end{thebibliography}
